\documentclass{article}

% if you need to pass options to natbib, use, e.g.:
%     \PassOptionsToPackage{numbers, compress}{natbib}
% before loading neurips_2023

% ready for submission
% \usepackage{neurips_2023}
\usepackage[preprint]{neurips_2023}

% to compile a preprint version, e.g., for submission to arXiv, add add the
% [preprint] option:
%     \usepackage[preprint]{neurips_2023}

% to compile a camera-ready version, add the [final] option, e.g.:
%     \usepackage[final]{neurips_2023}

% to avoid loading the natbib package, add option nonatbib:
%    \usepackage[nonatbib]{neurips_2023}

\usepackage[utf8]{inputenc} % allow utf-8 input
\usepackage[T1]{fontenc}    % use 8-bit T1 fonts
\usepackage{hyperref}       % hyperlinks
\usepackage{url}            % simple URL typesetting
\usepackage{booktabs}       % professional-quality tables
\usepackage{amsfonts}       % blackboard math symbols
\usepackage{nicefrac}       % compact symbols for 1/2, etc.
\usepackage{microtype}      % microtypography
\usepackage{xcolor}         % colors
\usepackage{xspace}
\usepackage{amsmath}
\usepackage{graphicx}
\usepackage{subfigure}
\usepackage{algorithmic}
\usepackage{algorithm}
\usepackage{enumitem}
\usepackage{multirow}
\usepackage{wrapfig}
\usepackage{tablefootnote}

\usepackage{placeins}

% Redefine the \appendix command to include \FloatBarrier
\let\Oldappendix\appendix
\renewcommand{\appendix}{
    \FloatBarrier
    \Oldappendix
}

%% macros for commenting
\usepackage{mathtools}
%\usepackage[normalem]{ulem} % to use \sout

%% math packages
% \usepackage{amsmath} % not needed for Springer
%\usepackage{amsmath}
% \usepackage{amssymb}
%\usepackage{mathrsfs}
\usepackage{amsthm}
\usepackage{color}
%\usepackage{subfigure}
%\usepackage{url}

%% a light-weight algorithm environment

%% highlighting and commenting

\newcommand{\cut}[1]{{}}
 % comments
 % comments
 % comments
 % comments
 % comments
 % comments
\newcommand{\xr}[1]{{\color{red}(#1 --- Xiaorui)}} % comments

 % comments

 % comments

 % comments
 % comments

%% macros for inner product

%% macros for GNNs
\newcommand{\tA}{{\tilde{\vA}}}

\newcommand{\tL}{{\tilde{\vL}}}

%% macros for letters

\newcommand{\vh}{{\mathbf{h}}}

\newcommand{\vx}{{\mathbf{x}}}
\newcommand{\vy}{{\mathbf{y}}}

\newcommand{\vA}{{\mathbf{A}}}
\newcommand{\vB}{{\mathbf{B}}}

\newcommand{\vD}{{\mathbf{D}}}

\newcommand{\vF}{{\mathbf{F}}}

\newcommand{\vH}{{\mathbf{H}}}
\newcommand{\vI}{{\mathbf{I}}}

\newcommand{\vL}{{\mathbf{L}}}

\newcommand{\vP}{{\mathbf{P}}}

\newcommand{\vT}{{\mathbf{T}}}

\newcommand{\vW}{{\mathbf{W}}}
\newcommand{\vX}{{\mathbf{X}}}
\newcommand{\vY}{{\mathbf{Y}}}

\newcommand{\cV}{{\mathcal{V}}}

%% macros for math notions and operators

\newcommand{\RR}{\mathbb{R}}

\newcommand{\vzero}{\mathbf{0}}
\newcommand{\vone}{{\mathbf{1}}}

\newcommand{\st}{{\text{s.t.}}} % subject to
 % subject to
 % subscript for operator norm
 % subscript for optimal solution
%\newcommand{\supp}{{\mathrm{supp}}} % support
 % probability
\newcommand{\Diag}{{\mathrm{Diag}}} % vector -> diagonal matrix
%\newcommand{\diag}{{\mathrm{diag}}} % matrix diagonal -> vector
 % domain
 % domain
%\newcommand{\grad}{{\nabla}}    % gradient
\newcommand{\tr}{{\mathrm{tr}}} % trace
 % total variation

\makeatletter
\let\@@span\span
\def\sp@n{\@@span\omit\advance\@multicnt\m@ne}
\makeatother

 % shrinkage
\DeclareMathOperator*{\argmin}{arg\,min}

%% macros for environments math equations

\newcommand{\bc}{\begin{center}}
\newcommand{\ec}{\end{center}}

\newcommand{\bdm}{\begin{displaymath}}
\newcommand{\edm}{\end{displaymath}}

\newcommand{\beq}{\begin{equation}}
\newcommand{\eeq}{\end{equation}}

\newcommand{\bfl}{\begin{flushleft}}
\newcommand{\efl}{\end{flushleft}}

\newcommand{\bt}{\begin{tabbing}}
\newcommand{\et}{\end{tabbing}}

\newcommand{\beqn}{\begin{align}}
\newcommand{\eeqn}{\end{align}}

\newcommand{\beqs}{\begin{align*}} % no equation numbers
\newcommand{\eeqs}{\end{align*}}  % no equation numbers

%% macros for theorem-like environments

% \newtheorem{theorem}{Theorem}
% \newtheorem{assumption}{Assumption}
% \newtheorem{condition}{Condition}
% \newtheorem{rul}{Rule}
% \newtheorem{definition}{Definition}
% \newtheorem{corollary}{Corollary}
% \newtheorem{remark}{Remark}
% \newtheorem{lemma}{Lemma}
% \newtheorem{proposition}{Proposition}
% \newtheorem{example}{Example}
% %\newtheorem*{proof}{Proof}
% %\newtheorem{example}[remark]{Example}

%%%%%%%%%%%%%%%%%%%%%%%%%%%%%%%%
% THEOREMS
%%%%%%%%%%%%%%%%%%%%%%%%%%%%%%%%
\theoremstyle{plain}
\newtheorem{theorem}{Theorem}[section]
\newtheorem{proposition}[theorem]{Proposition}

\theoremstyle{definition}

\theoremstyle{remark}
\newtheorem{remark}{Remark}[section] % Remarks also numbered by section

\newcommand{\method}{\textsc{LPSL}\xspace }

% \title{Towards Labeling Bias in Graph Neural Networks}
\title{Towards Label Position Bias \\in Graph Neural Networks}
% \title{Towards Label Proximity Bias \\in Graph Neural Networks}

\author{%
  Haoyu Han \\
  Michigan State University\\
  \texttt{hanhaoy1@msu.edu} \\
  % examples of more authors
  \And
  Xiaorui Liu \\
  North Carolina State University  \\
  \texttt{xliu96@ncsu.edu} \\
  \AND
  Feng Shi \\
  TigerGraph \\
  % Address \\
  \texttt{bill.shi@tigergraph.com} \\
  \And
  MohamadAli Torkamani\\
  Amazon \\
  \texttt{alitor@amazon.com} \\
  \And
  Charu C. Aggarwal \\
  IBM T.J. Watson Research Center \\
  \texttt{charu@us.ibm.com} \\
    \And
  Jiliang Tang \\
  Michigan State University \\
  \texttt{tangjili@msu.edu} \\
}

\begin{document}

\maketitle

\vspace{-0.2in}
\begin{abstract}
Graph Neural Networks (GNNs) have emerged as a powerful tool for semi-supervised node classification tasks. However, recent studies have revealed various biases in GNNs stemming from both node features and graph topology. In this work, we uncover a new bias - label position bias, which indicates that the node closer to the labeled nodes tends to perform better. We introduce a new metric, the Label Proximity Score, to quantify this bias, and find that it is closely related to performance disparities. To address the label position bias, we propose a novel optimization framework for learning a label position unbiased graph structure, which can be applied to existing GNNs. Extensive experiments demonstrate that our proposed method not only outperforms backbone methods but also significantly mitigates the issue of label position bias in GNNs.

\end{abstract}

\vspace{-0.2in}
\section{Introduction}
\label{sec:intro}
\vspace{-0.1in}
Graph is a foundational data structure, denoting pairwise relationships between entities. It finds applications across a range of domains, such as social networks, transportation, and biology.~\citep{wu2020comprehensive, ma2021deep}  Among these diverse applications, semi-supervised node classification has emerged as a crucial and challenging task, attracting significant attention from researchers. Given the graph structure, node features, and a subset of labels, the semi-supervised node classification task aims to predict the labels of unlabeled nodes. In recent years, Graph Neural Networks (GNNs) have demonstrated remarkable success in addressing this task due to their exceptional ability to model both the graph structure and node features~\citep{zhou2020graph}. A typical GNN model usually follows the message-passing scheme~\citep{gilmer2017neural}, which mainly contains two operators, i.e., feature transformation and feature propagation, to exploit node features, graph structure, and label information.

Despite the great success, recent studies have shown that GNNs could introduce various biases from the perspectives of node features and graph topology. 
In terms of node features, Jiang et al.~\citep{jiang2022fmp} demonstrated that the message-passing scheme could amplify sensitive node attribute bias. A series of studies~\citep{agarwal2021towards, kose2022fair, dai2021say} have endeavored to mitigate this sensitive attribute bias in GNNs and ensure fair classification. 
In terms of graph topology, Tang et al.~\citep{tang2020investigating} investigated the degree bias in GNNs, signifying that high-degree nodes typically outperform low-degree nodes. This degree bias has also been addressed by several recent studies~\citep{kang2022rawlsgcn,liu2023generalized,liang2022resnorm}. 

In addition to node features and graph topology, the label information, especially the position of labeled nodes, also plays a crucial role in GNNs. However, the potential bias in label information has been largely overlooked. In practice, with an equal number of training nodes, different labeling can result in significant discrepancies in test performance~\citep{ma2022partition,cai2017active,hu2020graph}. For instance, Ma et al.~\citep{ma2021subgroup} study the subgroup generalization of GNNs and find that the shortest path distance to labeled nodes can also affect the GNNs' performance, but they haven't provided deep understanding or solutions. 
The investigation of the influence of labeled nodes' position on unlabeled nodes remains under-explored.

In this work, we discover the presence of a new bias in GNNs, namely the label position bias, which indicates that the nodes "closer" to the labeled nodes tend to receive better prediction accuracy. We propose a novel metric called Label Proximity Score (LPS) to quantify and measure this bias. Our study shows that different node groups with varied LPSs can result in a significant performance gap, which showcases the existence of label position bias. More importantly, this new metric has a much stronger correlation with performance disparity than existing metrics such as degree~\citep{tang2020investigating} and shortest path distance~\citep{ma2021subgroup}, which suggests that the proposed Label Proximity Score might be a more intrinsic measurement of label position bias.

Addressing the label position bias in GNNs is greatly desired. First, the label position bias would cause the fairness issue to nodes that are distant from the labeled nodes.  
For instance, in a financial system, label position bias could result in unfair assessments for individuals far from labeled ones, potentially denying them access to financial resources.
Second, mitigating this bias has the potential to enhance the performance of GNNs, especially if nodes that are distant can be correctly classified. 
In this work, we propose a Label Position unbiased Structure Learning method (\method) to derive a graph structure that mitigates the label position bias. Specifically, our goal is to learn a new graph structure in which each node exhibits similar Label Proximity Scores. The learned graph structure can then be applied across various GNNs. Extensive experiments demonstrate that our proposed \method not only outperforms backbone methods but also significantly mitigates the issue of label position bias in GNNs.

\vspace{-0.1in}
% \section{Preliminary}
\section{Label Position Bias}
\vspace{-0.1in}
\label{sec:pre}

In this section, we provide an insightful preliminary study to reveal the existence of label position bias in GNNs. Before that, we first define the notations used in this paper.

\textbf{Notations}. 
We use bold upper-case letters such as $\mathbf{X}$ to denote matrices. $\mathbf{X}_i$ denotes its $i$-th row and $\vX_{ij}$ indicates the $i$-th row and $j$-th column element. We use bold lower-case letters such as $\mathbf{x}$ to denote vectors. $\mathbf{1_n} \in \RR^{n \times 1}$ is all-ones column vector. The Frobenius norm and the trace of a matrix $\mathbf{X}$ are defined as $\|\mathbf{X}\|_{F}=\sqrt{\sum_{i j} \mathbf{X}_{i j}^{2}}$ and $tr(\mathbf{X}) = \sum_{i}\mathbf{X}_{ii}$, respectively.
Let $\mathcal{G}=(\mathcal{V}, \mathcal{E})$ be a graph, where $\mathcal{V}$ is the node set and $\mathcal{E}$ is the edge set. 
$\mathcal{N}_i$ denotes the neighborhood node set for node $v_i$.  
The graph can be represented by an adjacency matrix $\mathbf{A} \in \mathbb{R}^{n \times n}$, where $\mathbf{A}_{ij}>0$ indices that there exists an edge between nodes $v_i$ and $v_j$ in $\mathcal{G}$, or otherwise $\mathbf{A}_{ij}=0$. Let $\vD = diag(d_1, d_2, \dots, d_n)$ be the degree matrix, where $d_i = \sum_{j}\mathbf{A}_{ij}$ is the degree of node $v_i$. The graph Laplacian matrix is defined as $\mathbf{L} = \mathbf{D} - \mathbf{A}$.
We define the normalized adjacency matrix as $\tA = {\mathbf{D}}^{-\frac{1}{2}} {\mathbf{A}} {\mathbf{D}}^{-\frac{1}{2}}$ and the normalized Laplacian matrix as $\tL = \vI - \tA$.
Furthermore, suppose that each node is associated with a $d$-dimensional feature $\mathbf{x}$ and we use $\mathbf{X} = [ \mathbf{x}_1, \dots, \mathbf{x}_n ]^{\top} \in \mathrm{R}^{n \times d}$ to denote the feature matrix. In this work, we focus on the node classification task on graphs. Given a graph $\mathcal{G} = \{\mathbf{A}, \mathbf{X}\}$ and a partial set of labels $\mathcal{Y}_L=\{\mathbf{y}_1, \dots, \mathbf{y}_l\}$ for node set $\cV_L=\{v_1, \dots, v_l\}$, where $\mathbf{y}_i \in \mathbb{R}^c$ is a one-hot vector with $c$ classes, our goal is to predict labels of unlabeled nodes. 
For convenience, we reorder the index of nodes and use a mask matrix $\vT =\begin{bmatrix}
\vI_l & 0\\
0 & 0 
\end{bmatrix}$ to represent the indices of labeled nodes.

\textbf{Label Proximity Score.} 
In this study, we aim to study the bias caused by label positions. 
When studying prediction bias, we first need to define the sensitive groups based on certain attributes or metrics. Therefore, we propose a novel metric, namely the Label Proximity Score, to quantify the closeness between test nodes and training nodes with label information.  
Specifically, the proposed Label Proximity Score (LPS) is defined as follows: 
\begin{align}
    \textit{LPS} = \vP\vT\mathbf{1_n}, ~~\text{and}~~\vP = \left(\vI - (1-\alpha)\tA\right)^{-1}, 
\end{align}
where $\vP$ represents the Personalized PageRank matrix, $\vT$ is the label mask matrix, $\mathbf{1_n}$ is an all-ones column vector, and $\alpha \in (0, 1]$ stands for the teleport probability. 
$\vP_{ij}$ represents the pairwise node proximity between node $i$ and node $j$. For each test node $i$, its LPS represents the sum of its node proximity values to all labeled nodes, i.e., $(\vP\vT\vone_n)_i = \vP_{i,:}\vT\vone_n = \sum_{j\in \cV_L} \vP_{ij}$.

\textbf{Sensitive Groups.} 
In addition to the proposed LPS, we also explore two existing metrics such as node degree~\citep{tang2020investigating} and shortest path distance to label nodes~\citep{ma2021subgroup} for comparison since they could be related to the label position bias. For instance, the node with a high degree is more likely to connect with labeled nodes, and the node with a small shortest path to a labeled node is also likely "closer" to all labeled nodes if the number of labeled nodes is small. 
According to these metrics, we split test nodes into different sensitive groups. 
Specifically, for node degree and shortest path distance to label nodes, we use their actual values to split them into seven sensitive groups, as there are only very few nodes whose degrees or shortest path distances are larger than seven. For the proposed LPS, we first calculate its value and subsequently partition the test nodes evenly into seven sensitive groups, each having an identical range of LPS values. 

\textbf{Experimental Setup}. We conduct the experiments on three representative datasets used in semi-supervised node classification tasks, namely Cora, CiteSeer, and PubMed. We also experiment with three different labeling rates: 5 labels per class, 20 labels per class, and 60\% labels per class.  The experiments are performed using two representative GNN models, GCN~\citep{kipf2016semi} and APPNP~\citep{gasteiger2018predict}, which cover both coupled and decoupled architectures. We also provide the evaluation on Label Propagation (LP)~\citep{zhou2003learning} to exclude the potential bias caused by node features. 
For GCN and APPNP, we adopt the same hyperparameter setting with their original papers. The node classification accuracy on different sensitive groups $\{1,2,3,4,5,6,7\}$ with the labeling rate of 20 labeled nodes per class under APPNP, GCN, and LP models is illustrated in Figure~\ref{fig:APPNP_bias}, \ref{fig:GCN_bias}, and \ref{fig:LP_bias} respectively. Due to the space limitation, we put more details and results of other datasets and labeling rates into Appendix A.

\begin{figure} [!htb]
    \centering
    \vspace{-0.2in}
    \subfigure[\centering Degree]{{\includegraphics[width=0.3\textwidth]{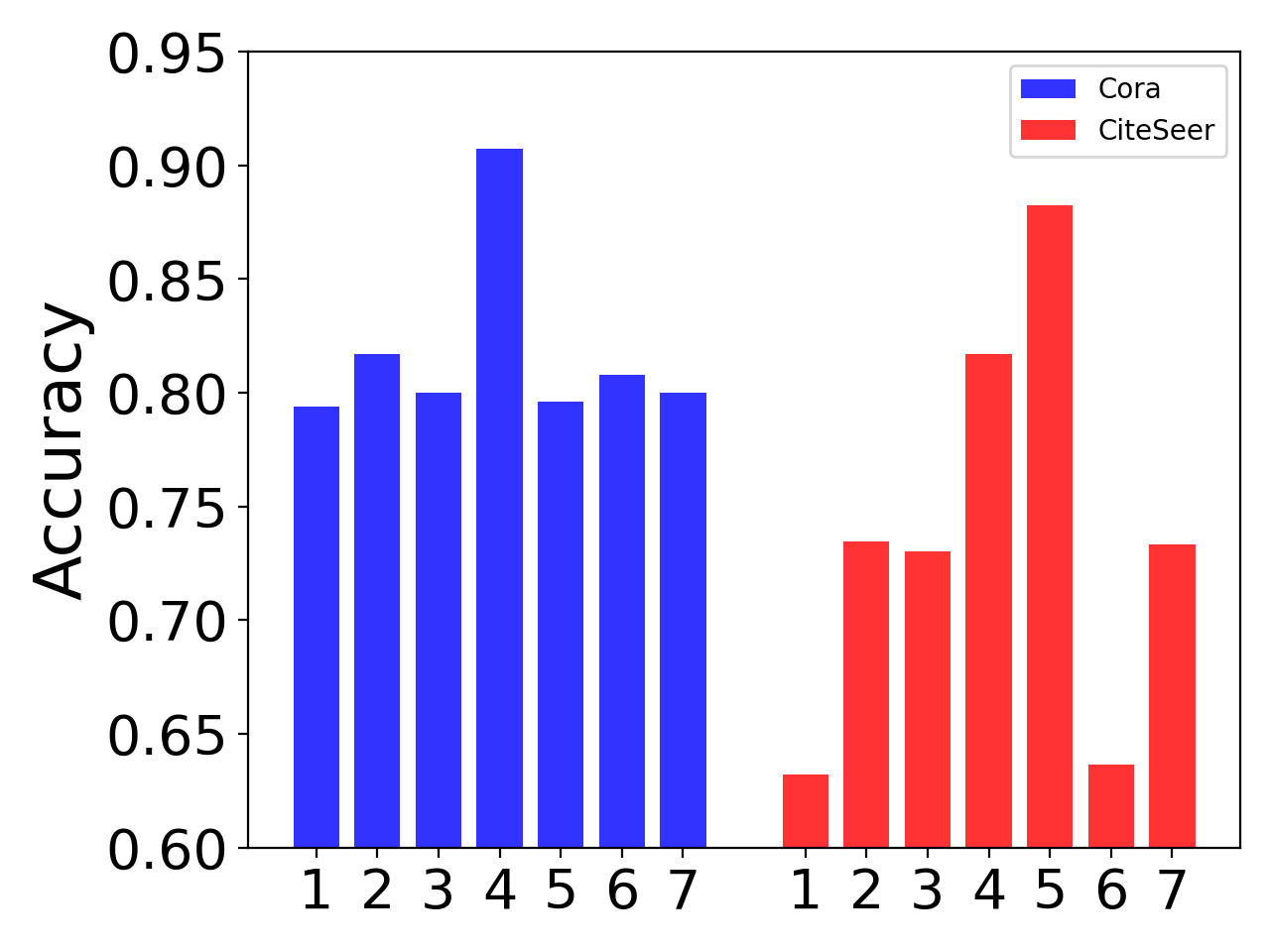}}}
    \hfill
    \subfigure[\centering Shortest Path Distance]{{\includegraphics[width=0.3\linewidth]{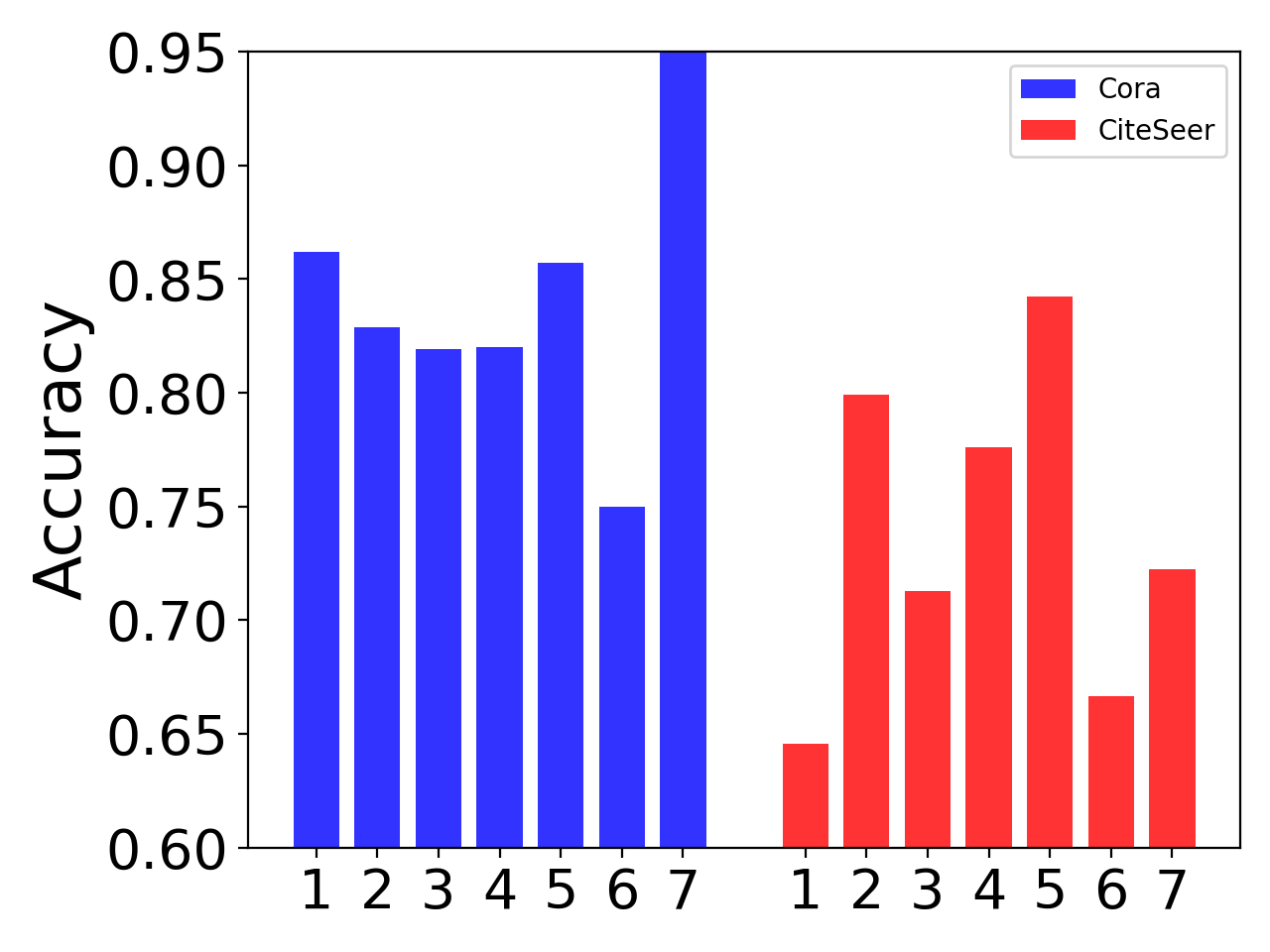} }}
    \hfill
    \subfigure[\centering Label Proximity  Score]{{\includegraphics[width=0.3\linewidth]{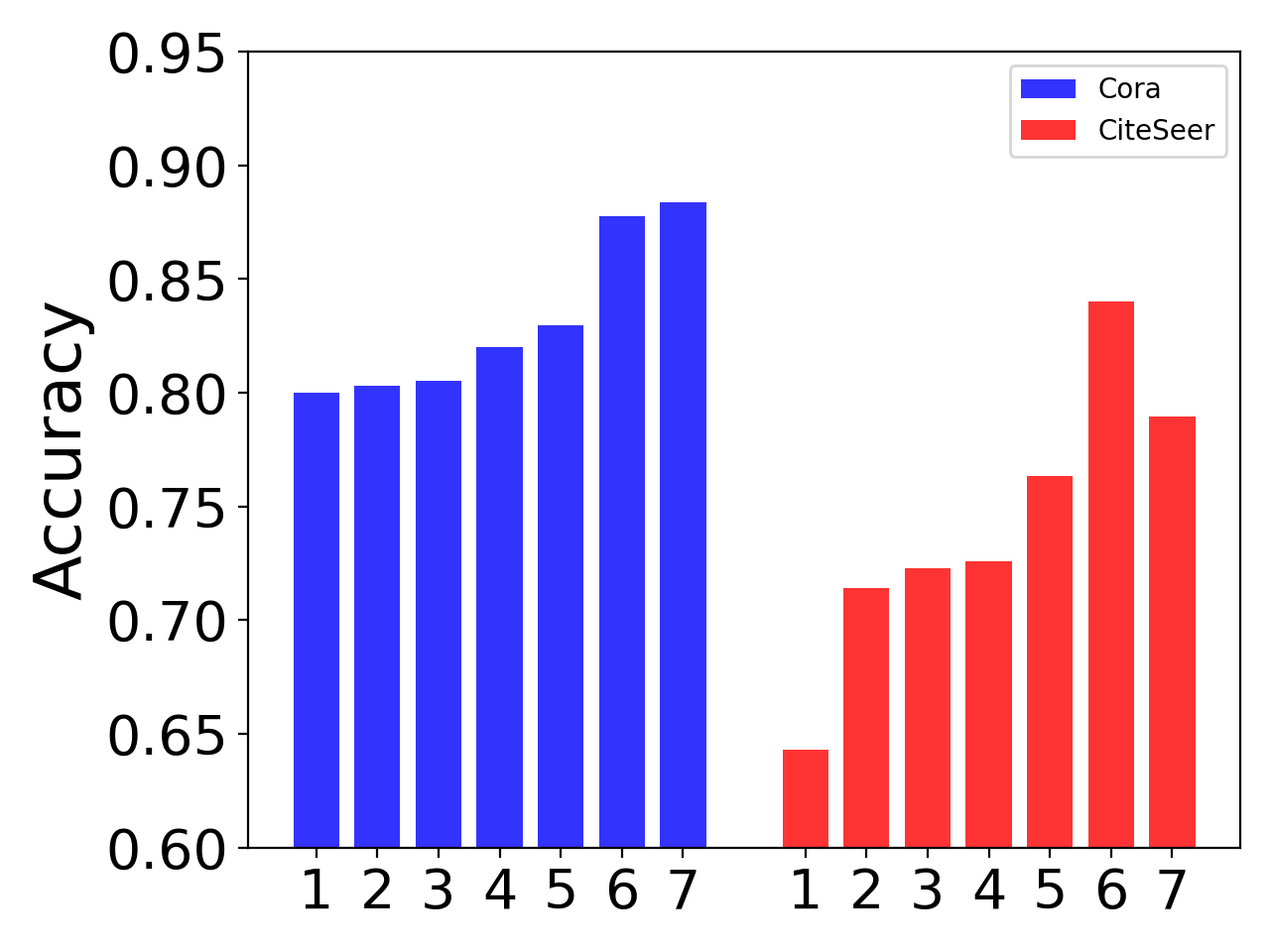} }}
    \vspace{-0.1in}
    \caption{APPNP with 20 labeled nodes per class on Cora and CiteSeer datasets.}
    \label{fig:APPNP_bias}
    \vspace{-0.2in}
\end{figure}

\begin{figure} [!htb]
    \centering
    \vspace{-0.1in}
    \subfigure[\centering Degree]{{\includegraphics[width=0.3\linewidth]{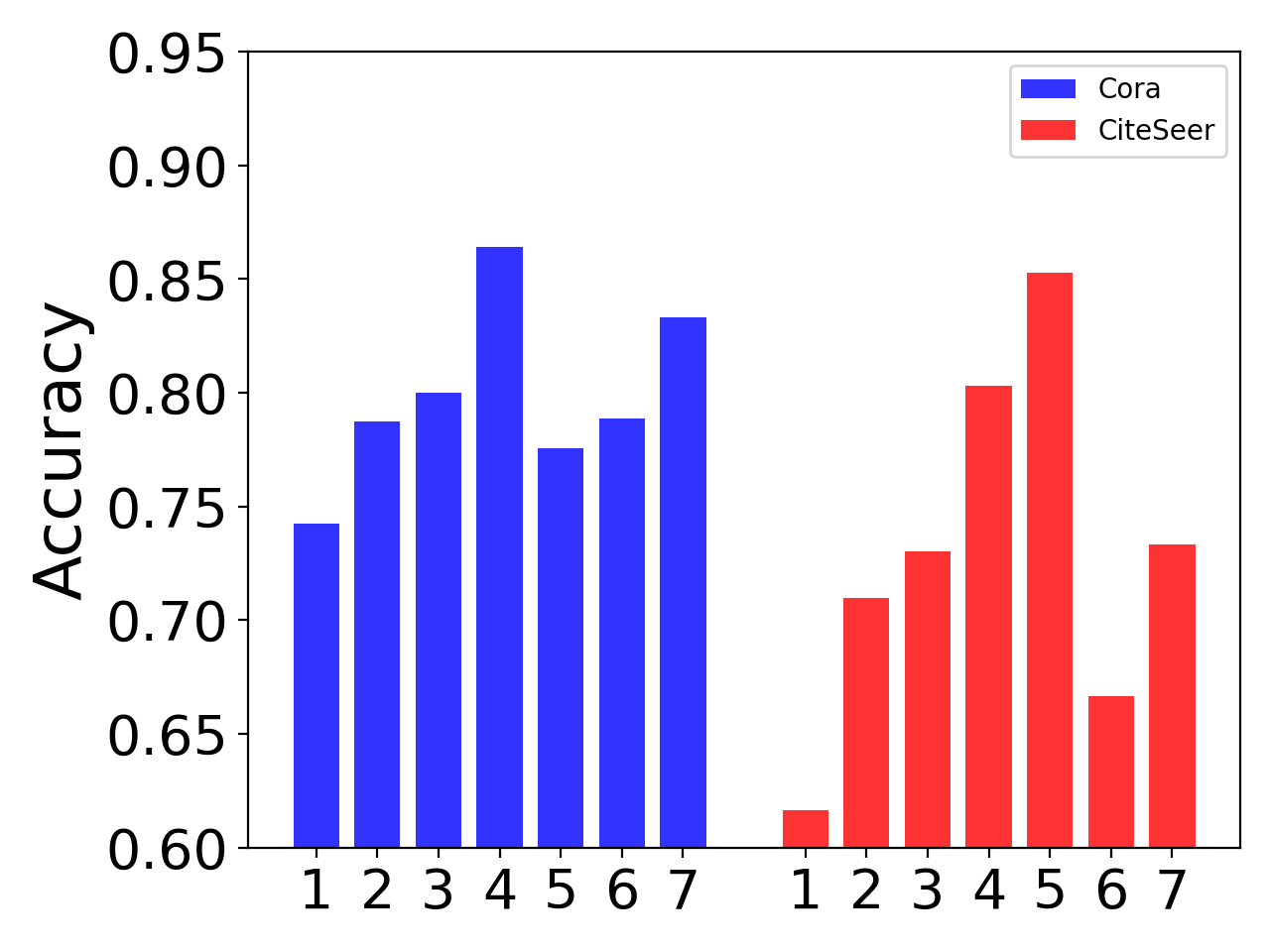}}}
    \hfill
    \subfigure[\centering Shortest Path Distance]{{\includegraphics[width=0.3\linewidth]{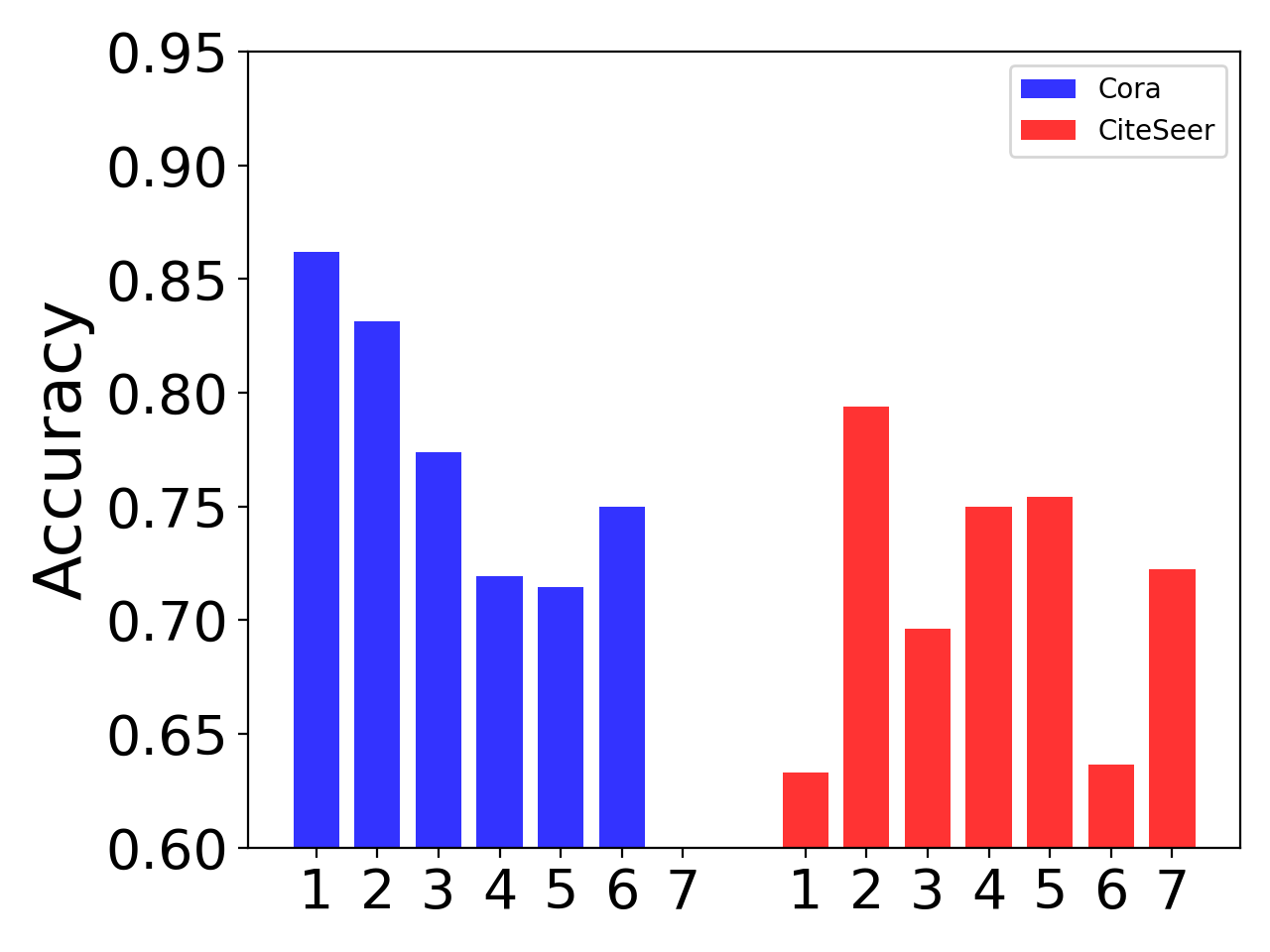} }}
    \hfill
    \subfigure[\centering Label Proximity Score]{{\includegraphics[width=0.3\linewidth]{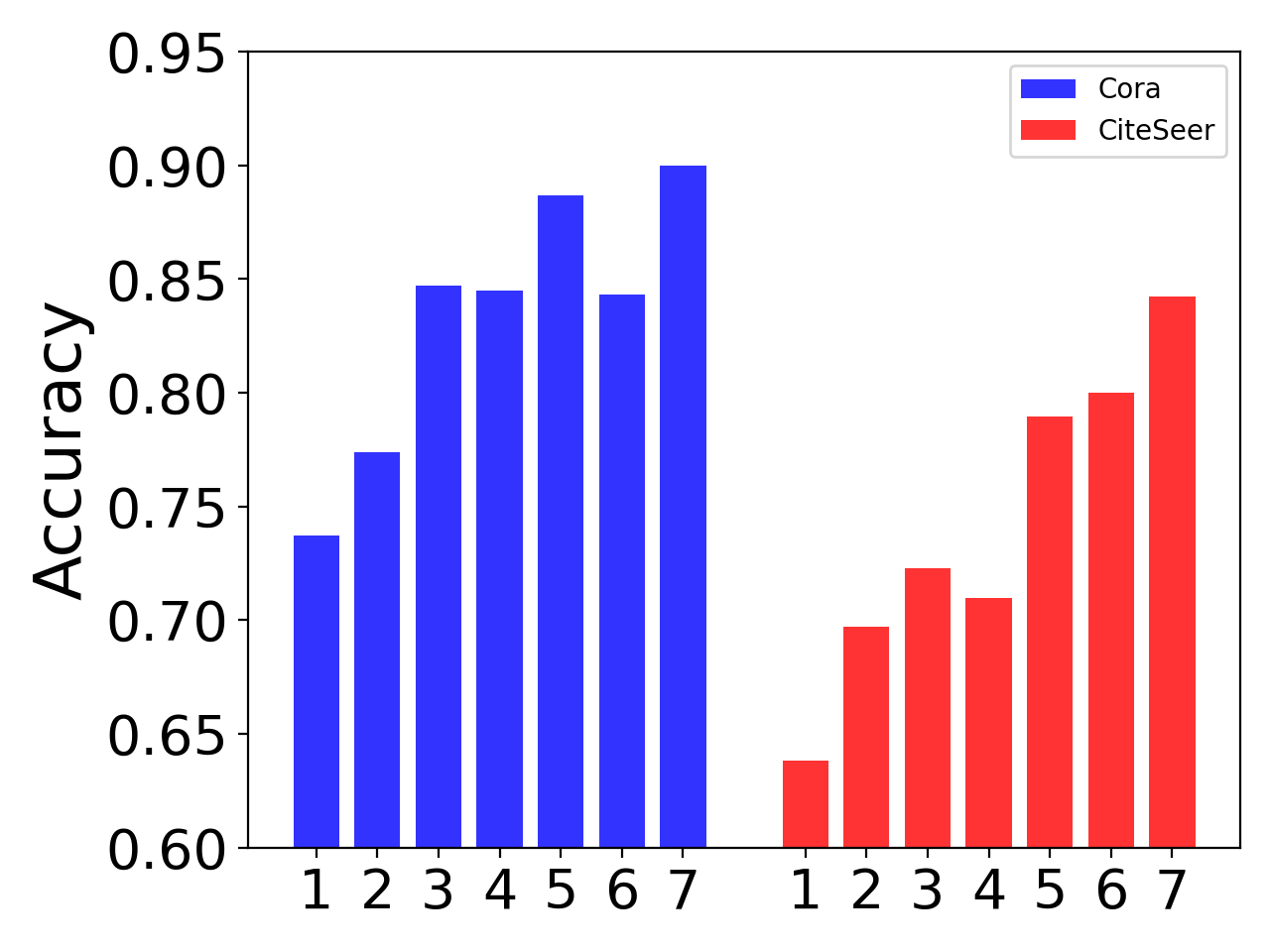} }}
    \vspace{-0.1in}
    \caption{GCN with 20 labeled nodes per class on Cora and CiteSeer datasets.}
    \label{fig:GCN_bias}
    \vspace{-0.2in}
\end{figure}

\begin{figure} [!htb]
    \centering
     \vspace{-0.1in}
    \subfigure[\centering Degree]{{\includegraphics[width=0.3\linewidth]{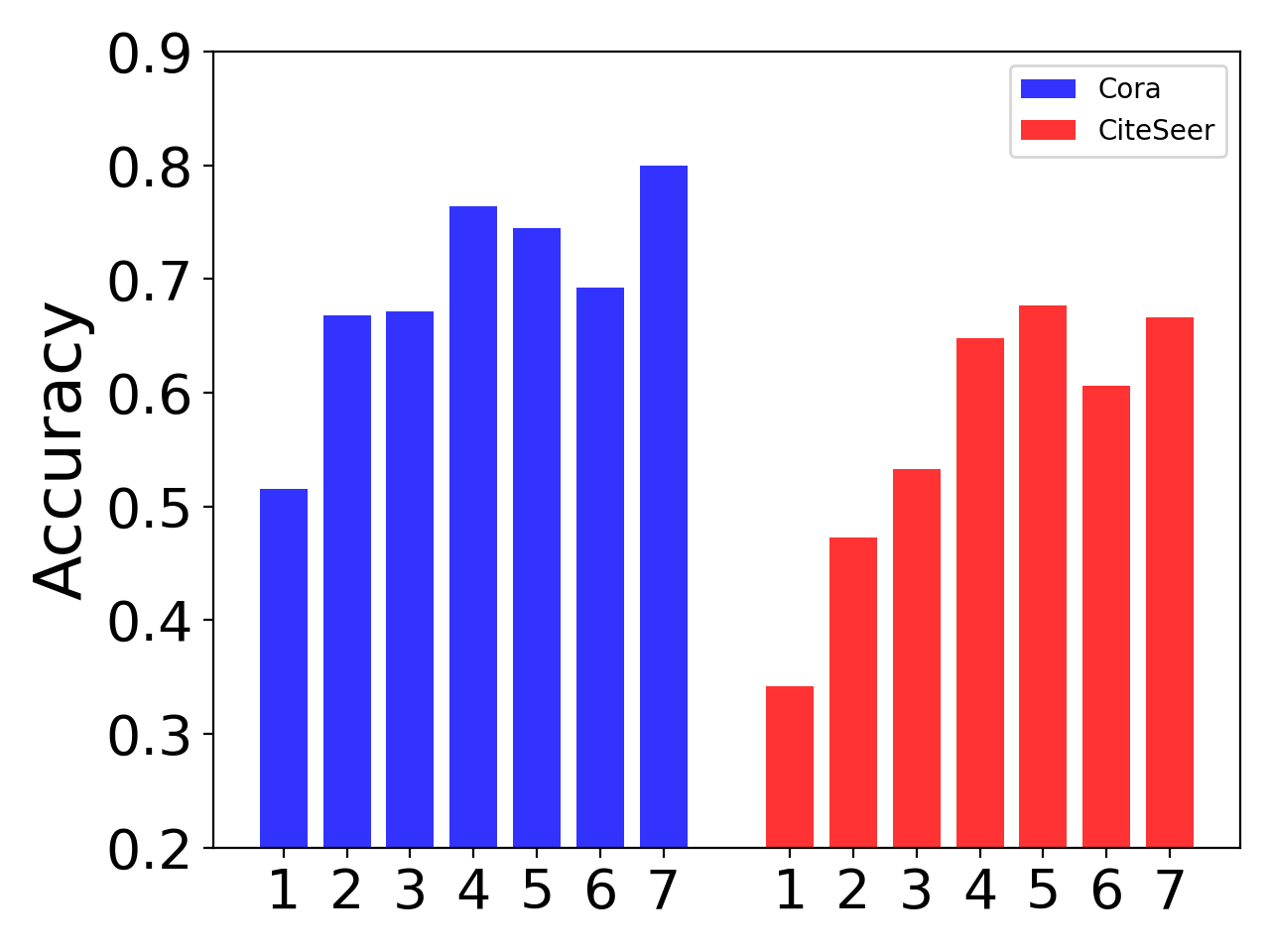}}}
    \hfill
    \subfigure[\centering Shortest Path Distance]{{\includegraphics[width=0.3\linewidth]{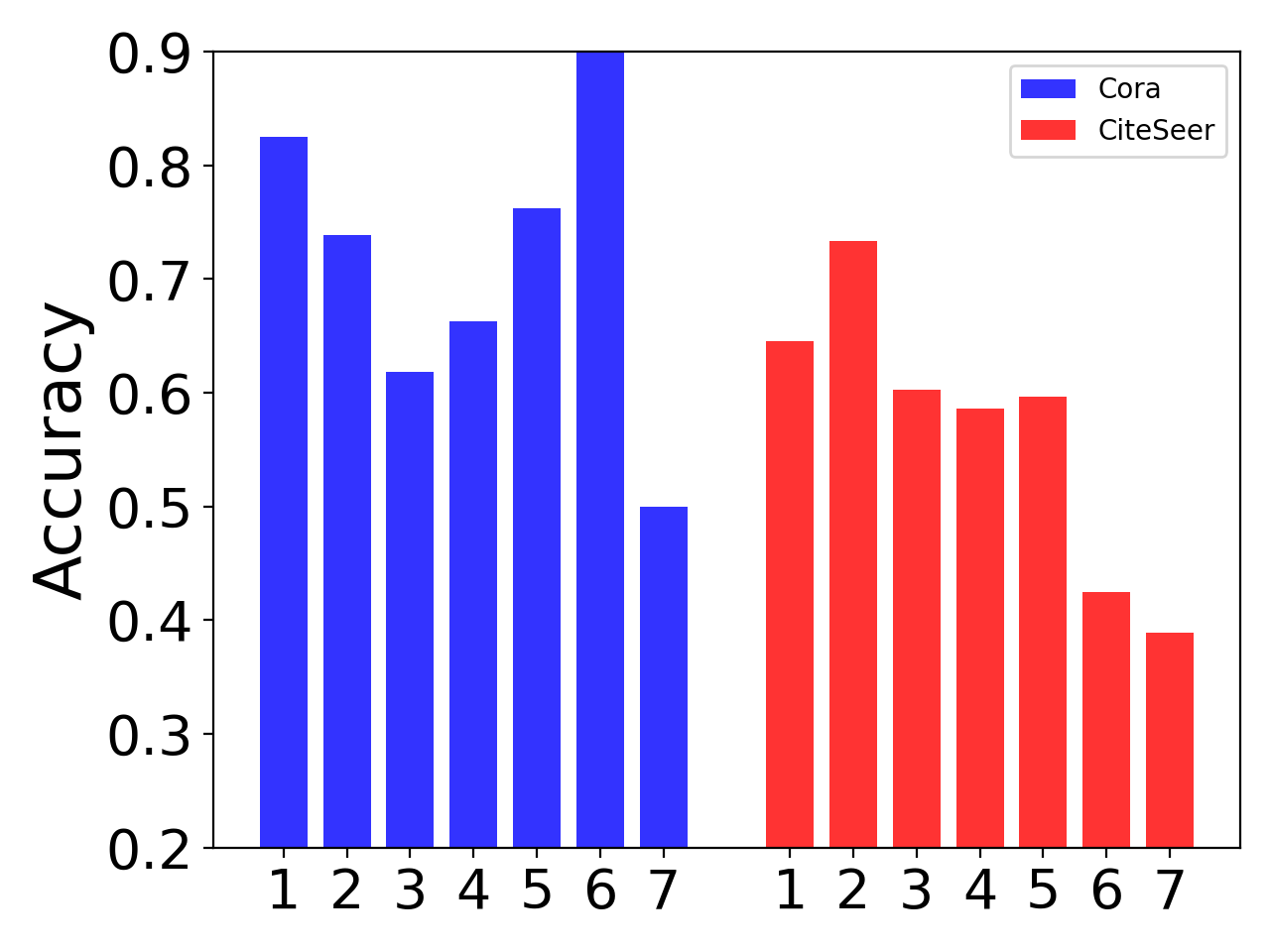} }}
    \hfill
    \subfigure[\centering Label Proximity Score]{{\includegraphics[width=0.3\linewidth]{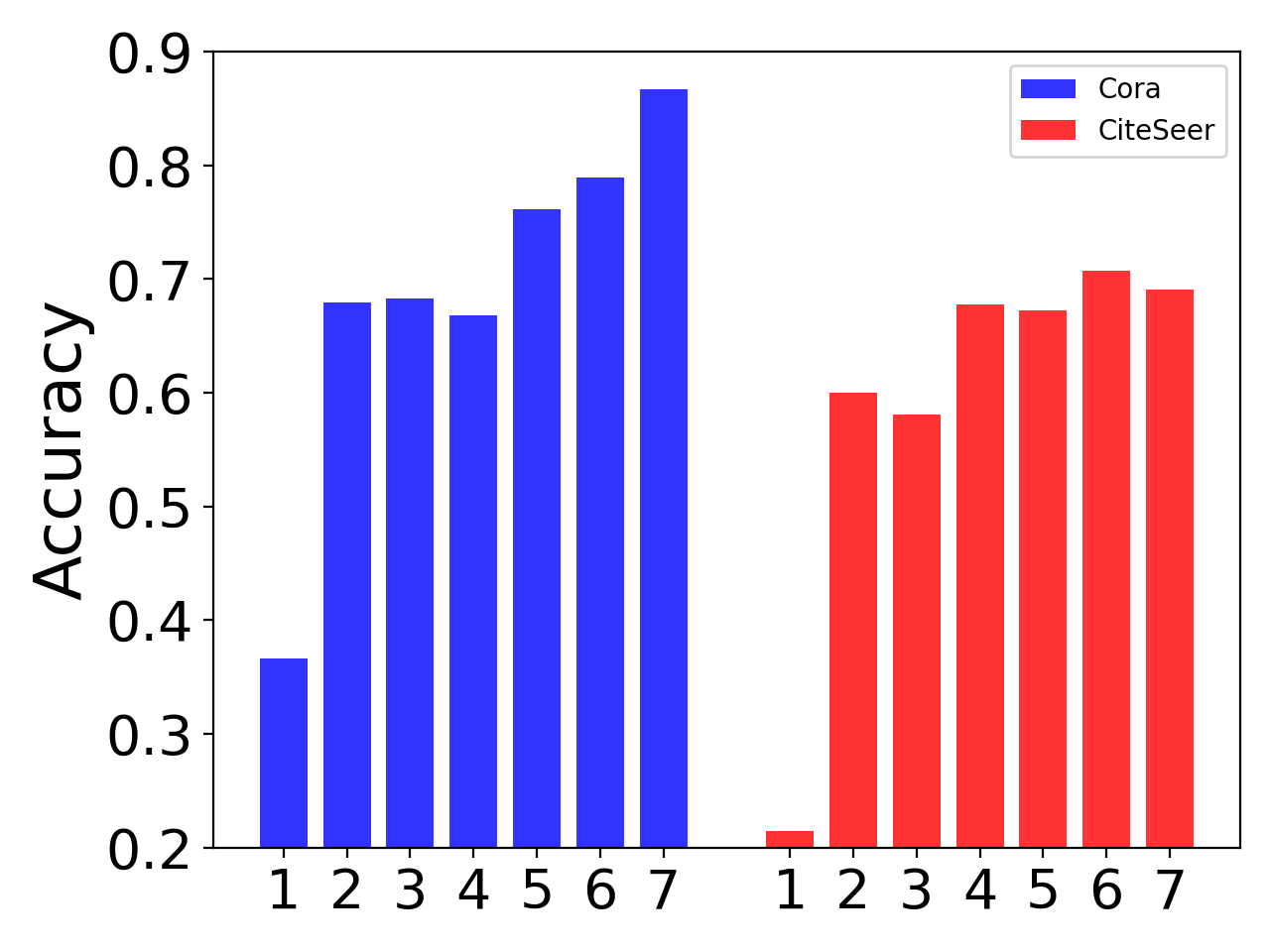} }}
    \vspace{-0.1in}
    \caption{LP with 20 labeled nodes per class on Cora and CiteSeer datasets.}
    \label{fig:LP_bias}
    % \vspace{-0.1in}
\end{figure}

\textbf{Observations.} From the results presented in Figure~\ref{fig:APPNP_bias}, \ref{fig:GCN_bias}, and \ref{fig:LP_bias}, we can observe the following:
\begin{itemize}[leftmargin=0.3in]
    \item Label Position bias is prevalent across all GNN models and datasets. The classification accuracy can notably vary between different sensitive groups, and certain trends are discernible. To ensure fairness and improve performance, addressing this bias is a crucial step in improving GNN models.
    \item While Degree and Shortest Path Distance (SPD) can somewhat reflect disparate performance, indicating that nodes with higher degrees and shorter SPDs tend to perform better, these trends lack consistency, and they can't fully reflect the Label Position bias. For instance, degree bias is not pronounced in the APPNP model as shown in Figure~\ref{fig:APPNP_bias}, as APPNP can capture the global structure. Moreover, SPD fails to effectively evaluate relatively low homophily graphs, such as CiteSeer~\citep{ma2021homophily}. Consequently, there is a need to identify a more reliable metric.
    \item The Label Proximity Score (LPS) consistently exhibits a strong correlation with performance disparity across all datasets and models. Typically, nodes with higher LPS scores perform better. In addition, nodes with high degrees and low Shortest Path Distance (SPD) often have higher LPS, as previously analyzed. Therefore, LPS is highly correlated with label position bias.
    \item The Label Propagation, which solely relies on the graph structure, demonstrates a stronger label position bias compared to GNNs as shown in Figure~\ref{fig:LP_bias}. Moreover, the label position bias becomes less noticeable in all models when the labeling rate is high, as there typically exist labeled nodes within the two-hop neighborhood of each test node (detailed in Appendix A). These observations suggest that the label position bias is predominantly influenced by the graph structure. Consequently, this insight motivates us to address the Label Position bias from the perspective of the graph structure.
\end{itemize}

In conclusion, label position bias is indeed present in GNN models, and the proposed Label Proximity Score accurately and consistently reflects the performance disparity over different sensitive groups for different models across various datasets. 
Overall, the label proximity score exhibits more consistent and stronger correlations with performance disparity compared with node degree and shortest path distance, which suggests that LPS serves as a better metric for label position bias.
Further, through the analysis of the Label Propagation method and the effects of different labeling rates, we deduce that the label position bias is primarily influenced by the graph structure. 
This understanding paves us a way to mitigate label position bias.

\section{The Proposed Framework}
\label{sec:alg}
\vspace{-0.1in}
The studies in Section~\ref{sec:pre} suggest that Label Position bias is a prevalent issue in GNNs. In other words, nodes far away from labeled nodes tend to yield subpar performance. Such unfairness could be problematic, especially in real-world applications where decisions based on these predictions can have substantial implications. 
As a result, mitigating label position bias has the potential to enhance the fairness of GNNs in real-world applications, as well as improve overall model performance. Typically, there are two ways to address this problem, i.e., from a model-centric or a data-centric perspective. In this work, we opt for a data-centric perspective for two primary reasons: (1) The wide variety of GNN models in use in real-world scenarios, each with its unique architecture, makes it challenging to design a universal component that can be seamlessly integrated into all GNNs to mitigate the label position bias. Instead, the graph structure is universal and can be applied to any existing GNNs. (2) Our preliminary studies indicate that the graph structure is the primary factor contributing to the label position bias. Therefore, it is more rational to address the bias by learning a label position unbiased graph structure.

However, there are mainly two challenges:  (1) How can we define a label position unbiased graph structure, and how can we learn this structure based on the original graph?  (2) Given that existing graphs are typically sparse, how can we ensure that the learned data structure is also sparse to avoid excessive memory consumption? In the following subsections, we aim to address these challenges.

\subsection{Label Position Unbiased Graph Structure Learning}
\vspace{-0.1in}

Based on our preliminary studies, the Label Proximity Score (LPS) can consistently reflect performance disparity across various GNNs and indicate the label position bias. Therefore, to mitigate the label position bias from the structural perspective, our objective is to learn a new graph structure in which each node exhibits similar LPSs. Meanwhile, this learned unbiased graph structure should maintain certain properties of the original graph. To achieve this goal, we formulate the Label Position Unbiased Structure Learning (LPSL) problem as follows:
\begin{align}
\label{eq:optimization}
\begin{aligned}
    & \argmin_{\vB} \|\vI-\vB\|_F^2 + \lambda \tr(\vB^\top \tL \vB) \\
    &\quad \st \quad \vB \vT \vone_n = c\vone_n,
\end{aligned}
\end{align}
where $\vB \in \RR^{n \times n}$ represents the debiased graph structure matrix. $\tr(\vB^\top \tL \vB) = \sum_{\left(v_{i}, v_{j}\right) \in \mathcal{E}}\|\vB_i/\sqrt{d_i}-\vB_j/\sqrt{d_j}\|_{2}^{2}$ measures the smoothness of the new structure based on the original graph structure.
The proximity to identity matrix $\vI \in \RR^{n \times n}$ encourages self-loops and avoids trivial over-smoothed structures. 
$\lambda$ is a hyperparameter that controls the balance between smoothness and self-loop. $\vT$ is the mask matrix indicating the labeled nodes, $\mathbf{1_n}$ is the all-ones vector, and $c$ is a hyperparameter serving as the uniform Label Proximity Score for all nodes. 

Notably, if we ignore the constraint, then the optimal solution for this primary problem is given by $\vB = (\vI + \lambda \vL)^{-1}=\alpha(\vI-(1-\alpha\tA))^{-1}$, where $\alpha=\frac{1}{1+\lambda}$. This solution recovers the Personalized PageRank (PPR) matrix which measures pairwise node proximity. Furthermore, the constraint in Eq.~\eqref{eq:optimization} ensures that all nodes have the same Label Proximity Score, denoted as $c$. The constraint encourages fair label proximity scores for all nodes so that the learned graph structure mitigates the label position bias.

The constrained optimization problem in Eq.~\eqref{eq:optimization} is a convex optimization problem, and it can be solved by the Lagrange Multiplier method~\citep{boyd2004convex}. The augmented Lagrange function can be written as:
\begin{align}
\label{eq:lagrange}
        L_\rho(\vB, \vy) = \|\vI-\vB\|_F^2 + \lambda \tr(\vB^\top \tL \vB) + \vy^\top ( \vB \vT \vone_n -c\vone_n) + \frac{\rho}{2} \|\vB \vT \vone_n -c\vone_n\|_2^2,
\end{align}
where $\vy \in \RR^{n \times 1}$ is the introduced Lagrange multiplier, and $\rho > 0$ represents the augmented Lagrangian parameter. The gradient of $L_\rho(\vB, \vy)$ to $\vB$ can be represented as:
\begin{align}
\label{eq:gradient}
    \frac{\partial L_\rho}{\partial \vB} = 2(\vB - \vI) + 2\lambda \tL \vB + \vy(\vT\vone_n)^\top+\rho(\vB \vT \vone_n -c\vone_n)(\vT \vone_n)^\top.
\end{align}
Then, the problem can be solved by dual ascent algorithm~\citep{boyd2011distributed} as follows:
\begin{align*}
    \vB^{k+1} &= \argmin_{\vB} L_\rho(\vB^k, \vy^k) \\
    \vy^{k+1} &= \vy^k + \rho({\vB^k} \vT \vone_n -c\vone_n),
\end{align*}
where $k$ is the current optimization step, and $\vB^{k+1}$ can be obtained by multiple steps of gradient descent using the gradient in Eq.~\eqref{eq:gradient}.

\subsection{Understandings}
In this subsection, we provide the understanding and interpretation of our proposed \method, establishing its connections with the message passing in GNNs.

\begin{remark}
\label{pro:1}
The feature aggregation using the learned graph structure $\vB$ directly as a propagation matrix, i.e., $\vF=\vB\vX$, is equivalent to applying the message passing in GNNs using the original graph
if $\vB$ is the approximate or exact solution to the primary problem defined in Eq.~\ref{eq:optimization} without constraints.
\end{remark}

The detailed proof can be found in Appendix B. Remark~\ref{pro:1} suggests that we can directly substitute the propagation matrix in GNNs with the learned structure $\vB$. The GNNs are trained based on the labeled nodes, and the labeled nodes would influence the prediction of unlabeled nodes because of the message-passing scheme. Following the definition in~\citep{xu2018representation}, the influence of node $j$ on node $i$ can be represented by $I_i(j) = sum\left[\frac{\partial \vh_i}{\partial \vx_j}\right]$, where $\vh_i$ is the  representation of node $i$, $\vx_j$ is the input feature of node $j$, and $\left[\frac{\partial \vh_i}{\partial \vx_j}\right]$ represents the Jacobian matrix. Afterward, we have the following Proposition based on the influcence scores:

\begin{proposition}
\label{pro:2}
 The influence scores from all labeled nodes to any unlabeled node $i$ will be the equal, i.e., $\sum_{j \in \cV_L}I_i(j) = c$, when using the unbiased graph structure $\vB$ obtained from the optimization problem in Eq.~\eqref{eq:optimization} as the propagation matrix in GNNs.
\end{proposition}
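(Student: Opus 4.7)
The plan is to express the influence score $I_i(j)$ as a scalar multiple of $\vB_{ij}$ and then apply the equality constraint $\vB\vT\vone_n = c\vone_n$ directly to sum over the labeled nodes. First, I would invoke Remark~\ref{pro:1} to replace the native message passing of the backbone GNN by a single propagation through $\vB$. For a decoupled architecture such as APPNP or SGC, this lets us write
\begin{equation*}
    \vh_i \;=\; \bigl(\vB\,\phi(\vX)\bigr)_i \;=\; \sum_{k=1}^{n} \vB_{ik}\,\phi(\vx_k),
\end{equation*}
where $\phi$ denotes any feature transformation (for example an MLP) that is applied before propagation.

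Next, by the chain rule, the Jacobian of $\vh_i$ with respect to an input feature $\vx_j$ depends on $j$ only through the single summand $k=j$, so
\begin{equation*}
    \frac{\partial \vh_i}{\partial \vx_j} \;=\; \vB_{ij}\,\frac{\partial \phi(\vx_j)}{\partial \vx_j}.
\end{equation*}
Following the convention of~\citep{xu2018representation}, whose influence-score definition factorizes a node-independent transformation term from a graph-dependent mixing term, the scalar $I_i(j) = \mathrm{sum}\bigl[\partial \vh_i/\partial \vx_j\bigr]$ then takes the form $I_i(j) = \kappa\,\vB_{ij}$, where $\kappa$ does not depend on the identity of $j$ (nor of $i$).

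Finally, summing over the labeled nodes $j \in \cV_L$ and using the constraint built into Eq.~\eqref{eq:optimization}, we obtain
\begin{equation*}
    \sum_{j \in \cV_L} I_i(j) \;=\; \kappa \sum_{j \in \cV_L} \vB_{ij} \;=\; \kappa\,(\vB\vT\vone_n)_i \;=\; \kappa\, c,
\end{equation*}
which is identical for every unlabeled node $i$ and yields the claimed equality after absorbing $\kappa$ into the constant.

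The main obstacle is treating the feature-transformation Jacobian rigorously, since for a nonlinear $\phi$ the term $\partial\phi(\vx_j)/\partial\vx_j$ genuinely depends on the input and on $j$. I would address this exactly as in~\citep{xu2018representation}, by either restricting to linear $\phi$ (where the statement is exact, as in SGC-style models) or by working with the expected Jacobian / fixed activation pattern so that the $j$-dependence collapses into a node-independent factor. Once that convention is fixed, the conclusion is an immediate algebraic consequence of the LPSL equality constraint; all the real work has been done in the choice of the constraint in Eq.~\eqref{eq:optimization}.
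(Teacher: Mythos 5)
Your proof takes essentially the same route as the paper's: the paper likewise writes $\vH = \vB\vX$, observes that the Jacobian $\partial\vh_i/\partial\vx_j$ is a diagonal matrix with every entry equal to $\vB_{ij}$ so that $I_i(j)$ is a node-independent multiple of $\vB_{ij}$, and then sums the constraint over $j\in\cV_L$ to conclude. If anything you are more careful than the paper about the multiplicative constant --- the paper quietly rescales the constraint to $\vB\vT\vone_n = c/n$ so that the sum comes out to exactly $c$, whereas you correctly obtain $\kappa c$ and absorb $\kappa$ --- and your inclusion of a feature transformation $\phi$ with the attendant caveat about nonlinear Jacobians generalizes the paper's argument, which simply takes $\vH=\vB\vX$ with no transformation.
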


The proof can be found in Appendix B. Proposition~\ref{pro:2} suggests that by using the unbiased graph structure for feature propagation, each node can receive an equivalent influence from all the labeled nodes, thereby mitigating the label position bias issue.

\subsection{$\ell_1$-regularized  Label Position Unbiased Sparse Structure Learning}
\vspace{-0.1in}
One challenge of solving the graph structure learning problem in Eq.~\eqref{eq:optimization} is that it could result in a dense structure matrix $\vB \in \RR^{n \times n}$. This is a memory-intensive outcome, especially when the number of nodes $n$ is large. Furthermore, applying this dense matrix to GNNs can be time-consuming for downstream tasks, which makes it less practical for real-world applications. To make the learned graph structure sparse, we propose the following $\ell_1$-regularized Label Position Unbiased Sparse Structure Learning optimization problem:
\begin{align}
\label{eq:sparse}
\begin{aligned}
    & \argmin_B \|\vI-\vB\|_F^2 + \lambda \tr(\vB^\top \vL \vB) + \beta \|\vB\|_1\\
    & \quad \st \quad \vB \vT \vone_n = c\vone_n,
\end{aligned}
\end{align}
where $\|\vB\|_1$ represents the $\ell_1$ regularization that encourages zero values in $\vB$. $\beta > 0$ is a hyperparameter to control the sparsity of $\vB$. The primary problem in Eq.~\eqref{eq:sparse} is proved to have a strong localization property and can guarantee the sparsity~\citep{ha2021statistical, hu2020local}.
The problem in Eq.~\eqref{eq:sparse} can also be solved by the Lagrange Multiplier method. However, when the number of nodes $n$ is large, solving this problem using conventional gradient descent methods becomes computationally challenging. 
Therefore, we propose to solve the problem in Eq.~\eqref{eq:sparse} efficiently by Block Coordinate Descent (BCD) method~\citep{tseng2001convergence} in conjunction with the proximal gradient approach, particularly due to the presence of the $\ell_1$ regularization.  
Specifically, we split $\vB$ into column blocks, and $\vB_{:,j}$ represents the $j$-th block. The gradient of $L_\rho$ with respect to $\vB_{:,j}$ can be written as:
\begin{align}
        \frac{\partial L_\rho}{\partial \vB_{:,j}} = 2(\vB_{:,j} - \vI_{:,j}) + 2\lambda \tL \vB_{:,j} + \vy(\vT\vone_n)^\top_j+\rho(\vB \vT \vone_n -c\vone_n)(\vT \vone_n)^\top_j,
\end{align}
where $(\vT\vone_n)_j \in \RR^{d \times 1}$ is the corresponding block part with block size $d$. After updating the current block $\vB_{:,j}$, we apply a soft thresholding operator $S_{\beta/\rho}(\cdot)$ based on the proximal mapping. The full algorithm is detailed in Algorithm~\ref{alg:alg}. Notably, lines 6-8 handle the block updates, line 9 performs the soft thresholding operation, and line 11 updates Lagrange multiplier $\vy$ through dual ascent update.

\begin{wrapfigure}{r}{0.5\textwidth}
\vspace{-0.2in}
\begin{minipage}{0.5\textwidth}
\begin{algorithm}[H]
\caption{Algorithm of \method}
\label{alg:alg}
\begin{algorithmic}[1]
\STATE {\bfseries Input:} Laplacian matrix $\tL$, Label mask matrix $\vT$, Hyperparamters $\lambda, c, \beta, \rho$, learning rate $\gamma$ \\
\STATE {\bfseries Output}: Label position unbiased graph structure $\vB$ \\
\vspace{0.1in}
\STATE {\bfseries Initialization}: $\vB^{0}=\vI$ and  $\vy^0=\vzero$ \\ 
\WHILE{Not converge}
\FOR{each block $j$} 
\FOR{$i=0$ {\bfseries to} update steps $t$}
\STATE $\vB_{:,j} =\vB_{:,j} - \gamma * \frac{\partial L_\rho}{\partial \vB_{:,j}}$ \\
\ENDFOR
\STATE $\vB_{:,j} = S_{\beta/\rho}(\vB_{:,j} )$ \\
\ENDFOR
\STATE $\vy = \vy + \rho({\vB} \vT \vone_n -c\vone_n)$
\ENDWHILE
\vspace{0.1in}
\RETURN $\vB$
\end{algorithmic}
\end{algorithm}
\end{minipage}
\end{wrapfigure}

\subsection{The Model Architecture}
\vspace{-0.1in}

The proposed \method learns an unbiased graph structure with respect to the labeled nodes. Therefore, the learned graph structure can be applied to various GNN models to mitigate the Label Position bias. In this work, we test \method on two widely used GNN models, i.e., GCN~\citep{kipf2016semi} and APPNP~\citep{gasteiger2018predict}. 
For the GCN model, each layer can be represented by: 
\begin{align*}
    \vH^{l+1} = \sigma \left(\vB_\lambda \vH^{l} \vW^{l} \right),
\end{align*}
where $\vH^{0} = \vX$, $\sigma$ is the non-linear activation function, $\vB_\lambda$ is the unbiased structure with papermeter $\lambda$, and $\vW^l$ is the weight matrix in the $l$-th layer. We refer to this model as $\method_{\text{GCN}}$.
For the APPNP model, we directly use the learned $\vB_\lambda$ as the propagation matrix, and the prediction can be written as:
\begin{align*}
   \vY_{\text{pred}} = \vB_\lambda f_\theta(\vX), 
\end{align*}
where $f_\theta(\cdot)$ is any machine learning model parameterized by the learnable parameters $\theta$. We name this model as $\method_{\text{APPNP}}$. The parameter $\lambda$ provides a high flexibility when applying $\vB_\lambda$ to different GNN architectures. For decoupled GNNs such as APPNP, which only propagates once, a large $\lambda$ is necessary to encode the global graph structure with a denser sparsity. In contrast, for coupled GNNs, such as GCN, which apply propagation multiple times, a smaller $\lambda$ can be used to encode a more local structure with a higher sparsity.

\section{Experiment}
\label{sec:exp}
\vspace{-0.1in}
In this section, we conduct comprehensive experiments to verify the effectiveness of the proposed \method. In particular, we try to answer the following questions:
\begin{itemize}[leftmargin=0.3in]
    \item \textbf{Q1:} Can the proposed \method improve the performance of different GNNs? (Section 4.2)
    \item \textbf{Q2:} Can the proposed \method mitigate the label position bias? (Section 4.3)
    \item \textbf{Q3:} How do different hyperparameters affect the proposed \method? (Section 4.4)
\end{itemize}

\subsection{Experimental Settings}
\vspace{-0.1in}

\textbf{Datasets.} We conduct experiments on 8 real-world graph datasets for the semi-supervised node classification task, including three citation datasets, i.e., Cora, Citeseer, and Pubmed~\citep{sen2008collective}, two co-authorship datasets, i.e., Coauthor CS and Coauthor Physics, two  co-purchase datasets, i.e., Amazon Computers and Amazon Photo~\citep{shchur2018pitfalls}, and one OGB dataset, i.e., ogbn-arxiv~\citep{hu2020open}. The details about these datasets are shown in Appendix C.

We employ the fixed data split for the ogbn-arxiv dataset, while using ten random data splits for all other datasets to ensure more reliable results~\citep{liu2021elastic}. Additionally, for the Cora, CiteSeer, and PubMed datasets, we experiment with various labeling rates: low labeling rates with 5, 10, and 20 labeled nodes per class, and high labeling rates with 60\% labeled nodes per class. Each model is run three times for every data split, and we report the average performance along with the standard deviation.

\textbf{Baselines.} To the best of our knowledge, there are no previous works that aim to address the label position bias. In this work, we select three GNNs, namely, GCN~\citep{kipf2016semi}, GAT~\citep{velivckovic2017graph}, and APPNP~\citep{gasteiger2018predict}, two Non-GNNs, MLP and Label Propagation~\citep{zhou2003learning}, as baselines. Furthermore, we also include GRADE~\citep{wang2022uncovering}, a method designed to mitigate degree bias. Notably, SRGNN~\citep{zhu2021shift} demonstrates that if labeled nodes are gathered locally, it could lead to an issue of feature distribution shift. SRGNN aims to mitigate the feature distribution shift issue and is also included as a baseline. 

\textbf{Hyperparameters Setting.} We follow the best hyperparameter settings in their original papers for all baselines. For the proposed $\method_\text{GCN}$, we set the $\lambda$ in range [1,8]. For $\method_\text{APPNP}$, we set the $\lambda$ in the range [8, 15]. For both methods, $c$ is set in the range [0.5, 1.5]. We fix the learning rate 0.01, dropout 0.5 or 0.8, hidden dimension size 64, and weight decay 0.0005, except for the ogbn-arxiv dataset. Adam optimizer~\citep{kingma2014adam} is used in all experiments. More details about the hyperparameters setting for all methods can be found in Appendix D. 

\subsection{Performance Comparison on Benchmark Datasets}
\vspace{-0.1in}

In this subsection, we test the learned unbiased graph structure by the proposed \method on both GCN and APPNP models. We then compare these results with seven baseline methods across all eight datasets. The primary results are presented in Table~\ref{tab:performance}. Due to space limitations, we have included the results from other baselines in Appendix E. From these results, we can make several key observations:
\begin{itemize}[leftmargin=0.3in]
    \item The integration of our proposed \method to both GCN and APPNP models consistently improves their performance on almost all datasets. This indicates that a label position unbiased graph structure can significantly aid semi-supervised node classification tasks.
    \item Concerning the different labeling rates for the first three datasets, our proposed \method shows greater performance improvement with a low labeling rate. This aligns with our preliminary study that label position bias is more pronounced when the labeling rate is low.
    \item SRGNN, designed to address the feature distribution shift issue, does not perform well on most datasets with random splits instead of locally distributed labels. Only when the labeling rate is very low, SRGNN can outperform GCN. Hence, the label position bias cannot be solely solved by addressing the feature distribution shift.
    \item The GRADE method, aimed at mitigating the degree-bias issue, also fails to improve overall performance with randomly split datasets.
\end{itemize}

\begin{table}[htbp]
\caption{Semi-supervised node classification accuracy (\%) on benchmark datasets.} % with 10 random splits.}
\label{tab:performance}
\resizebox{\textwidth}{!}{
\begin{tabular}{c|c|cccccc}
\hline
Dataset                   & Label Rate           & GCN          & APPNP        & GRADE        & SRGNN        & $\method_{\text{GCN}}$     & $\method_{\text{APPNP}}$   \\ \hline
\multirow{4}{*}{Cora}     & 5 & 70.68 ± 2.17 & 75.86 ± 2.34 & 69.51 ± 6.79 & 70.77 ± 1.82 & 76.58 ± 2.37 & \textbf{77.24 ± 2.18} \\
                        & 10    & 76.50 ± 1.42          & 80.29 ± 1.00 & 74.95 ± 2.46 & 75.42 ± 1.57 & 80.39 ± 1.17          & \textbf{81.59 ± 0.98} \\
                        & 20    & 79.41 ± 1.30          & 82.34 ± 0.67 & 77.41 ± 1.49 & 78.42 ± 1.75 & 82.74 ± 1.01          & \textbf{83.24 ± 0.75} \\
                        & 60\%  & 88.60 ± 1.19          & 88.49 ± 1.28 & 86.84 ± 0.99 & 87.17 ± 0.95 & \textbf{88.75 ± 1.21} & 88.62 ± 1.69          \\ \hline
\multirow{4}{*}{CiteSeer} & 5 & 61.27 ± 3.85 & 63.92 ± 3.39 & 63.03 ± 3.61 & 64.84 ± 3.41 & 65.65 ± 2.47 & \textbf{65.70 ± 2.18} \\
                        & 10    & 66.28 ± 2.14          & 67.57 ± 2.05 & 64.20 ± 3.23 & 67.83 ± 2.19 & 67.73 ± 2.57          & \textbf{68.76 ± 1.77} \\
                        & 20    & 69.60 ± 1.67          & 70.85 ± 1.45 & 67.50 ± 1.76 & 69.13 ± 1.99 & 70.73 ± 1.32          & \textbf{71.25 ± 1.14} \\
                        & 60\%  & 76.88 ± 1.78          & 77.42 ± 1.47 & 74.00 ± 1.87 & 74.57 ± 1.57 & 77.18 ± 1.64          & \textbf{77.56 ± 1.44} \\ \hline
\multirow{4}{*}{PubMed} & 5     & 69.76 ± 6.46          & 72.68 ± 5.68 &     66.90 ± 6.49         & 69.38 ± 6.48 & 73.46 ± 4.64          & \textbf{73.57 ± 5.30} \\
                        & 10    & 72.79 ± 3.58          & 75.53 ± 3.85 &       73.31 ± 3.75       & 72.69 ± 3.49 & 75.67 ± 4.42          & \textbf{76.18 ± 4.05} \\
                        & 20    & 77.43 ± 2.66          & 78.93 ± 2.11 & 75.12 ± 2.37 & 77.09 ± 1.68 & 78.75 ± 2.45          & \textbf{79.26 ± 2.32} \\
                        & 60\%  & \textbf{88.48 ± 0.46} & 87.56 ± 0.52 &        86.90 ± 0.46      & 88.32 ± 0.55 & 87.75 ± 0.57          & 87.96 ± 0.57          \\ \hline
CS                      & 20    & 91.73 ± 0.49          & 92.38 ± 0.38 & 89.43 ± 0.67 & 89.43 ± 0.67 & 91.94 ± 0.54          & \textbf{92.44 ± 0.36} \\ \hline
Physics                 & 20    & 93.29 ± 0.80          & 93.49 ± 0.67 &   91.44 ± 1.41           & 93.16 ± 0.64 & 93.56 ± 0.51          & \textbf{93.65 ± 0.50} \\ \hline
Computers               & 20    & 79.17 ± 1.92          & 79.07 ± 2.34 & 79.01 ± 2.36 & 78.54 ± 2.15 & \textbf{80.05 ± 2.92} & 79.58 ± 2.31          \\ \hline
Photo                   & 20    & 89.94 ± 1.22          & 90.87 ± 1.14 & 90.17 ± 0.93 & 89.36 ± 1.02 & 90.85 ± 1.16          & \textbf{90.93 ± 1.40} \\ \hline
ogbn-arxiv              & 54\%  & 71.91 ± 0.15          & 71.61 ± 0.30 &     OOM         & 68.01 ± 0.35 & \textbf{72.04 ± 0.12}        & 69.20 ± 0.26              \\ \hline
\end{tabular}
}
\end{table}

\subsection{Evaluating Bias Mitigation Performance}
\vspace{-0.1in}

In this subsection, we aim to investigate whether the proposed \method can mitigate the label position bias. We employ all three aforementioned bias metrics, namely label proximity score, degree, and shortest path distance, on Cora and CiteSeer datasets.
We first group test nodes into different sensitive groups according to the metrics, and then use three representative group bias measurements - Weighted Demographic Parity (WDP), Weighted Standard Deviation (WSD), and Weighted Coefficient of Variation (WCV) - to quantify the bias. These are defined as follows:
 \begin{align*}
\text{WDP} = \frac{\sum_{i=1}^{D} N_i \cdot |A_i - A_\text{avg}|}{N_\text{total}},  \text{WSD} = \sqrt{\frac{1}{N_\text{total}} \sum_{i=1}^{D} N_i \cdot (A_i - A_\text{avg})^2}, \text{WCV} = \frac{\text{WSD}}{A_\text{avg}},
\end{align*}
where $D$ is the number of groups, $N_i$ is the node number of group $i$, $A_i$ is the accuracy of group $i$, $A_\text{avg}$ is the weighted average accuracy of all groups, i.e., the overall accuracy, and $N_\text{total}$ is the total number of nodes. We choose six representative models, i.e., Label Propagation (LP), GRADE, GCN, APPNP, $\method_\text{GCN}$, and $\method_\text{APPNP}$, in this experiment. The results of the label proximity score, degree, and shortest path on the Cora and Citeseer datasets are shown in Tabel~\ref{tab:lps}, \ref{tab:degree}, and \ref{tab:short}, respectively. It can be observed from the tables:
\begin{itemize}[leftmargin=0.3in]
\item The Label Propagation method, which solely utilizes the graph structure information, exhibits the most significant label position bias across all measurements and datasets. This evidence suggests that label position bias primarily stems from the biased graph structure, thereby validating our strategy of learning an unbiased graph structure with \method.

\item  The proposed \method not only enhances the classification accuracy of the backbone models, but also alleviates the bias concerning Label Proximity Score, degree, and Shortest distance.
\item The GRADE method, designed to mitigate degree bias, does exhibit a lesser degree bias than GCN and APPNP. However, it still falls short when compared to the proposed \method. Furthermore, GRADE may inadvertently heighten the bias evaluated by other metrics. For instance, it significantly increases the label proximity score bias on the CiteSeer dataset.
\end{itemize}

\begin{table}[ht]
\centering
\vspace{-0.1in}
\caption{Comparison of Methods in Addressing Label Proximity Score Bias.} 
\label{tab:lps}
\begin{tabular}{c|ccc|ccc}
\hline
Dataset    & \multicolumn{3}{c|}{Cora}       & \multicolumn{3}{c}{CiteSeer}    \\ \hline
Method     & WDP $\downarrow$   & WSD $\downarrow$   & WCV  $\downarrow$  & WDP $\downarrow$   & WSD $\downarrow$   & WCV  $\downarrow$  \\ \hline
LP          & 0.1079 & 0.1378 & 0.1941 &  0.2282 & 0.2336 & 0.4692 \\
GRADE	 &	0.0372	& 0.0489	& 0.0615 &		0.0376 &	0.0467 &	0.0658 \\
GCN         & 0.0494 & 0.0618 & 0.0758  & 0.0233 & 0.0376 & 0.0524 \\
$\method_\text{GCN}$    & \textbf{0.0361} & \textbf{0.0438} & \textbf{0.0518} & \textbf{0.0229} & \textbf{0.0346} & 0.0476 \\
APPNP       & 0.0497 & 0.0616 & 0.0732  & 0.0344 & 0.0426 & 0.0594 \\
$\method_\text{APPNP}$  & 0.0390  & 0.0476 & 0.0562  & 0.0275 & 0.0349 & \textbf{0.0474} \\ \hline
\end{tabular}
% \vspace{-0.1in}
\end{table}

\begin{table}[ht]
\centering
\caption{ Comparison of Methods in Addressing Degree Bias.} 
\label{tab:degree}
\begin{tabular}{c|ccc|ccc}
\hline
Dataset    & \multicolumn{3}{c|}{Cora}       & \multicolumn{3}{c}{CiteSeer}    \\ \hline
Method    & WDP  $\downarrow$  & WSD  $\downarrow$  & WCV  $\downarrow$   & WDP $\downarrow$   & WSD  $\downarrow$  & WCV $\downarrow$   \\ \hline
LP          & 0.0893 & 0.1019 & 0.1447  & 0.1202 & 0.1367 & 0.2773 \\
GRADE  & 0.0386 & 0.0471 & 0.0594   & 0.0342 & 0.0529 & 0.0744 \\
GCN         & 0.0503 & 0.0566 & 0.0696  & 0.0466 & 0.0643 & 0.0901 \\
$\method_\text{GCN}$     & 0.0407 & 0.0468 & 0.0554 & 0.0378 & 0.0538 & 0.0742 \\
APPNP       & 0.0408 & 0.0442 & 0.0527 & 0.0499 & 0.0688 & 0.0964 \\
$\method_\text{APPNP}$  & \textbf{0.0349} & \textbf{0.0395} & \textbf{0.0467} & \textbf{0.0316} & \textbf{0.0487} & \textbf{0.0665} \\ \hline
\end{tabular}
% \vspace{-0.1in}
\end{table}

\begin{table}[ht]
\centering
\caption{ Comparison of Methods in Addressing Shortest Path Distance Bias.} 
\label{tab:short}
\begin{tabular}{c|ccc|ccc}
\hline
DataSet                        & \multicolumn{3}{c|}{Cora}         & \multicolumn{3}{c}{CiteSeer}    \\ \hline
Method & WDP  $\downarrow$    & WSD  $\downarrow$    & WCV $\downarrow$     & WDP  $\downarrow$   & WSD $\downarrow$    & WCV $\downarrow$    \\ \hline
LP                              & 0.0562  & 0.0632  & 0.0841  & 0.0508 & 0.0735 & 0.109  \\
GRADE	 &	0.0292 & 	0.0369 &	0.0459 &		0.0282	& 0.0517	& 0.0707 \\
GCN                             & 0.0237  & 0.0444  & 0.0533 & 0.0296 & 0.0553 & 0.0752 \\
$\method_\text{GCN}$    & \textbf{0.0150} & \textbf{0.0248} & \textbf{0.0289}          & \textbf{0.0246} & 0.0526         & 0.0714          \\
APPNP                          & 0.0218 & 0.0316 & 0.0369 & 0.0321 & 0.0495 & 0.0668 \\
$\method_\text{APPNP}$ & 0.0166         & 0.0253          & 0.0295          & 0.0265          & \textbf{0.0490} & \textbf{0.0654} \\ \hline
\end{tabular}
% \vspace{-0.2in}
\end{table}

\subsection{Ablation Study}
\vspace{-0.1in}

In this subsection, we explore the impact of different hyperparameters, specifically the smoothing term $\lambda$ and the constraint $c$, on our model. We conducted experiments on the Cora and CiteSeer datasets using ten random data splits with 20 labels per class. The accuracy of different $\lambda$ values for $\method_\text{APPNP}$ and $\method_\text{GCN}$ on the Cora and CiteSeer datasets are illustrated in Figure~\ref{fig:lambda}.  

From our analysis, we note that the proposed \method is not highly sensitive to the $\lambda$ within the selected regions. Moreover, for the APPNP model, the best $\lambda$ is higher than that for the GCN model, which aligns with our discussion in Section 3 that the decoupled GNNs require a larger $\lambda$ to encode the global graph structure. The results for hyperparameter $c$ can be found in Appendix F with similar observations.

% \vspace{-0.2in}
\begin{figure} [htb]
    \centering
    \label{fig:lambda}
    \subfigure[\centering $\method_\text{APPNP}$]{{\includegraphics[width=0.48\linewidth]{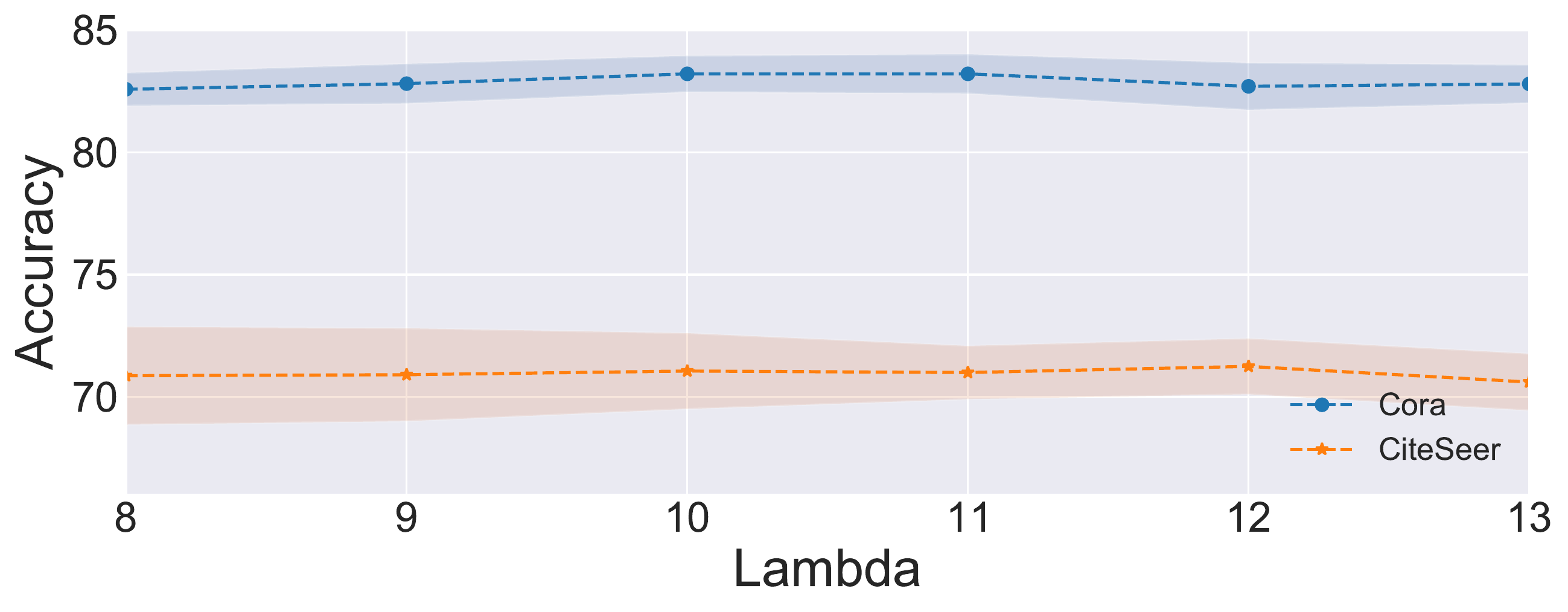}}}
    \hfill
    \subfigure[\centering $\method_\text{GCN}$]{{\includegraphics[width=0.48\linewidth]{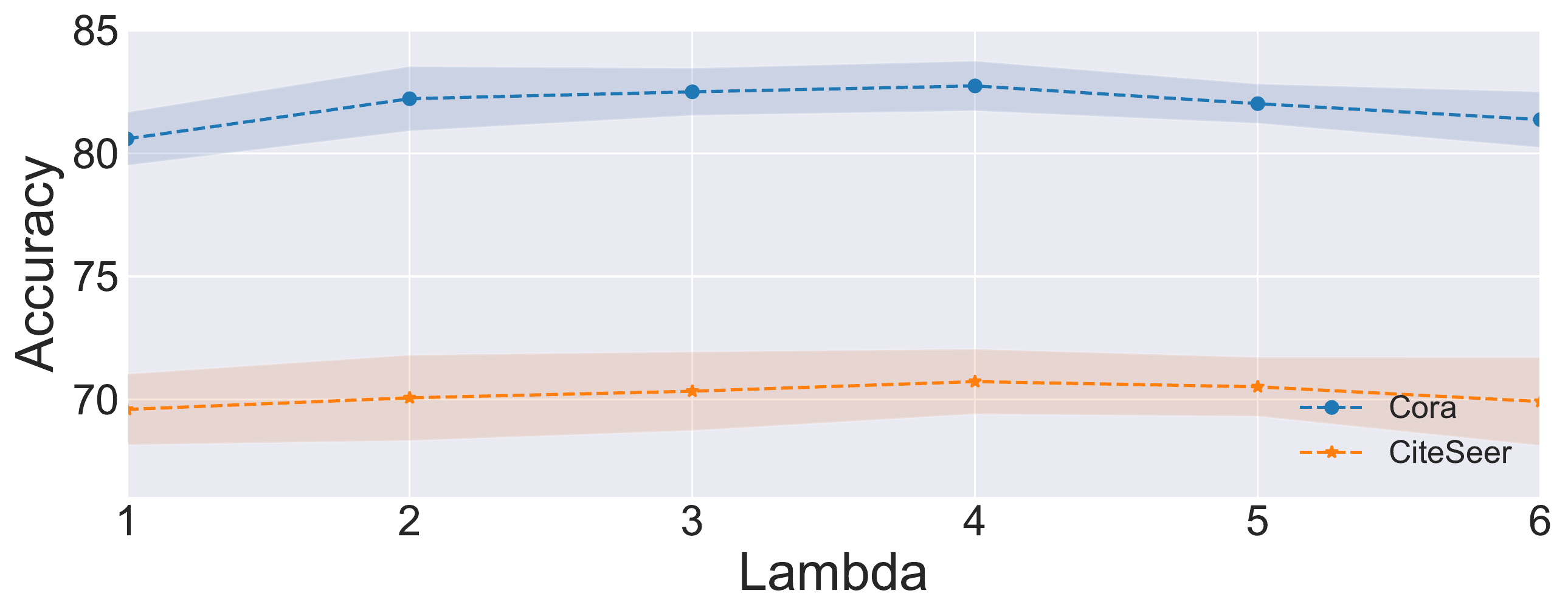} }}
    % \vspace{-0.3in}
    \vspace{-0.1in}
    \caption{The accuracy of different $\lambda$ for $\method_\text{APPNP}$ and $\method_\text{GCN}$ on Cora and CiteSeer datasets.}
    \vspace{-0.2in}
\end{figure}

\section{Related Work}
\vspace{-0.1in}
Graph Neural Networks (GNNs) serve as an effective framework for representing graph-structured data, primarily employing two operators: feature transformation and propagation. The ordering of these operators classifies most GNNs into two categories: Coupled and Decoupled GNNs. Coupled GNNs, such as GCN~\citep{kipf2016semi}, GraphSAGE~\citep{hamilton2017inductive}, and GAT~\citep{velivckovic2017graph}, entwine feature transformation and propagation within each layer.
In contrast, recent models like APPNP~\citep{gasteiger2018predict} represent Decoupled GNNs~\citep{liu2021elastic, liu2020towards, zhou2021dirichlet} that separate transformation and propagation. 
While Graph Neural Networks (GNNs) have achieved notable success across a range of domains~\citep{wu2020comprehensive}, they often harbor various biases tied to node features and graph topology~\citep{dai2022comprehensive}. For example, GNNs may generate predictions skewed by sensitive node features~\citep{dai2021say, agarwal2021towards}, leading to potential unfairness in diverse tasks such as recommendations~\citep{buyl2020debayes} and loan fraud detection~\citep{xu2021towards}. Numerous studies have proposed different methods to address feature bias, including adversarial training~\citep{dai2021say, dong2022edits,masrour2020bursting}, and fairness constraints~\citep{agarwal2021towards, dai2022learning, kang2020inform}.
Structural bias is another significant concern, where low-degree nodes are more likely to be falsely predicted by GNNs~\citep{tang2020investigating}. Recently, there are several works aimed to mitigate the degree bias issue~\citep{kang2022rawlsgcn,liu2023generalized,liang2022resnorm}. Distinct from these previous studies, our work identifies a new form of bias - label position bias, which is prevalent in GNNs. To address this, we propose a novel method, \method, specifically designed to alleviate the label position bias.

\section{Conclusion and Limitation}
\vspace{-0.1in}
In this study, we shed light on a previously unexplored bias in GNNs, the label position bias, which suggests that nodes closer to labeled nodes typically yield superior performance. To quantify this bias, we introduce a new metric, the Label Proximity Score, which proves to be a more intrinsic measure. To combat this prevalent issue, we propose a novel optimization framework, \method, to learn an unbiased graph structure. Our extensive experimental evaluation shows that \method not only outperforms standard methods but also significantly alleviates the label position bias in GNNs. In our current work, we address the label position bias only from a structure learning perspective. Future research could incorporate feature information, which might lead to improved performance. Besides, we have primarily examined homophily graphs. It would be interesting to investigate how label position bias affects heterophily graphs.
We hope this work will stimulate further research and development of methods aimed at enhancing label position fairness in GNNs.

\bibliographystyle{unsrtnat}
\bibliography{Reference}

\newpage
\appendix

\renewcommand \thepart{} % make "Part" text invisible
\renewcommand \partname{}
\part{Appendix}
\section{Preliminary Study}

In this section, we present a comprehensive set of experimental results, showcasing the performance disparity across various GNNs concerning three metrics related to Label Distance Bias: Degree, Shortest Path Distance, and Label Proximity Score.

\textbf{Datasets.} We selected three representative datasets for our experiments: Cora, CiteSeer, and PubMed. For each of these datasets, we worked with three different labeling rates: 5 labels per class, 20 labels per class, and 60\% labels per class. For the data splits consisting of 5 and 20 labels per class, we adopted a commonly used setting~\citep{fey2019fast} that randomly selects 500 nodes for validation and 1000 labels for testing. When dealing with a labeling rate of 60\%, we randomly selected 20\% of nodes for validation and another 20\% for testing.

\textbf{Models.} Our study also incorporates three representative models: APPNP~\citep{gasteiger2018predict}, GCN~\citep{kipf2016semi}, and Label Propagation~\citep{zhou2003learning}. APPNP, a decoupled GNN, directly leverages the PPR matrix for feature propagation. On the other hand, GCN, a coupled GNN, uses the original adjacency matrix for feature propagation across each layer. Label Propagation relies solely on graph structure and labeled nodes for prediction. For all the models, we select their best hyperparameters based on the search space in their original papers.

\textbf{Experimental Setup.} For both Degree and Shortest Path Distance metrics, we employ their actual values, [1, 2, 3, 4, 5, 6, 7], to segregate nodes into separate sensitive groups, considering only a handful of nodes possess a degree or shortest path Distance greater than seven. We delete the groups with only a few nodes. For the Label Proximity Score (LPS), we divided the test nodes evenly into seven sensitive groups based on their LPS, each group possessing a uniform range. It's important to note that we also filtered out outliers, particularly those with significantly larger LPS than the rest. Additionally, if a group contained only a few nodes, we merged it with adjacent groups.

\subsection{Results with 20 labeled nodes per class}
\FloatBarrier
\begin{figure}[!htb]
    \centering
    \vspace{-0.1in}
    \subfigure[\centering Degree]{{\includegraphics[width=0.3\linewidth]{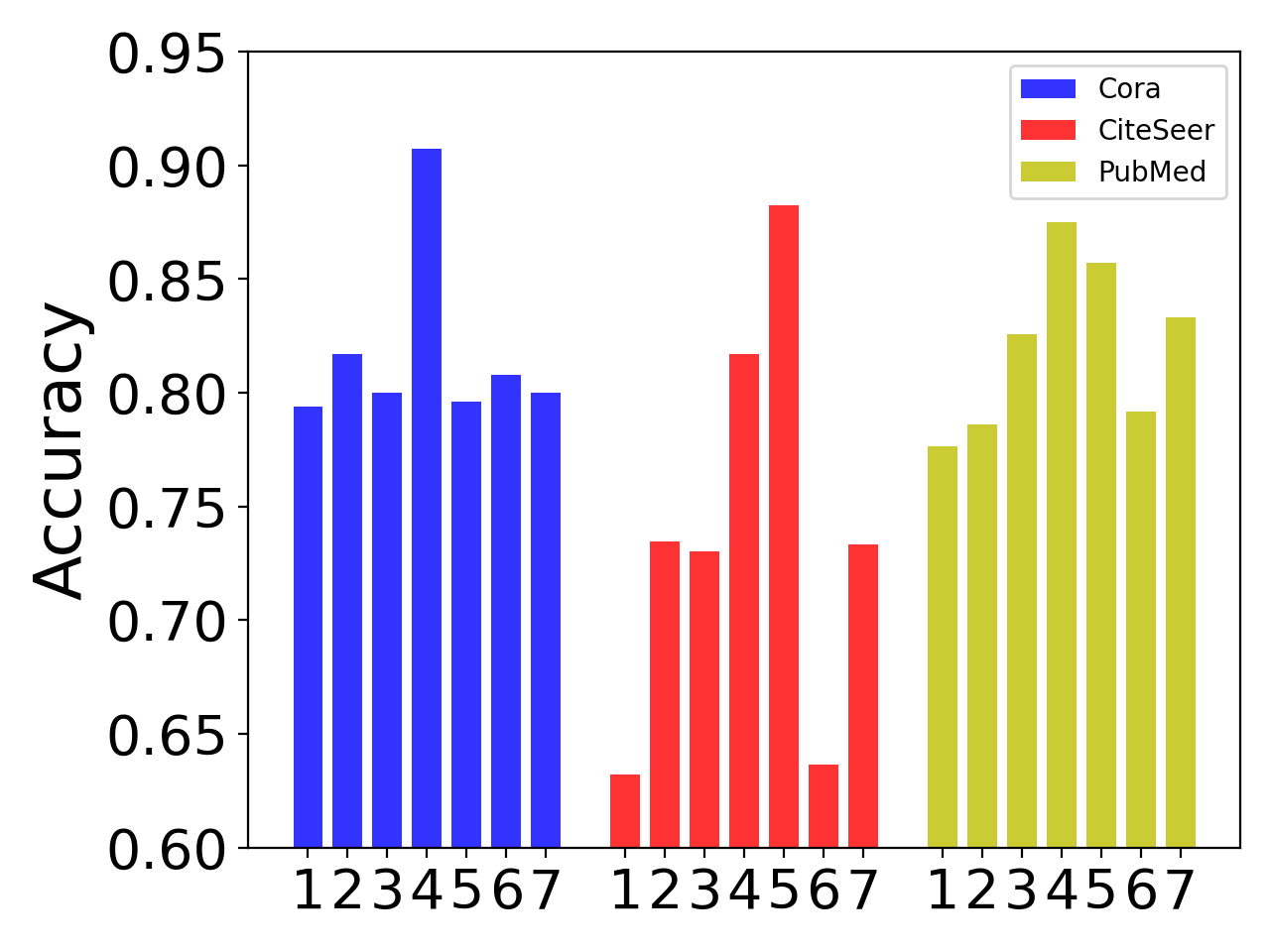}}}
    \hfill
    \subfigure[\centering Shortest Path Distance]{{\includegraphics[width=0.3\linewidth]{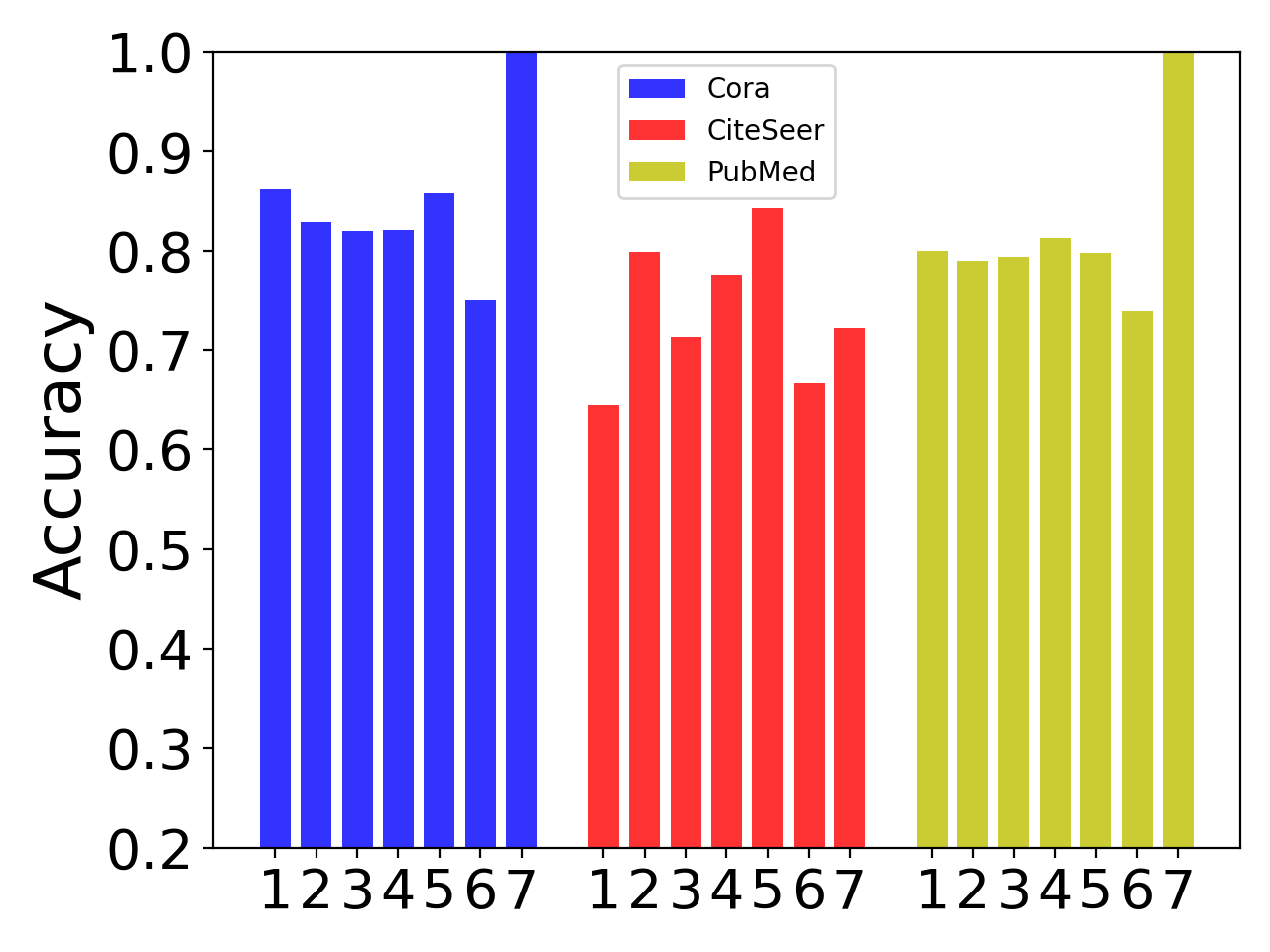} }}
    \hfill
    \subfigure[\centering Label Proximity  Score]{{\includegraphics[width=0.3\linewidth]{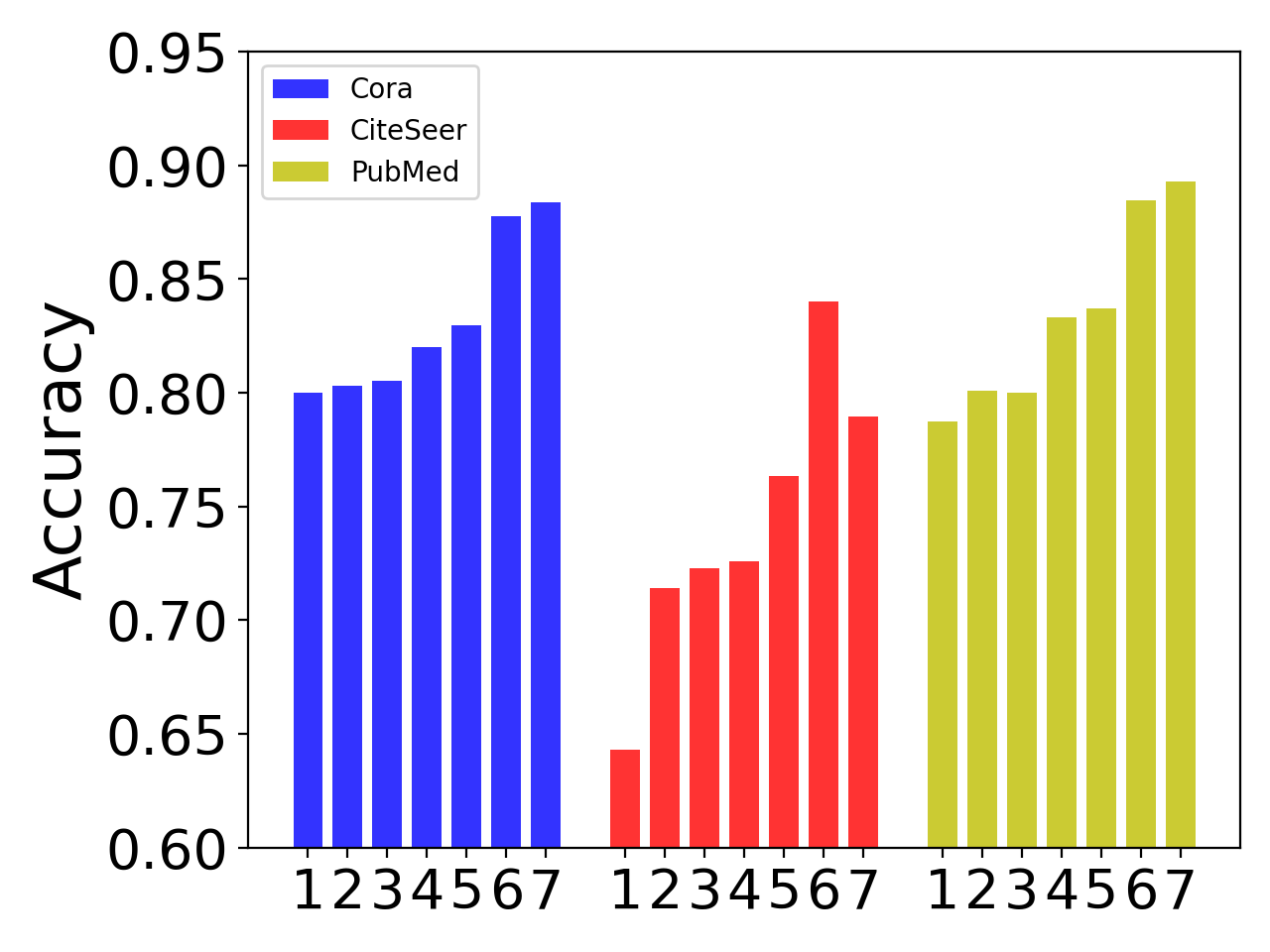} }}
    \vspace{-0.1in}
    \caption{APPNP with 20 labeled nodes per class.}
    \label{fig:APPNP20_bias}
    % \vspace{-0.2in}
\end{figure}

\begin{figure}[!htb]
    \centering
    \subfigure[\centering Degree]{{\includegraphics[width=0.3\linewidth]{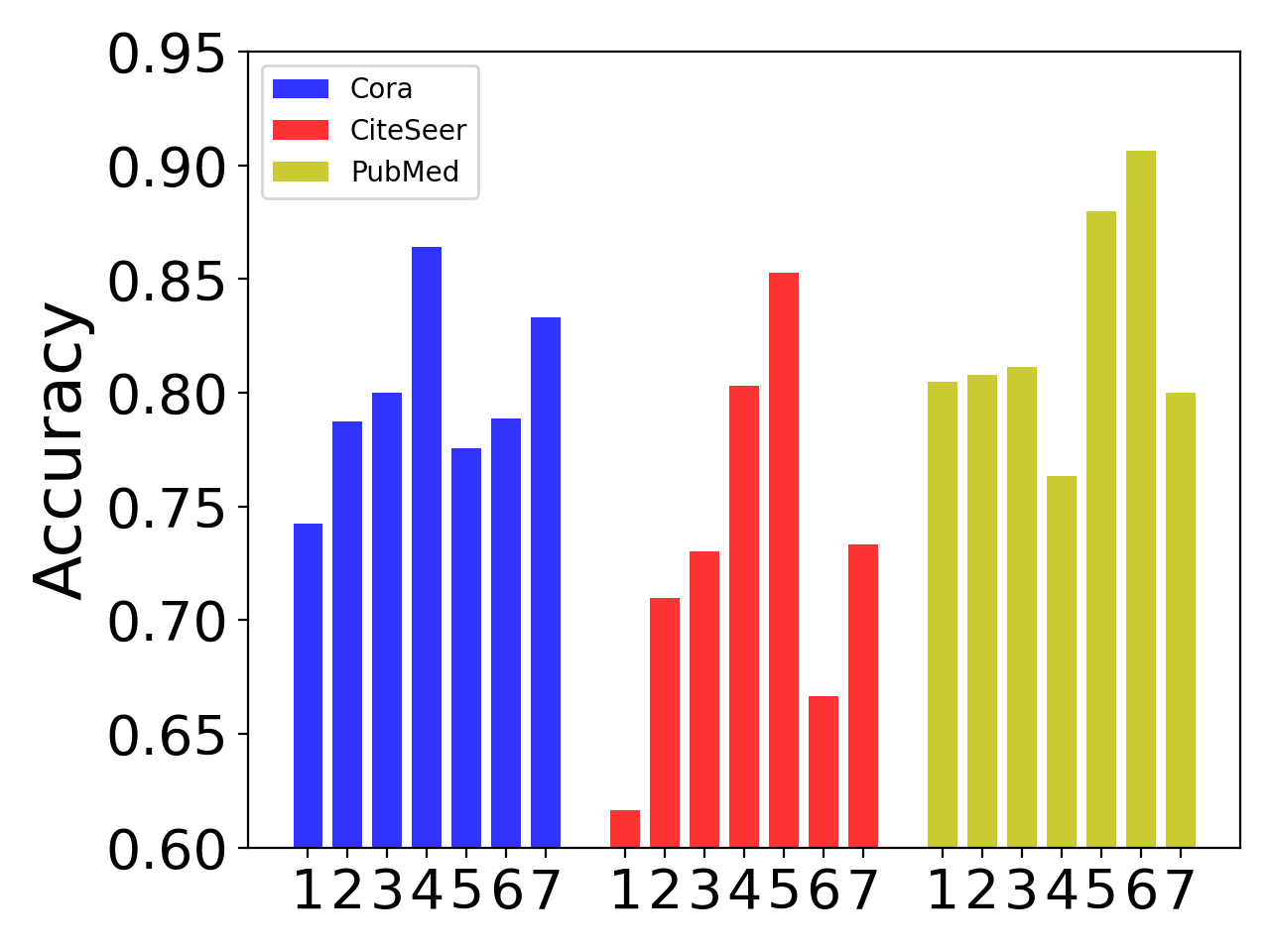}}}
    \hfill
    \subfigure[\centering Shortest Path Distance]{{\includegraphics[width=0.3\linewidth]{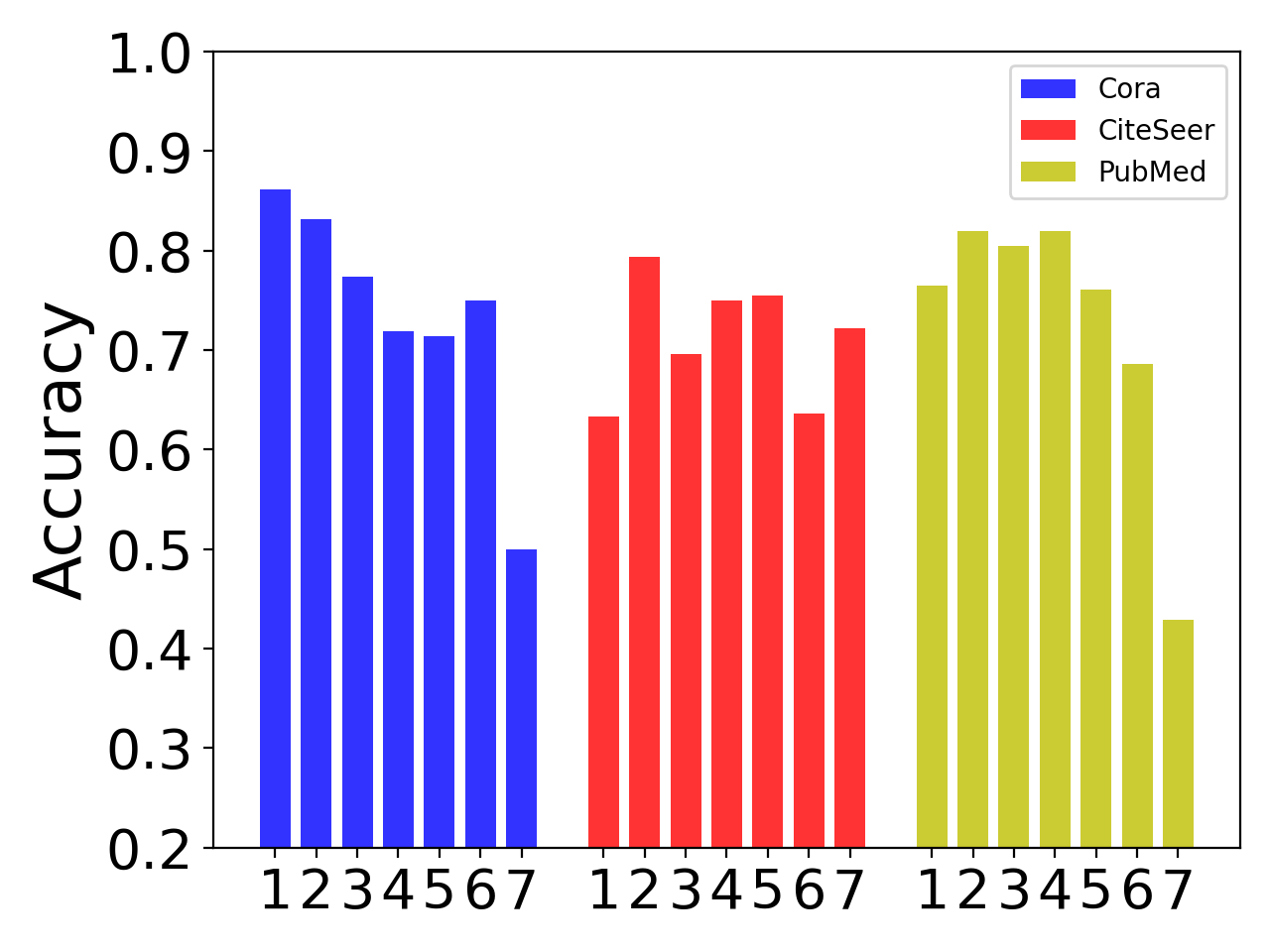} }}
    \hfill
    \subfigure[\centering Label Proximity Score]{{\includegraphics[width=0.3\linewidth]{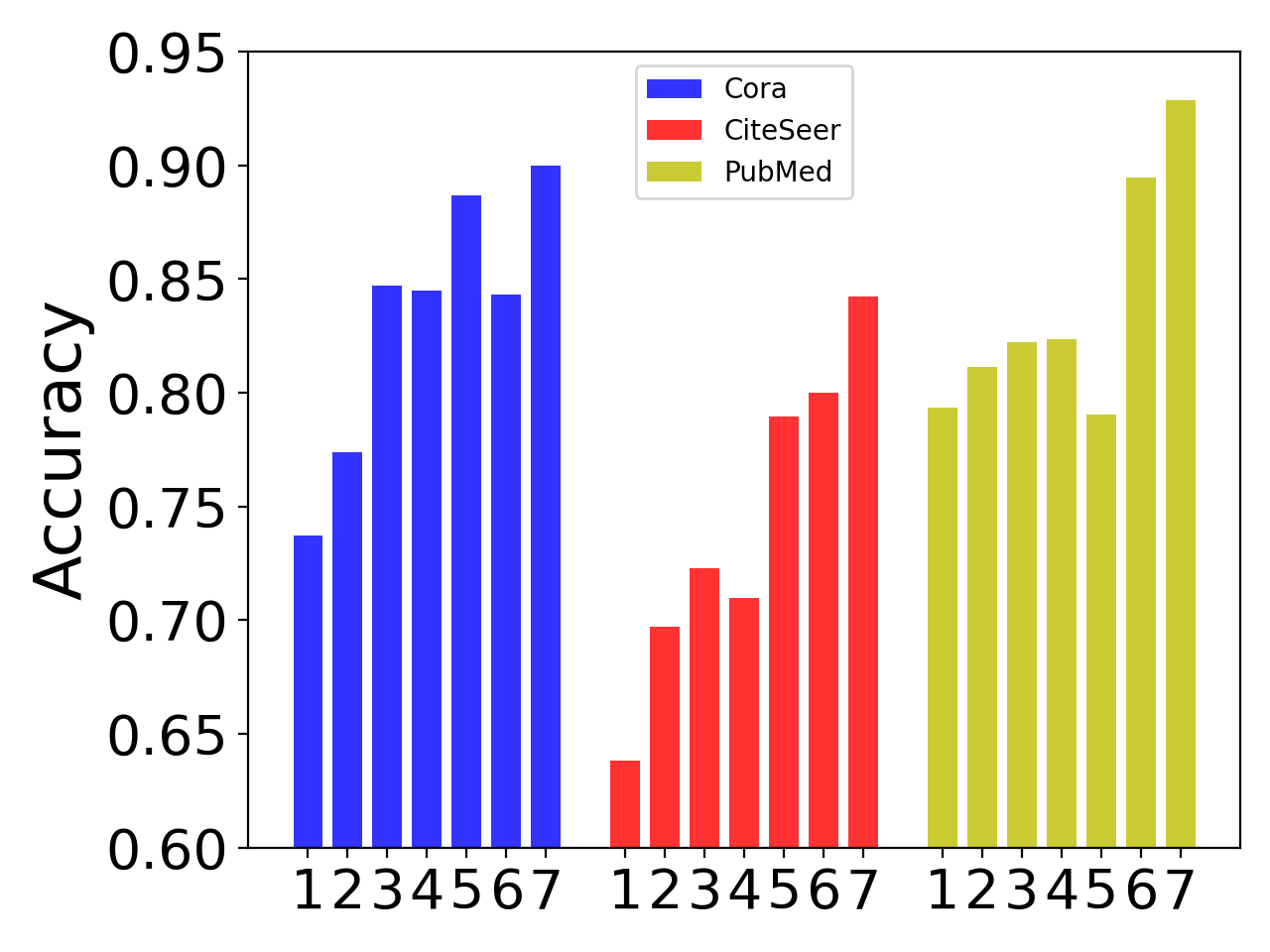} }}
    \vspace{-0.1in}
    \caption{GCN with 20 labeled nodes per class.}
    \label{fig:GCN20_bias}
    % \vspace{-0.2in}
\end{figure}

\begin{figure}[!htb]
    \centering
    \subfigure[\centering Degree]{{\includegraphics[width=0.3\linewidth]{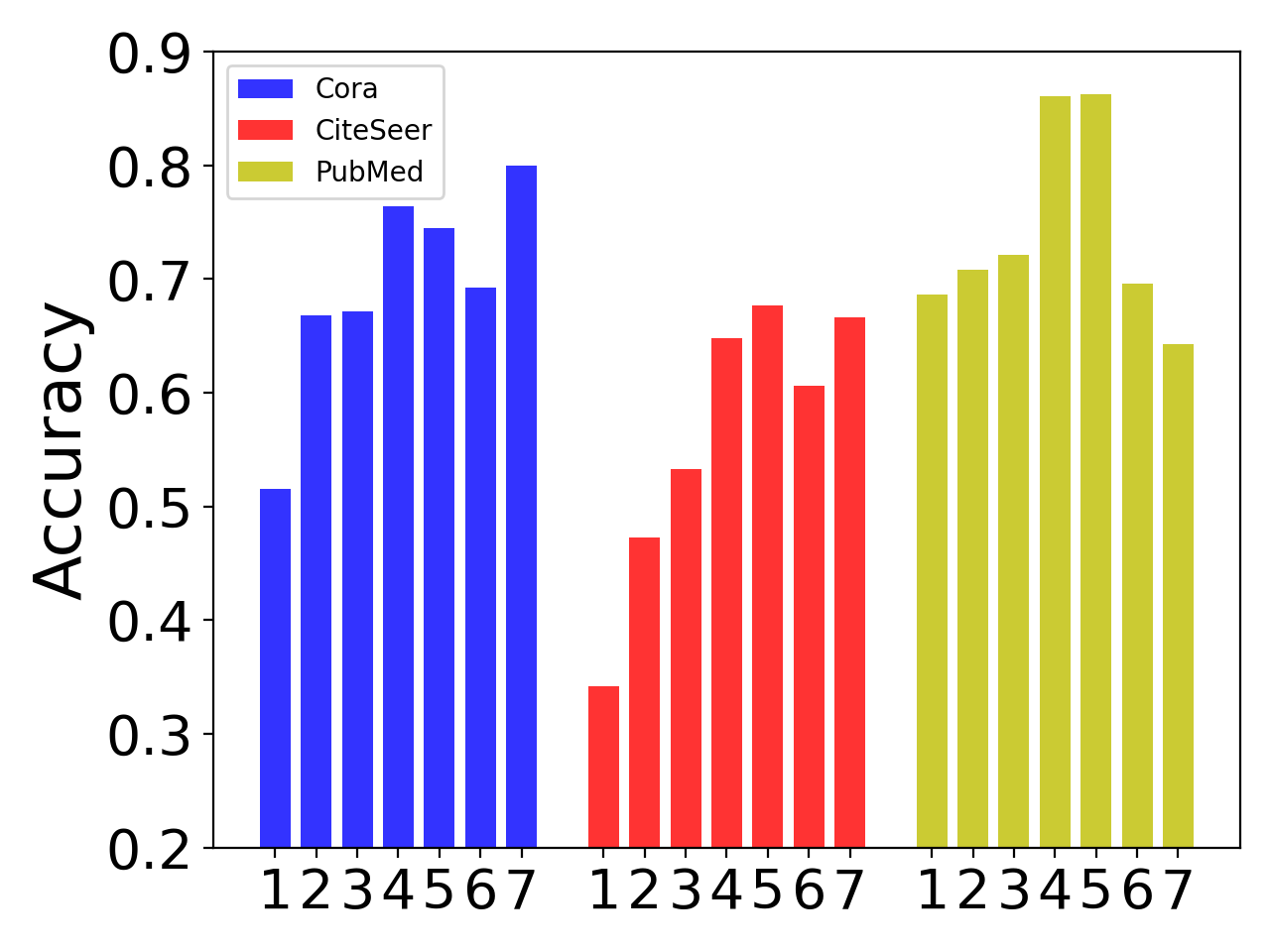}}}
    \hfill
    \subfigure[\centering Shortest Path Distance]{{\includegraphics[width=0.3\linewidth]{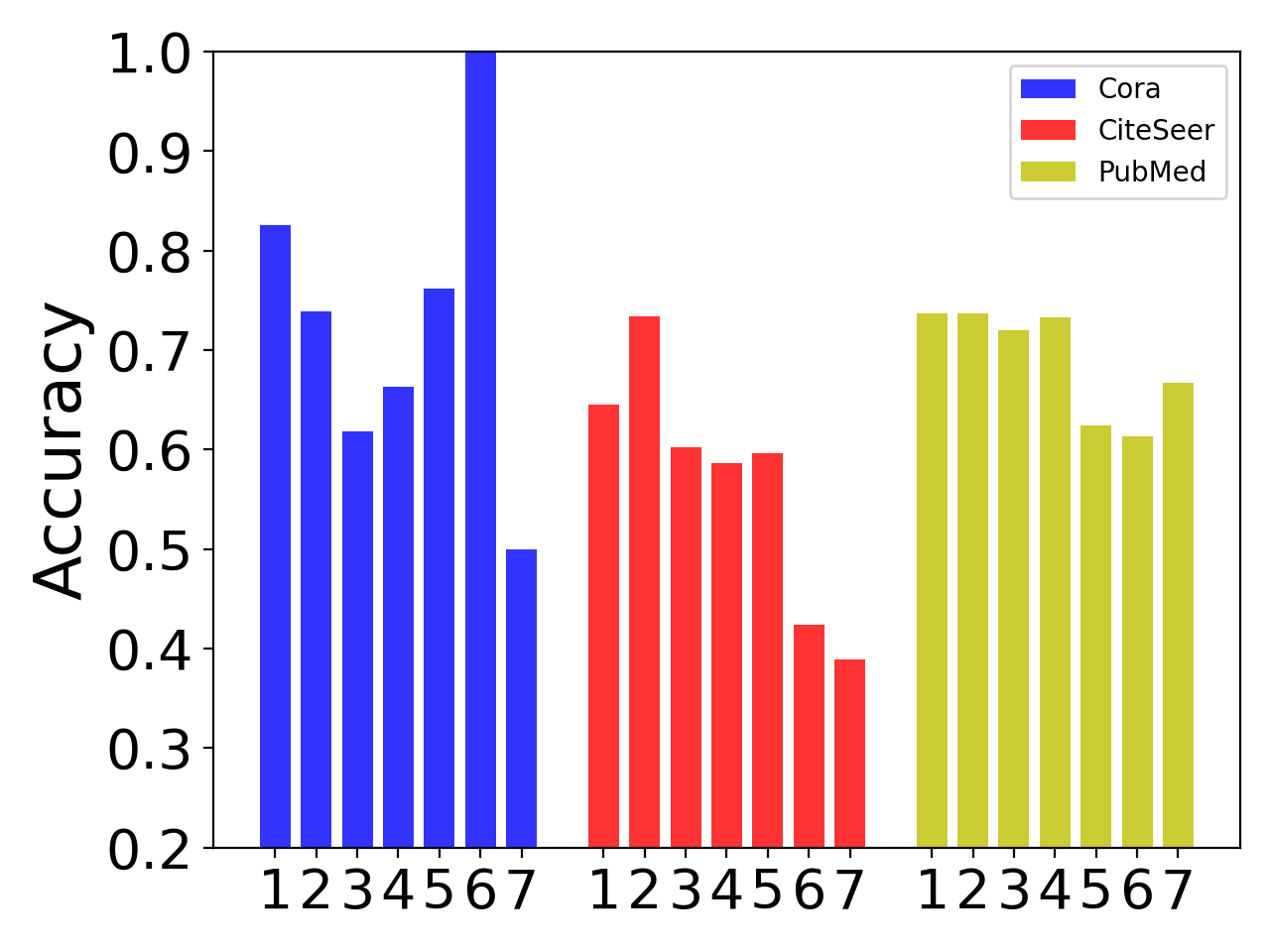} }}
    \hfill
    \subfigure[\centering Label Proximity Score]{{\includegraphics[width=0.3\linewidth]{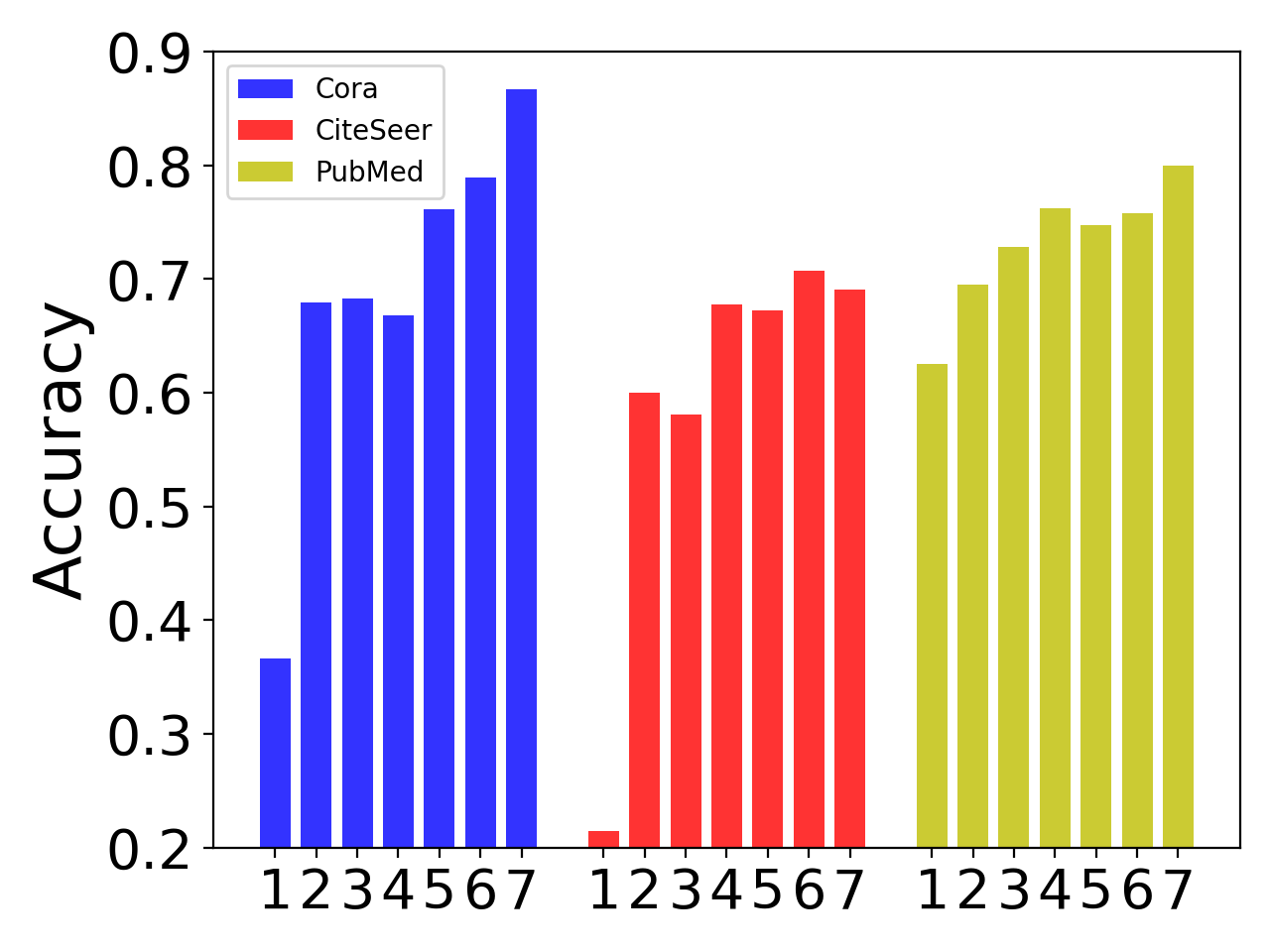} }}
    \vspace{-0.1in}
    \caption{LP with 20 labeled nodes per class.}
    \label{fig:LP20_bias}
    % \vspace{-0.1in}
\end{figure}

\subsection{Results with 5 labeled nodes per class}
% \FloatBarrier
\begin{figure}[!htb]
    \centering
    \vspace{-0.1in}
    \subfigure[\centering Degree]{{\includegraphics[width=0.3\linewidth]{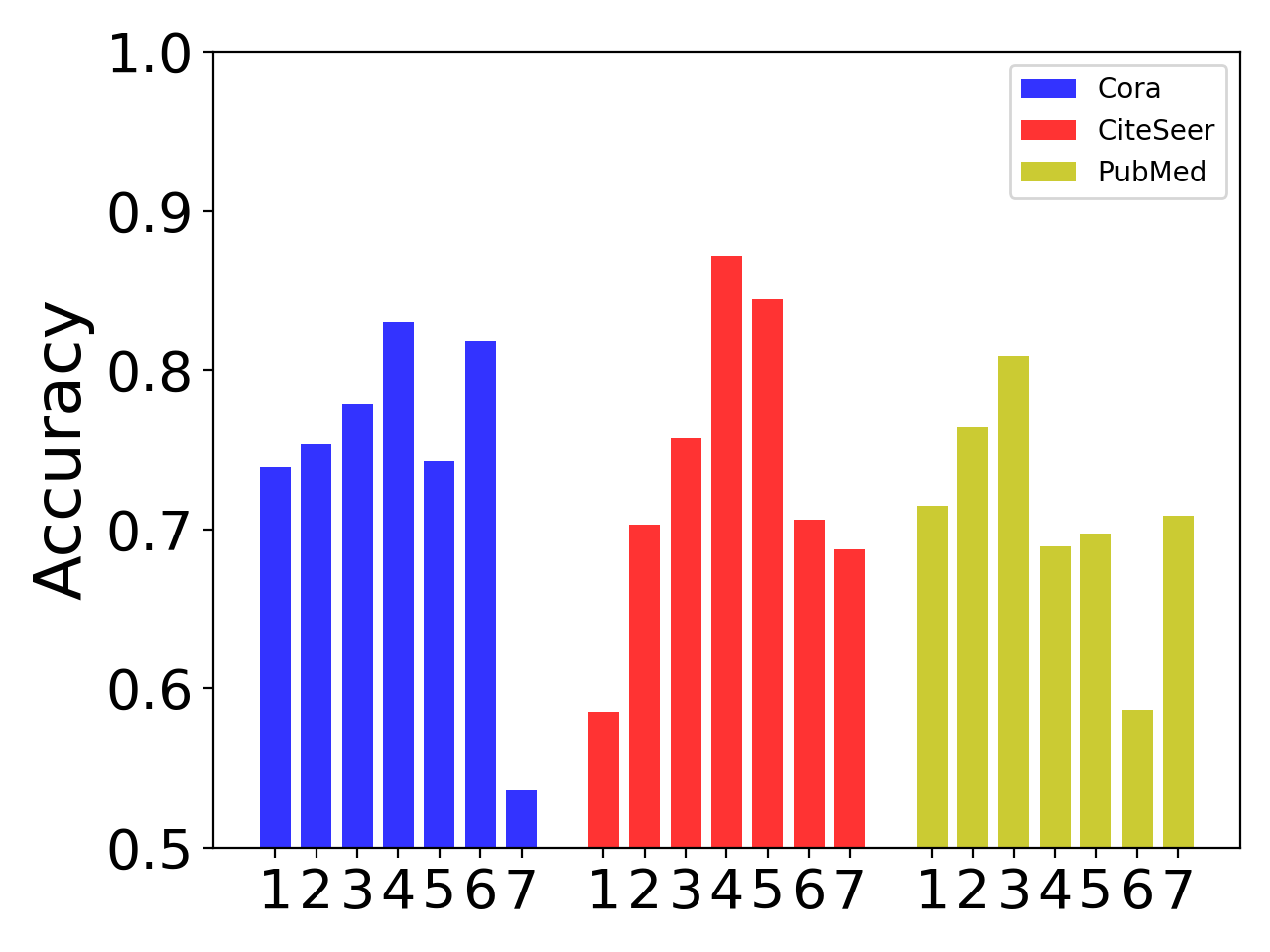}}}
    \hfill
    \subfigure[\centering Shortest Path Distance]{{\includegraphics[width=0.3\linewidth]{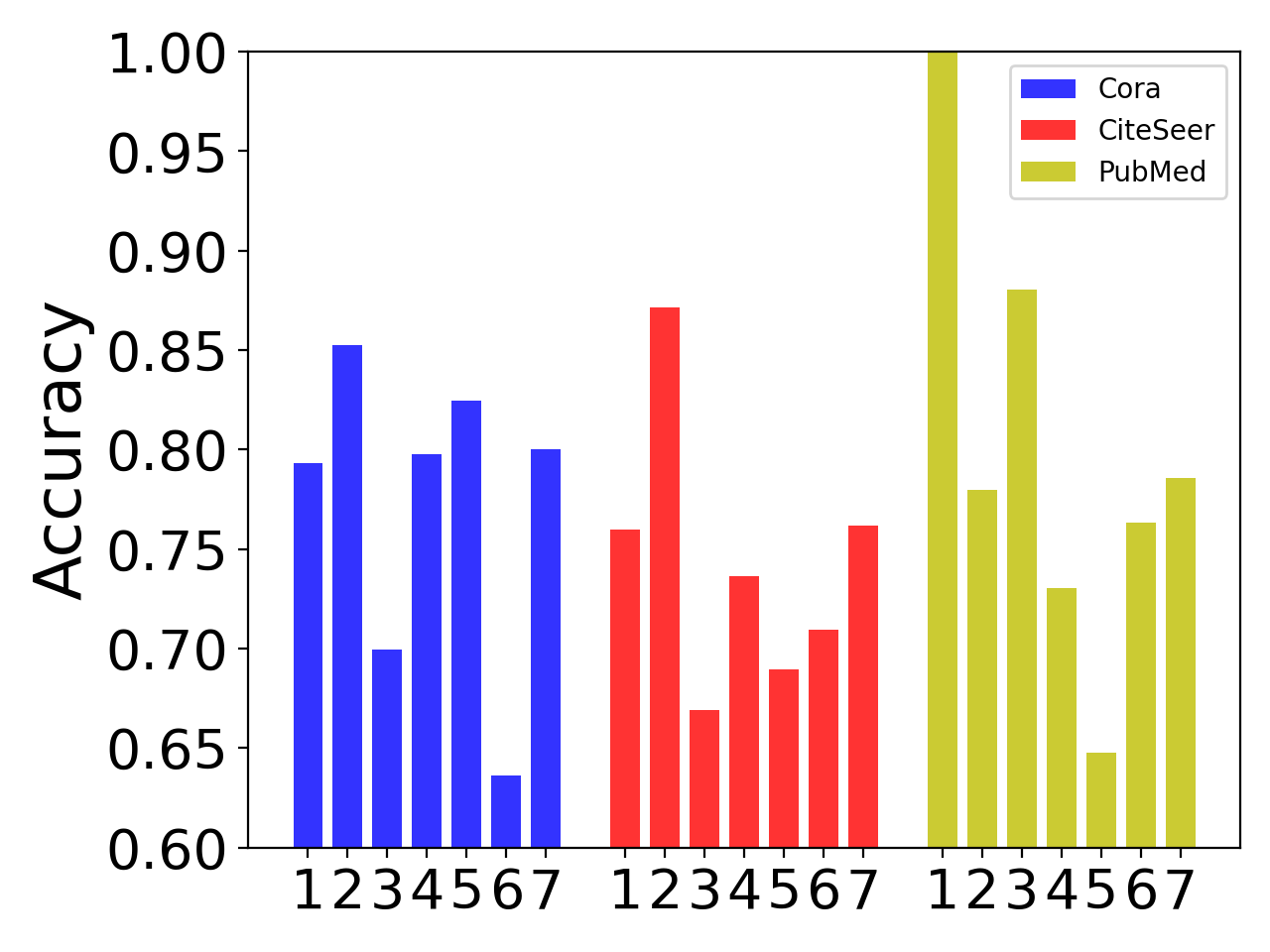} }}
    \hfill
    \subfigure[\centering Label Proximity  Score]{{\includegraphics[width=0.3\linewidth]{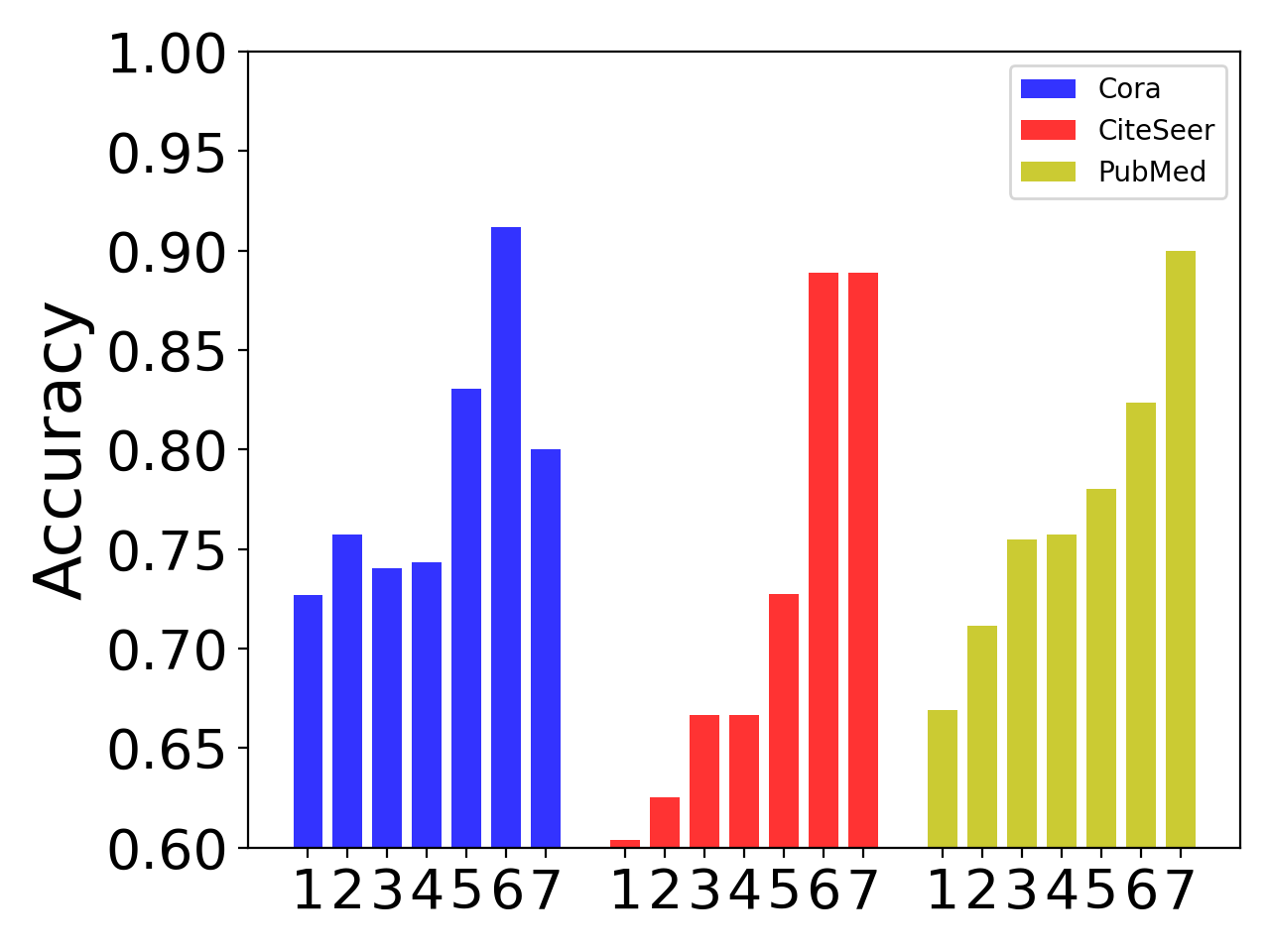} }}
    \vspace{-0.1in}
    \caption{APPNP with 5 labeled nodes per class.}
    \label{fig:APPNP5_bias}
    % \vspace{-0.2in}
\end{figure}

\begin{figure}[!htb]
    \centering
    \subfigure[\centering Degree]{{\includegraphics[width=0.3\linewidth]{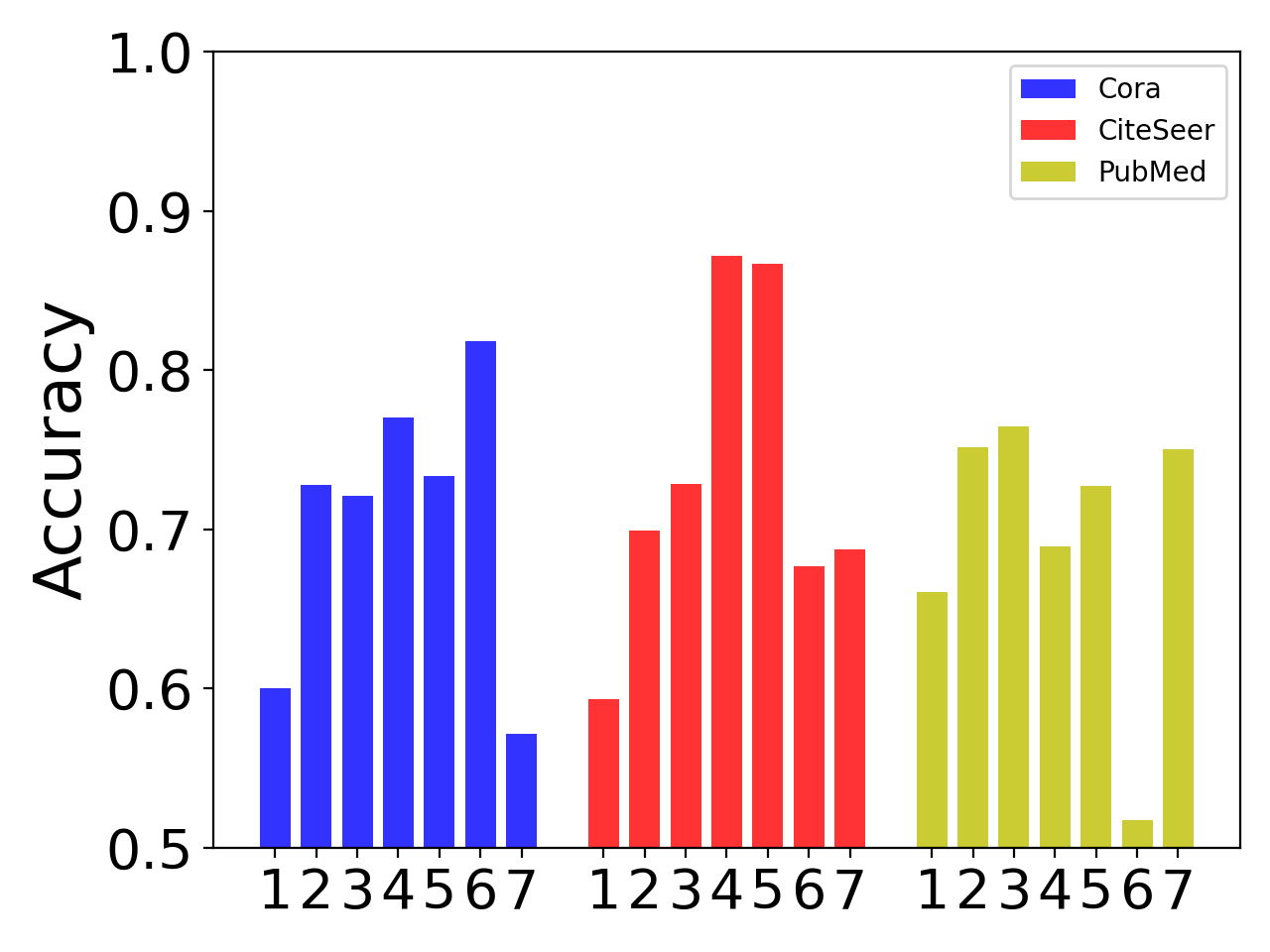}}}
    \hfill
    \subfigure[\centering Shortest Path Distance]{{\includegraphics[width=0.3\linewidth]{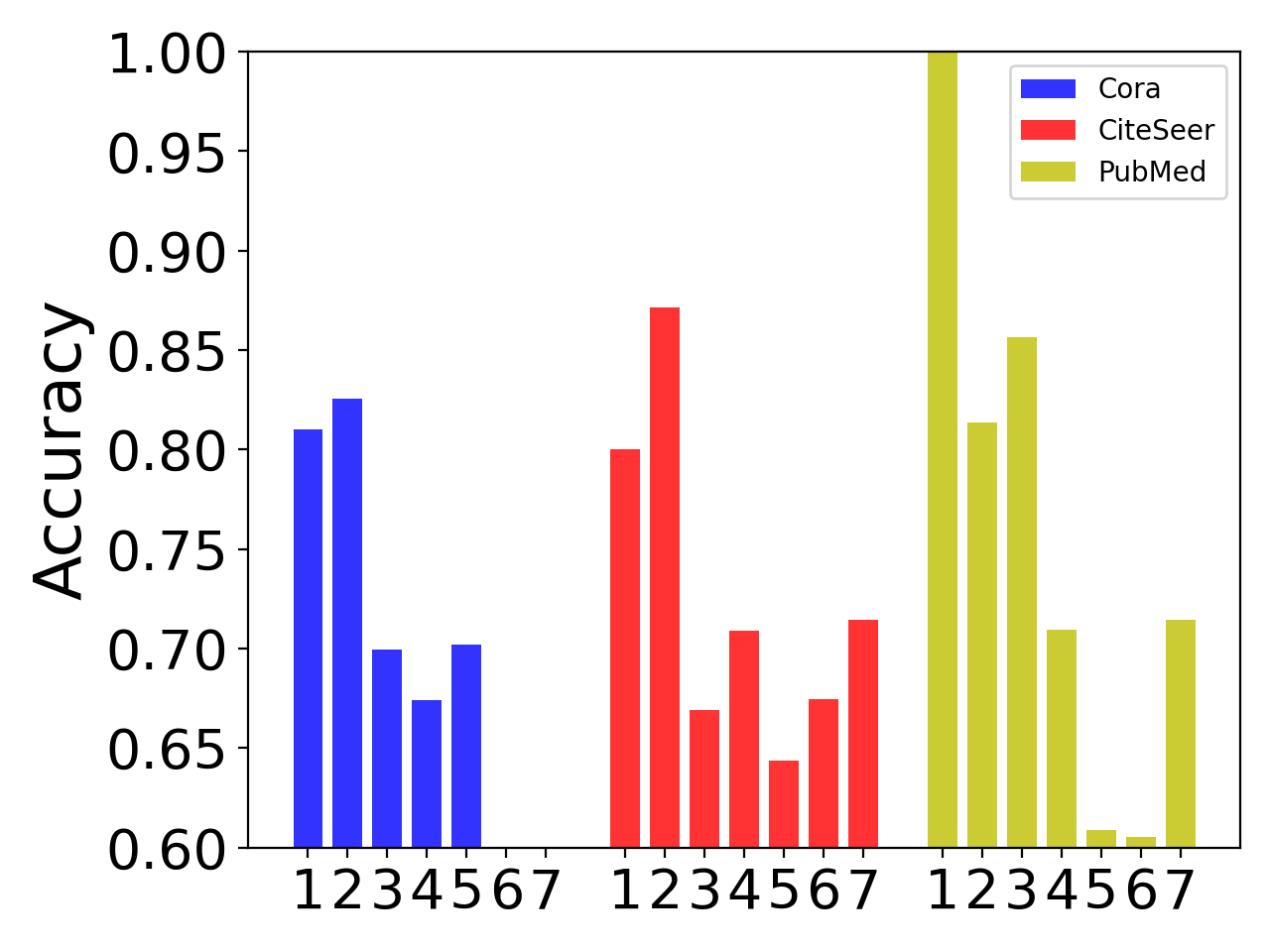} }}
    \hfill
    \subfigure[\centering Label Proximity Score]{{\includegraphics[width=0.3\linewidth]{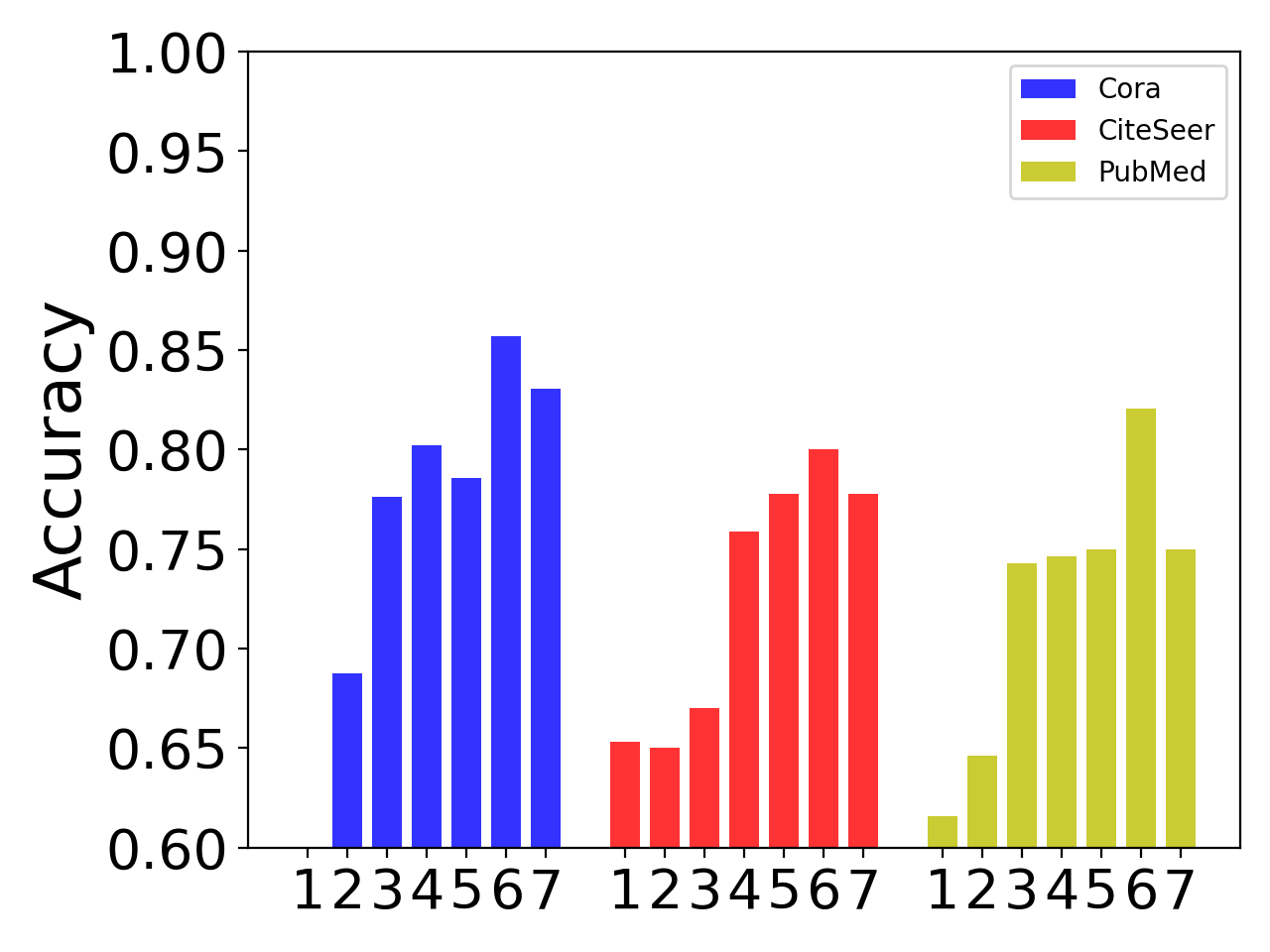} }}
    \vspace{-0.1in}
    \caption{GCN with 5 labeled nodes per class.}
    \label{fig:GCN5_bias}
    % \vspace{-0.2in}
\end{figure}

\begin{figure}[!htb]
    \centering
    \subfigure[\centering Degree]{{\includegraphics[width=0.3\linewidth]{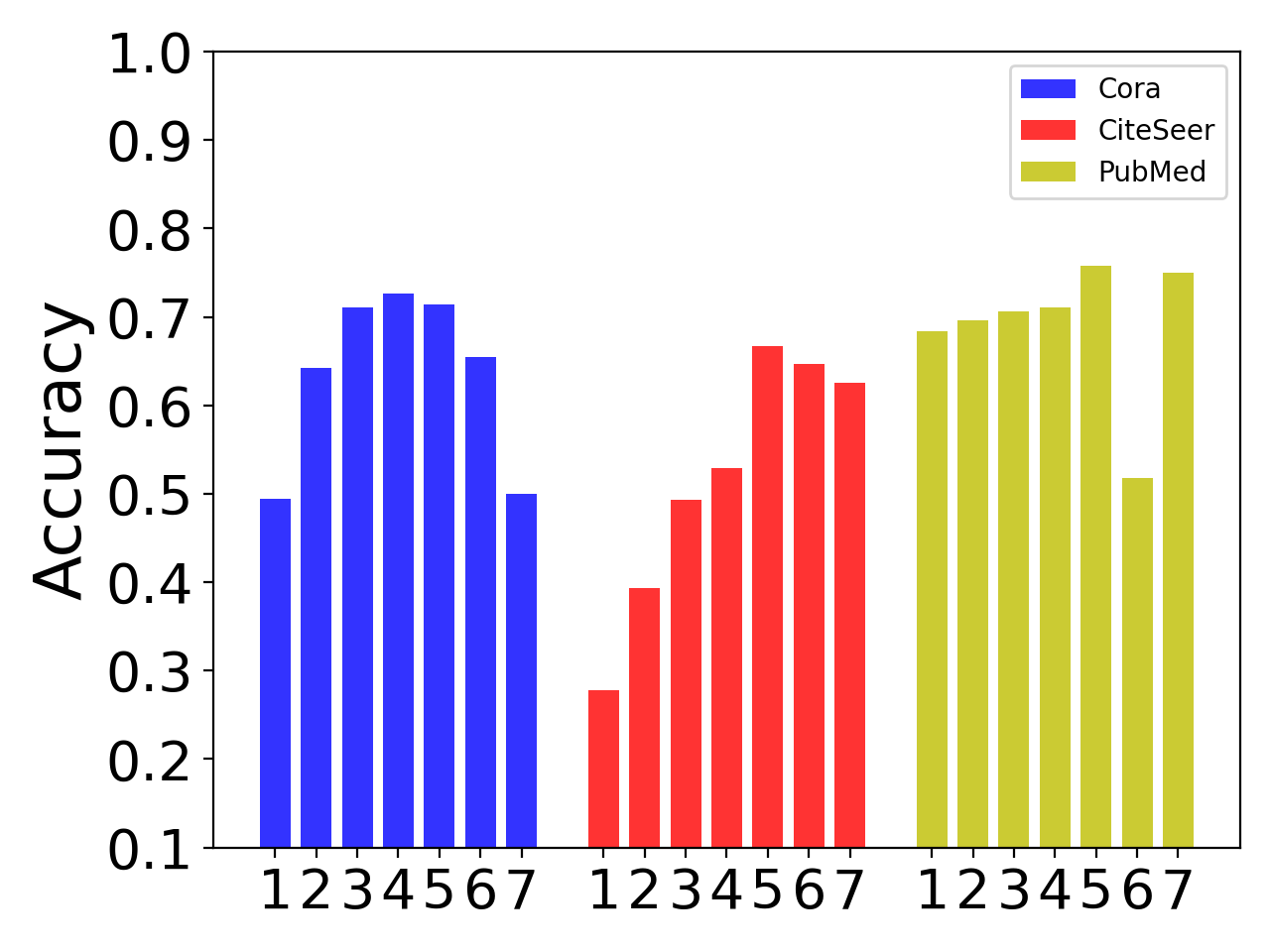}}}
    \hfill
    \subfigure[\centering Shortest Path Distance]{{\includegraphics[width=0.3\linewidth]{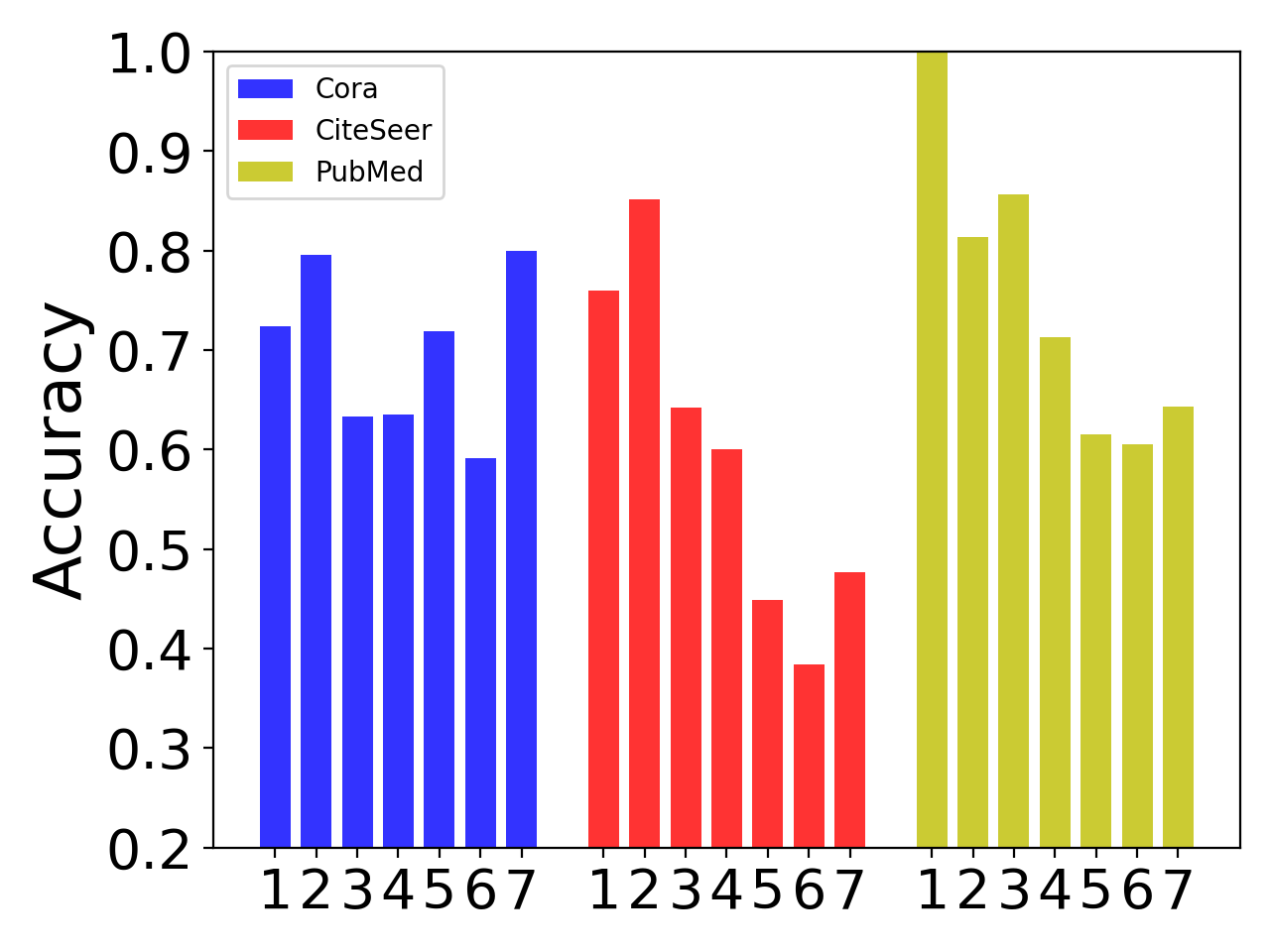} }}
    \hfill
    \subfigure[\centering Label Proximity Score]{{\includegraphics[width=0.3\linewidth]{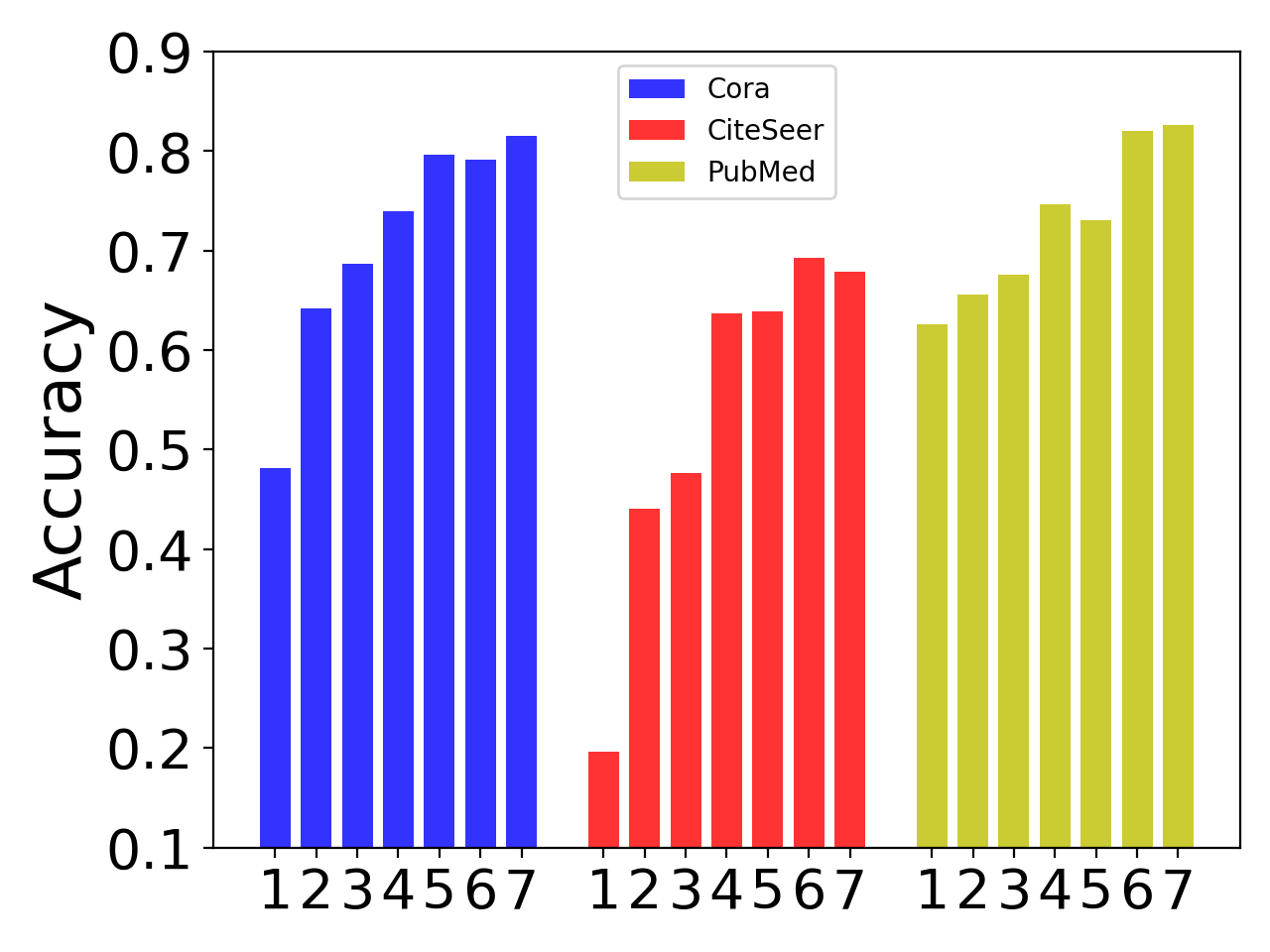} }}
    \vspace{-0.1in}
    \caption{LP with 5 labeled nodes per class.}
    \label{fig:LP5_bias}
    % \vspace{-0.1in}
\end{figure}

\subsection{Results with 60\% labeled nodes per class}
\FloatBarrier
\begin{figure}[!htb]
    \centering
    \vspace{-0.1in}
    \subfigure[\centering Degree]{{\includegraphics[width=0.3\linewidth]{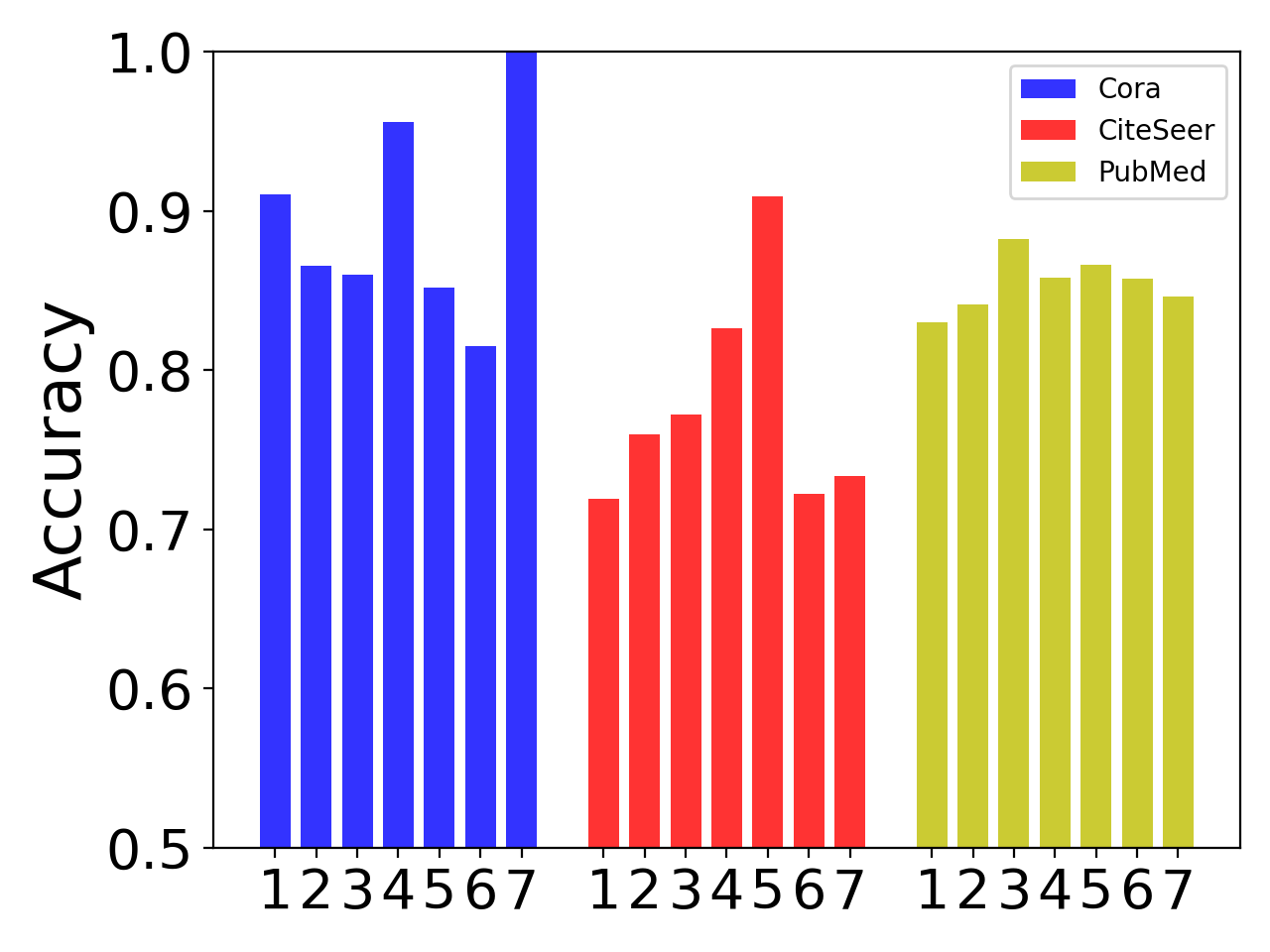}}}
    \hfill
    \subfigure[\centering Shortest Path Distance]{{\includegraphics[width=0.3\linewidth]{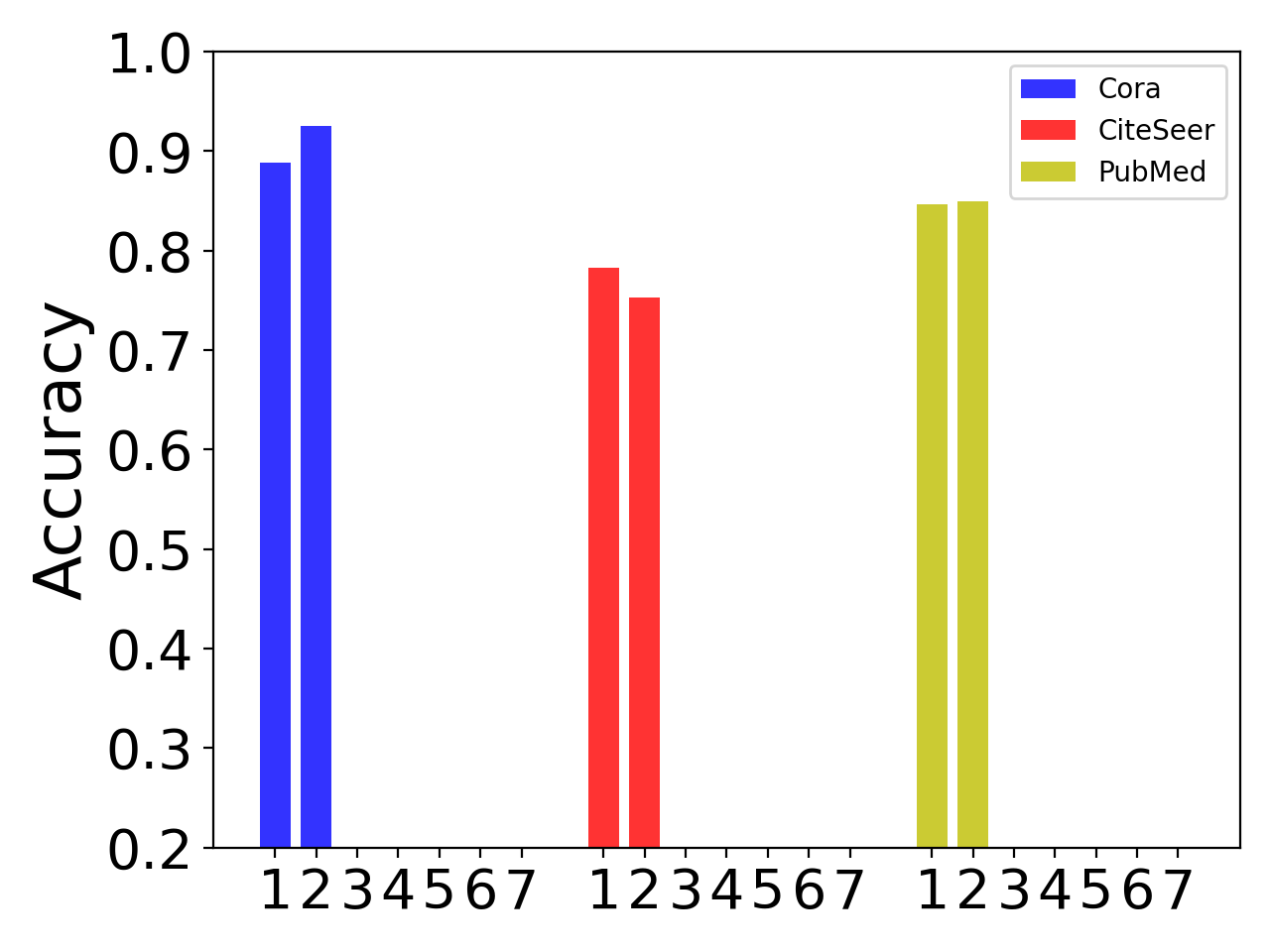} }}
    \hfill
    \subfigure[\centering Label Proximity  Score]{{\includegraphics[width=0.3\linewidth]{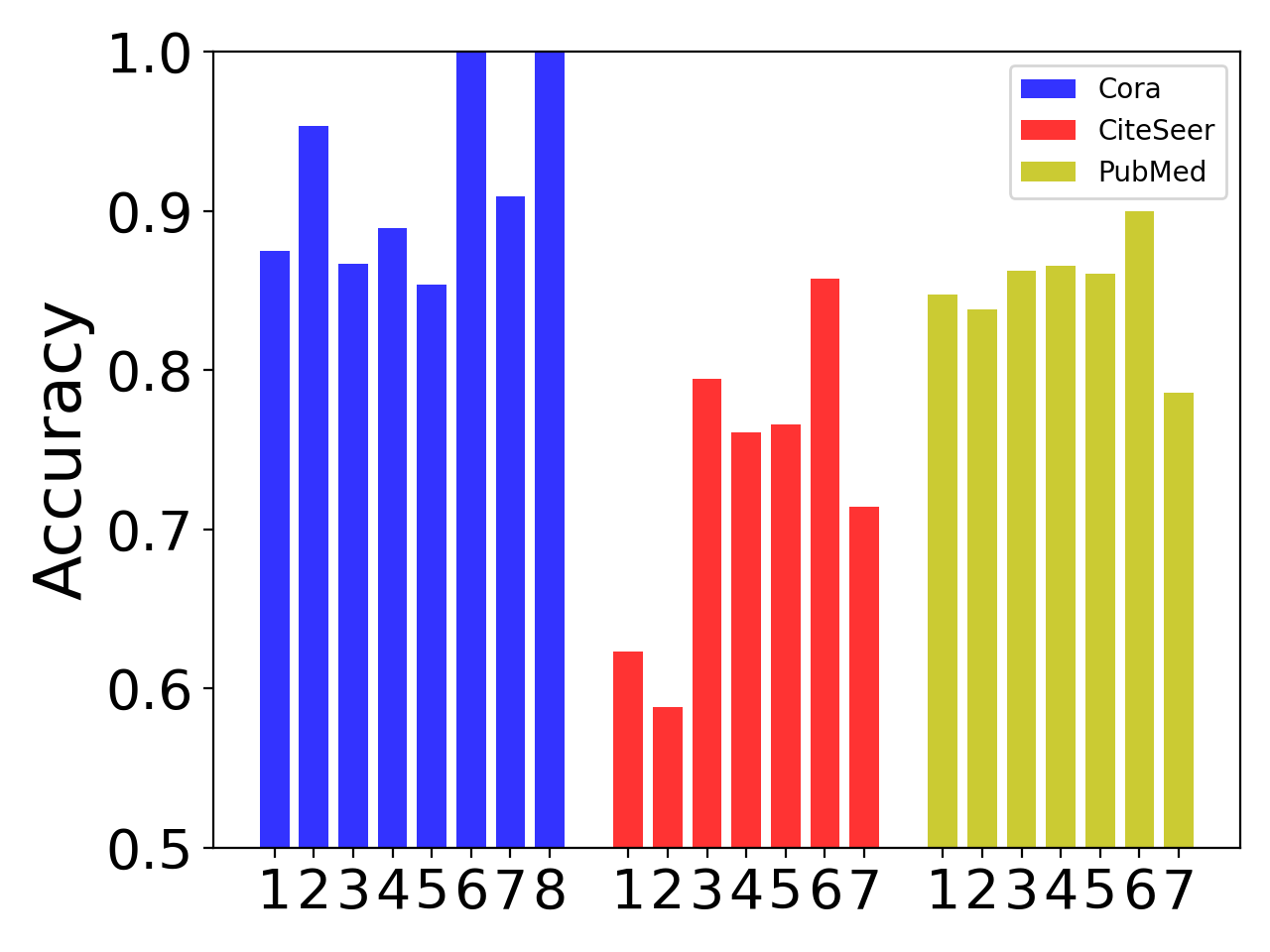} }}
    \vspace{-0.1in}
    \caption{APPNP with 60\% labeled nodes per class.}
    \label{fig:APPNP60_bias}
    % \vspace{-0.2in}
\end{figure}

\begin{figure}[!htb]
    \centering
    \subfigure[\centering Degree]{{\includegraphics[width=0.3\linewidth]{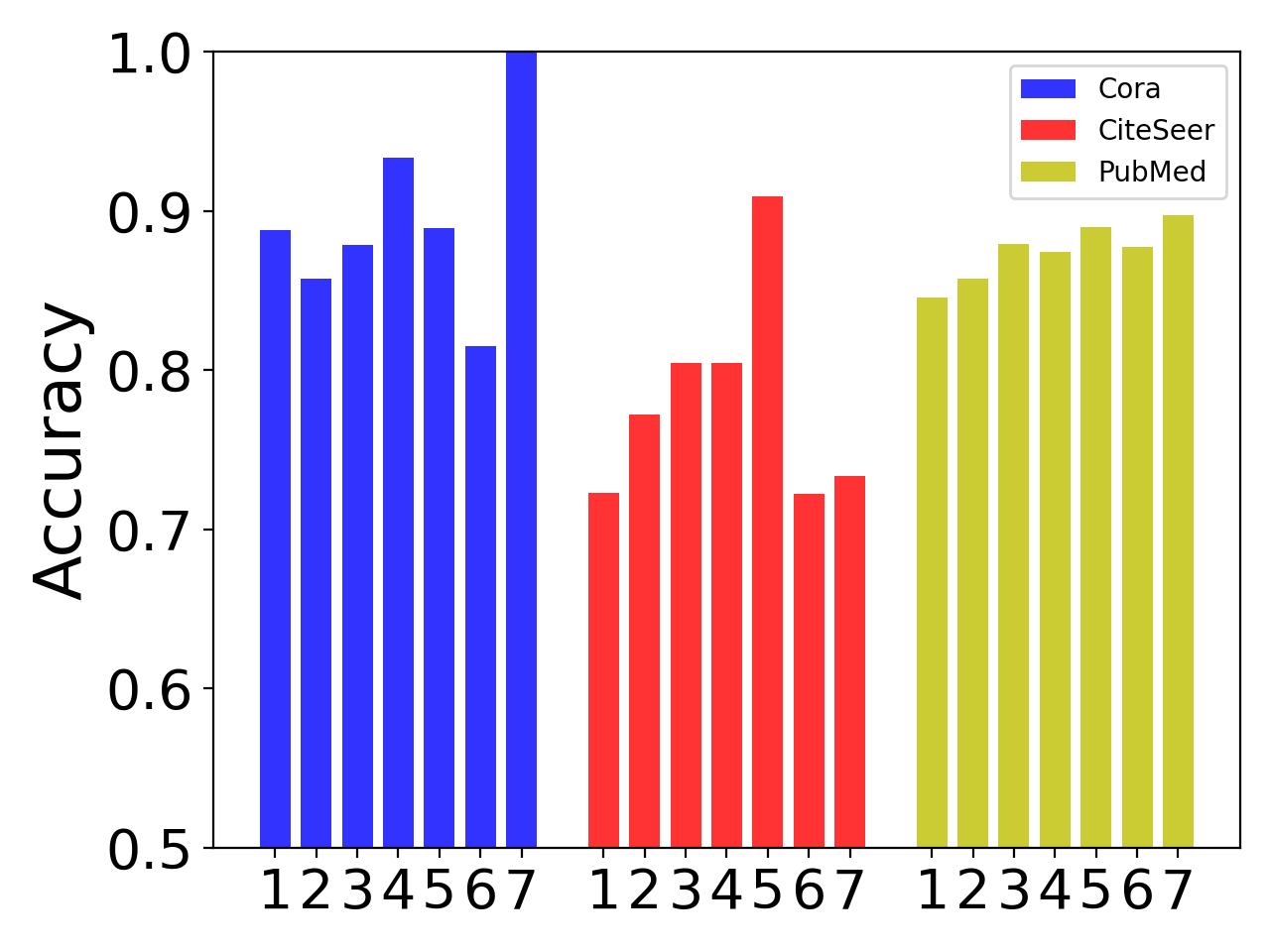}}}
    \hfill
    \subfigure[\centering Shortest Path Distance]{{\includegraphics[width=0.3\linewidth]{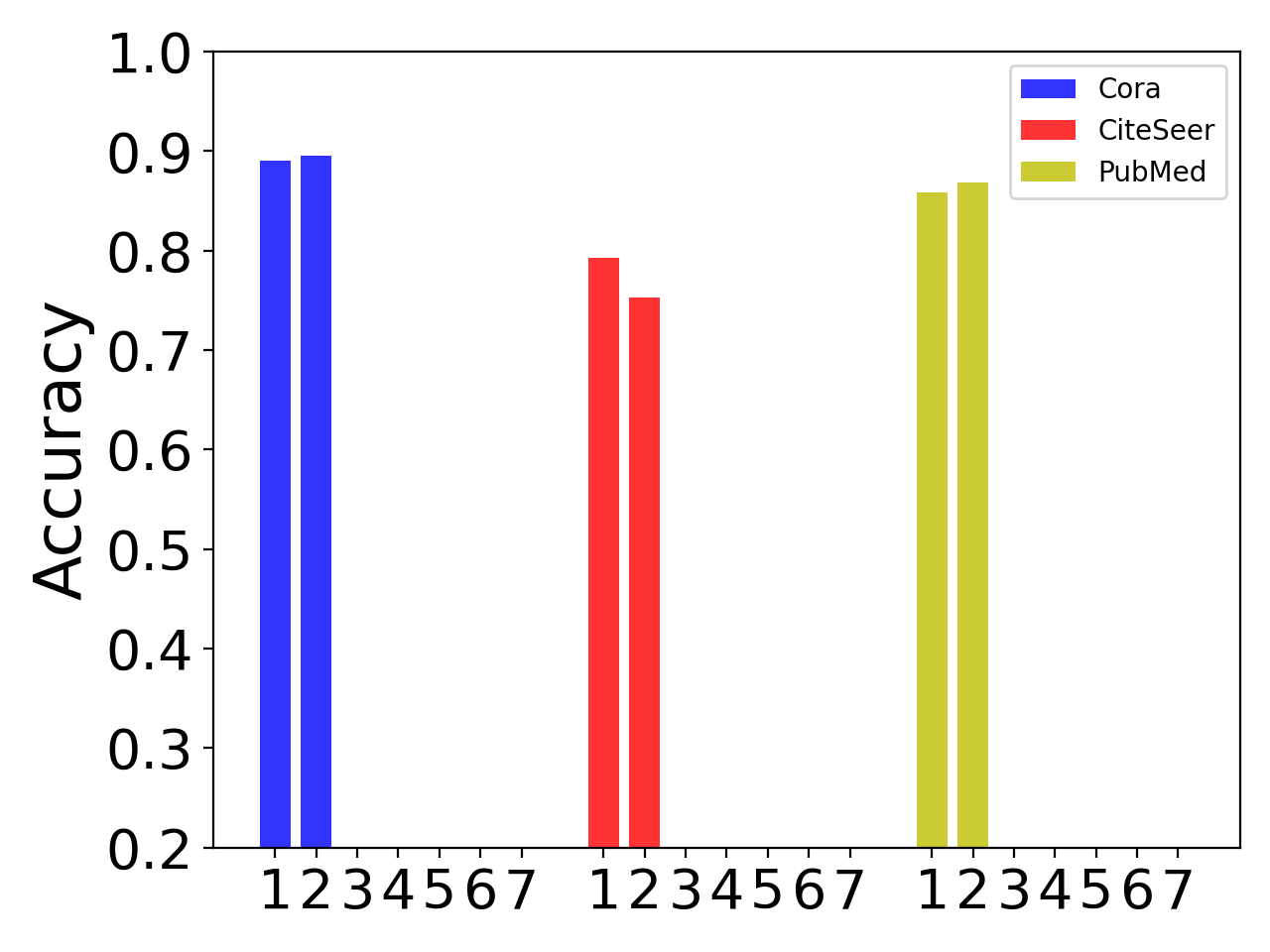} }}
    \hfill
    \subfigure[\centering Label Proximity Score]{{\includegraphics[width=0.3\linewidth]{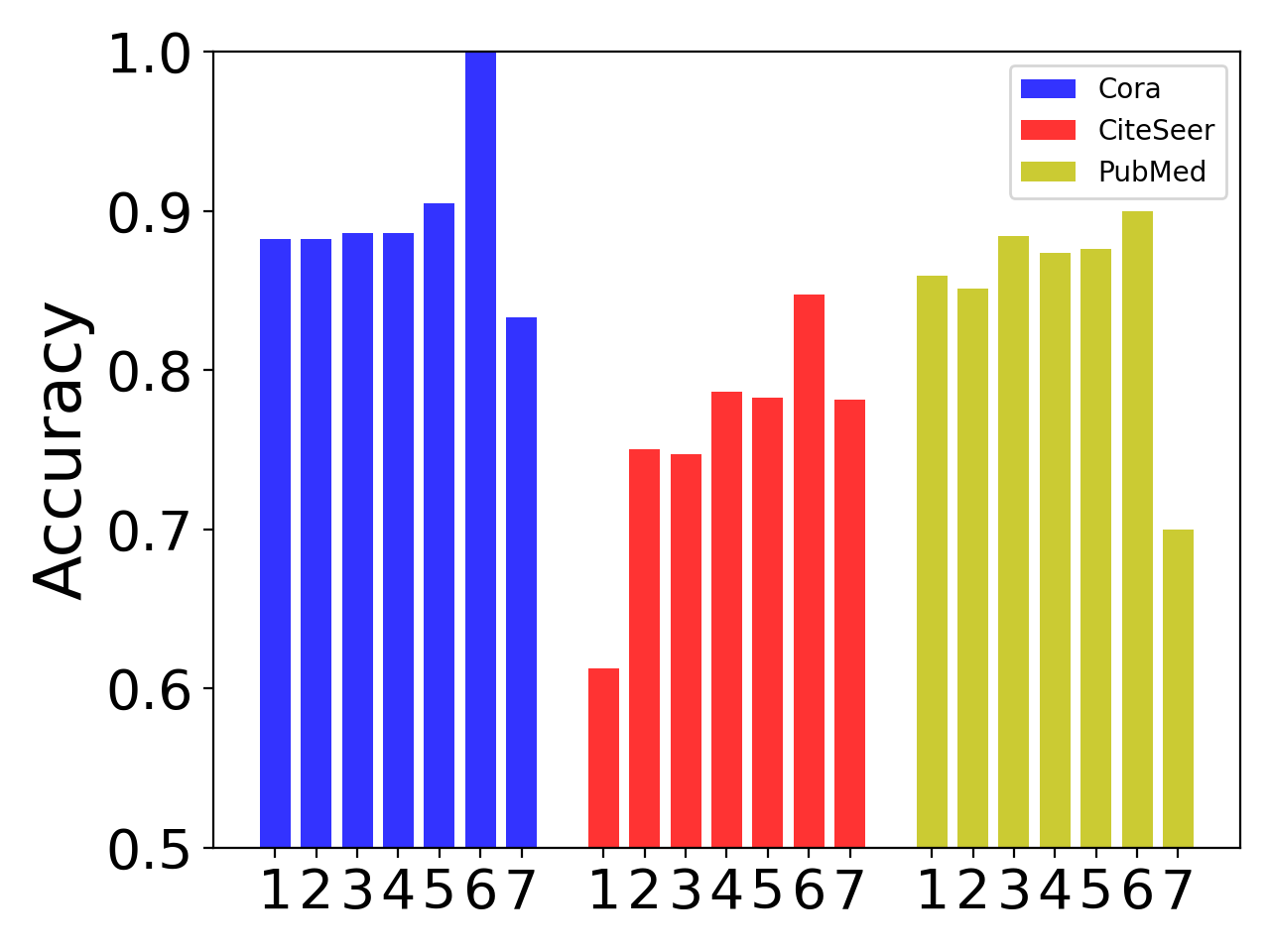} }}
    \vspace{-0.1in}
    \caption{GCN with 60\% labeled nodes per class.}
    \label{fig:GCN60_bias}
    % \vspace{-0.2in}
\end{figure}

\begin{figure}[!htb]
    \centering
    \subfigure[\centering Degree]{{\includegraphics[width=0.3\linewidth]{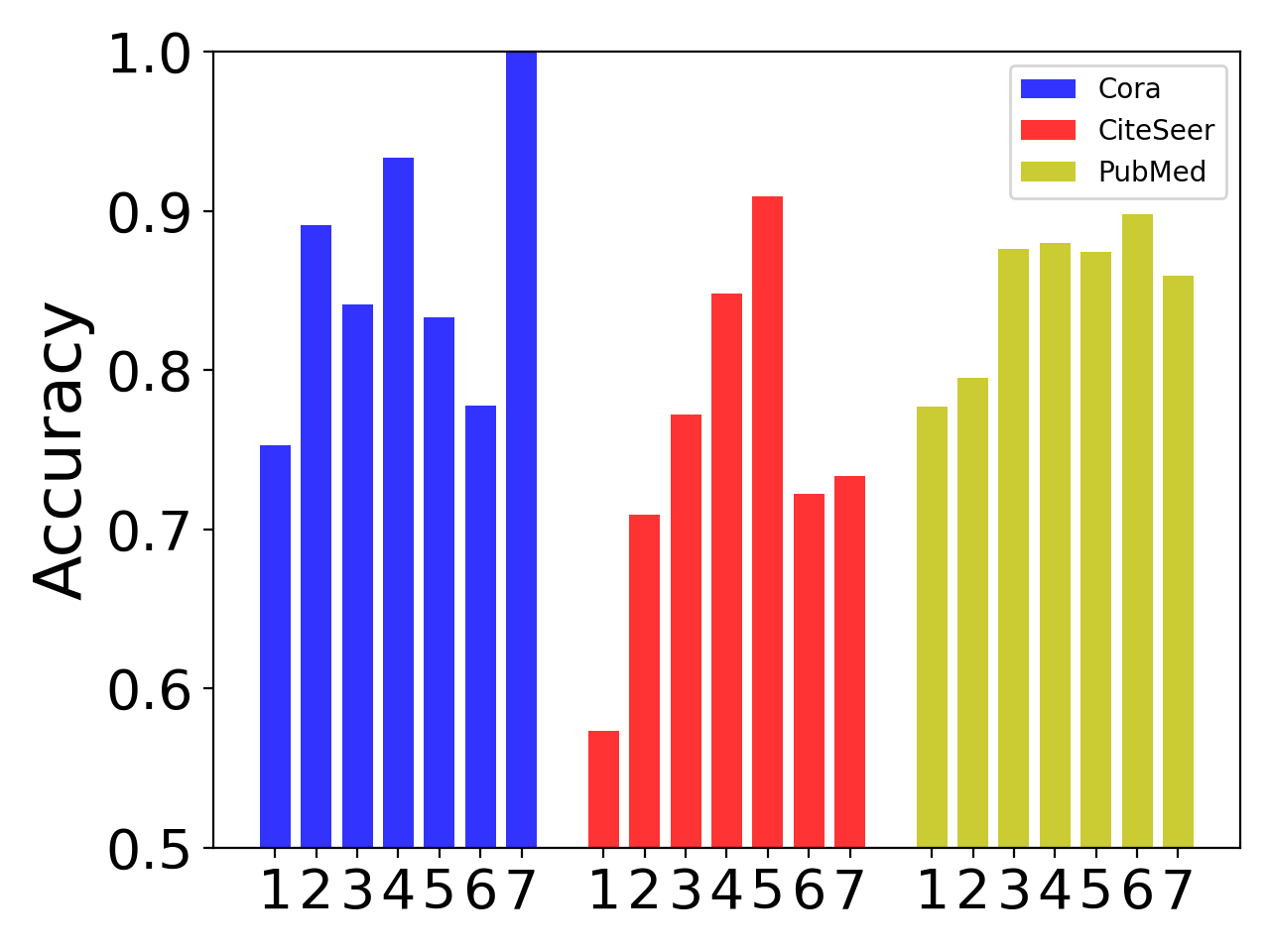}}}
    \hfill
    \subfigure[\centering Shortest Path Distance]{{\includegraphics[width=0.3\linewidth]{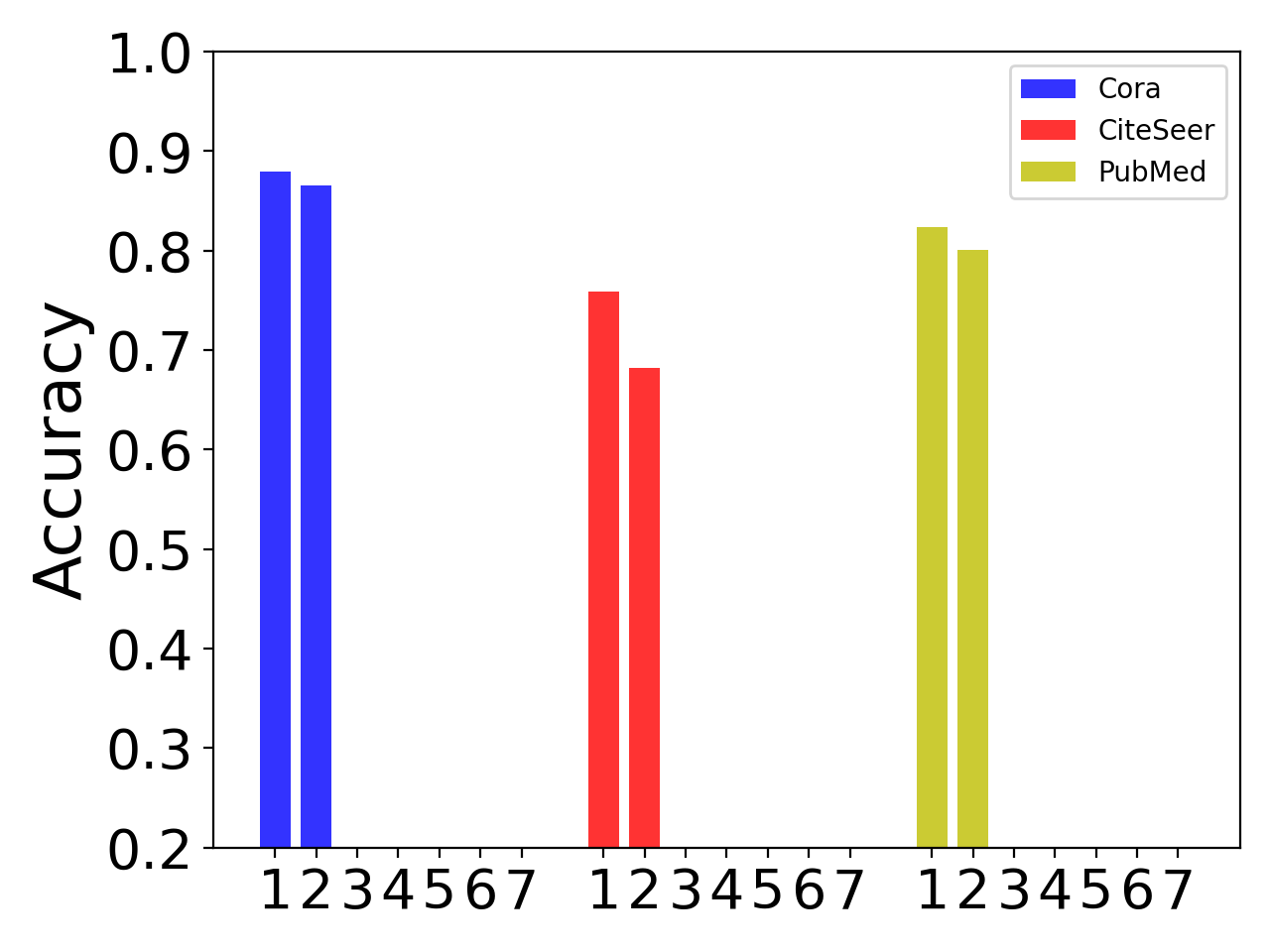} }}
    \hfill
    \subfigure[\centering Label Proximity Score]{{\includegraphics[width=0.3\linewidth]{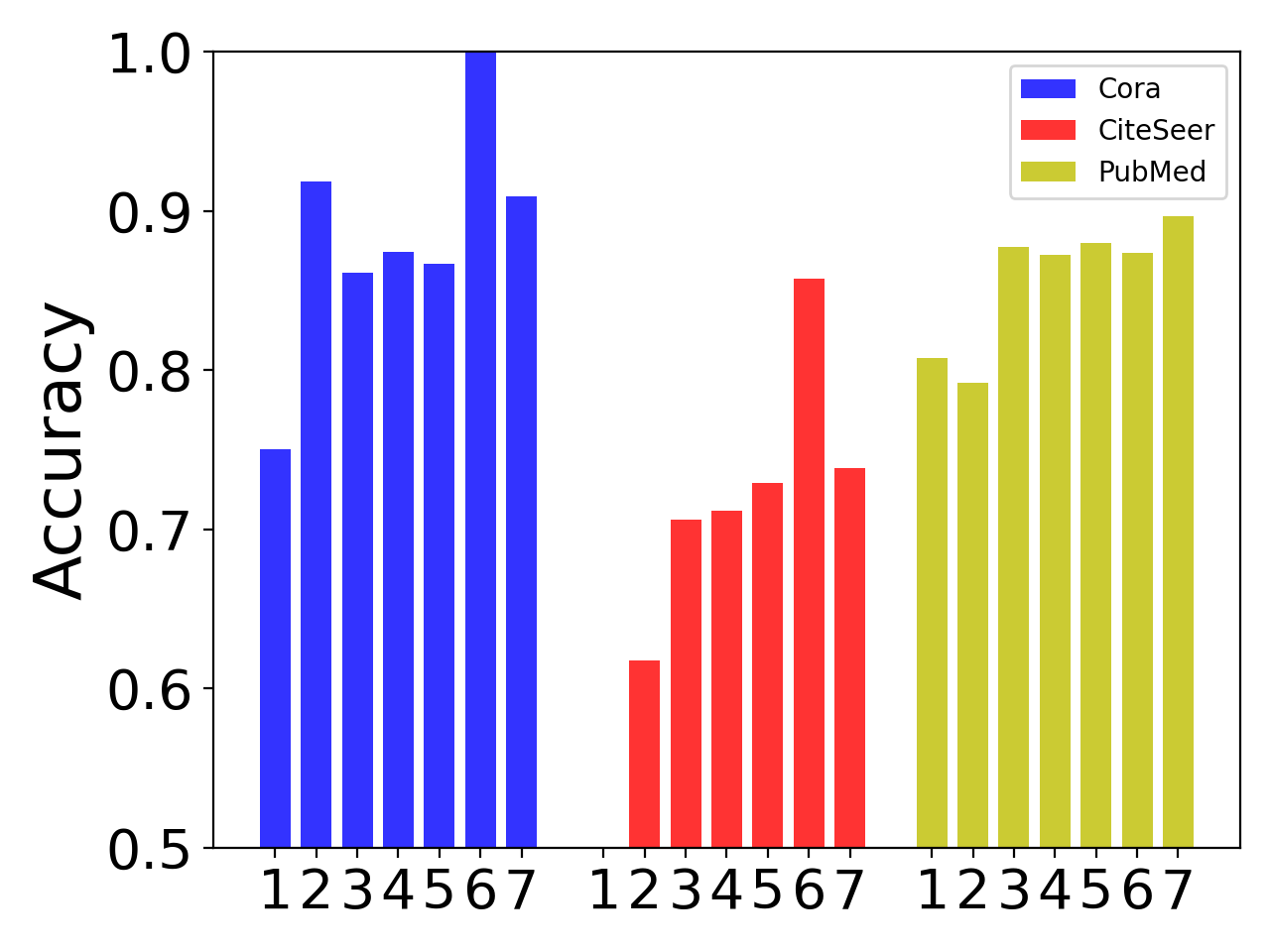} }}
    \vspace{-0.1in}
    \caption{LP with 60\% labeled nodes per class.}
    \label{fig:LP60_bias}
    % \vspace{-0.1in}
\end{figure}
\FloatBarrier
% From all the above results, we can have the following findings:
Our analysis of the results above leads us to the following key observations:

\begin{itemize}[leftmargin=0.3in]
\item Label Position Bias is a widespread phenomenon across all GNN models and datasets. Classification accuracy exhibits substantial variation between different sensitive groups, with discernible patterns.
\item When contrasted with Degree and Shortest Path Distance, the proposed Label Proximity Score consistently shows a robust correlation with performance disparity across all datasets and models. This underscores its efficacy as a measure of Label Position Bias.
\item The severity of Label Position Bias is more prominent when the labeling rate is low, such as with 5 or 20 labeled nodes per class. However, with a labeling rate of 60\% labeled nodes per class, the bias becomes less noticeable. This is evident from the fact that the shortest path distance is either 1 or 2 for all datasets, implying that all test nodes have at least one labeled node within their two-hop neighbors.
\end{itemize}

These findings highlight the importance of further exploring Label Position Bias and developing strategies to mitigate its impact on GNN performance.

\section{Understandings}

\textit{Remark 3.1} The feature aggregation using the learned graph structure $\vB$ directly as a propagation matrix, i.e., $\vF=\vB\vX$, is equivalent to applying the message passing in GNNs using the original graph
if $\vB$ is the approximate or exact solution to the primary problem defined in Eq.~\eqref{eq:optimization} without constraints.
\begin{proof}
There are several recent studies \citep{zhu2021interpreting, ma2021unified, yang2021attributes} unified the message passing in different GNNs under an optimization framework. For instance, Ma et al. \citep{ma2021unified} demonstrated that the message-passing scheme of GNNs, such as GCN \citep{kipf2016semi}, GAT \citep{velivckovic2017graph}, PPNP and APPNP \citep{gasteiger2018predict}, can be viewed as optimizing the following graph signal denoising problem:
\begin{align}
\label{eq:denosing}
  \argmin_{\vF\in \RR^{n\times d}} \mathcal{L} =  \|\vX  -\vF\|_F^2 + \lambda~\tr(\vF^\top \tL \vF),
\end{align}
where $\vX \in \RR^{n \times d}$ is the node features, $\vF$ is the optimal node representations after applying GNNs, and $\lambda$ are used to control the feature smoothness. The gradient of $\frac{\partial \mathcal{L}}{\partial \vF}$ can be represented as:
\begin{align*}
    % \label{eq:grad_denosing}
    \frac{\partial \mathcal{L}}{\partial \vF} = 2(\vF - \vX) + 2\lambda \tL\vF.
\end{align*}
Here, we provide two examples of using the Eq.~\eqref{eq:denosing} to derive APPNP and GCN. For APPNP, we can adopt multiple-step gradient descent to solve the Eq.~\eqref{eq:denosing}:
\begin{align*}
    \vF^{k} = \vF^{k-1} - \gamma \frac{\partial \mathcal{L}}{\partial \vF} 
    = (1-2\lambda-2\lambda\gamma)\vF^{k-1} + 2\lambda\gamma\tA\vF^{k-1} + 2\gamma\vX.
\end{align*}
If we set the stepsize $\gamma=\frac{1}{2(1+\lambda)}$, then we have the following update steps:
\begin{align*}
    \vF^{k} &= \frac{\lambda}{1+\lambda} \tA\vF^{k-1} + \frac{1}{1+\lambda} \vX \\
\end{align*}
which is the message passing scheme of APPNP.
Then, if we propagate $K$ layers, then 
\begin{align}
\label{eq:APPNP1}
\begin{aligned}
    \vF^{K} &= \frac{\lambda}{1+\lambda} \tA\vF^{K-1} + \frac{1}{1+\lambda} \vX \\
    &=\frac{\lambda}{1+\lambda} \tA(\frac{\lambda}{1+\lambda} \tA\vF^{K-2} + \frac{1}{1+\lambda} \vX) + \frac{1}{1+\lambda} \vX \\
    &=\biggl(\Bigl(\frac{\lambda}{1+\lambda}\Bigl)^K\tA^K + \sum_{i=0}^{K-1} \frac{1}{1+\lambda} \Bigl(\frac{\lambda}{1+\lambda}\Bigl)^i\tA^i\biggl)\vX.
\end{aligned}
\end{align}

For GCN, we can use one step gradient to solve the Eq.~\eqref{eq:denosing}:
\begin{align*}
    \vF = \vX - \left. \gamma \frac{\partial \mathcal{L}}{\partial \vF}\right|_{\mathbf{F}=\mathbf{X}}
    =(1-2\gamma\lambda)\vX + 2\gamma\lambda\tA\vX.
\end{align*}
If we set the step size $\gamma=\frac{1}{2\lambda}$, then the $\vF = \tA\vX$, which is the message passing of GCN.

The primary problem defined in Eq.~\eqref{eq:optimization} without constraints can be represented as:
\begin{align}
\label{eq:primary}
\begin{aligned}
    \argmin_{\vB} \mathcal{L} = \|\vI-\vB\|_F^2 + \lambda \tr(\vB^\top \tL \vB). 
\end{aligned}
\end{align}
Comparing Eq.~\eqref{eq:denosing} with Eq.~\eqref{eq:primary}, the only difference lies in the first term, where the feature matrix $\vX$ is set to be identity matrix $\vI$. Then, we can follow the same steps to solve the Eq.~\eqref{eq:primary}.

If we use the multiple-step gradient descent with
the stepsize $\gamma=\frac{1}{2(1+\lambda)}$, then we have the following update steps:
\begin{align*}
    \vB^{k} &= \frac{\lambda}{1+\lambda} \tA\vB^{k-1} + \frac{1}{1+\lambda} \vI. 
\end{align*}
Then, for $K$ steps iteration, $B^K$ will be:
\begin{align}
\begin{aligned}
    \vB^{K} =\biggl(\Bigl(\frac{\lambda}{1+\lambda}\Bigl)^K\tA^K + \sum_{i=0}^{K-1} \frac{1}{1+\lambda} \Bigl(\frac{\lambda}{1+\lambda}\Bigl)^i\tA^i\biggl),
\end{aligned}
\end{align}
which is the propagation matrix of APPNP in Eq.~\eqref{eq:APPNP1}. As a result, the message passing of APPNP can be written as $\vF = \vB\vX$.

If we use one-step gradient descent to solve Eq.~\eqref{eq:primary}, then $\vB$ can be represented as:
\begin{align*}
    \vB = \vI - \left. \gamma \frac{\partial \mathcal{L}}{\partial \vB}\right|_{\mathbf{B}=\mathbf{I}}
    =(1-2\gamma\lambda)\vI + 2\gamma\lambda\tA.
\end{align*}
If we set the step size $\gamma=\frac{1}{2\lambda}$, then the $\vB = \tA$. As a result, the aggregation in GCN can also be represented by $\vF = \vB\vX$.
\end{proof}

\begin{proposition}
% \label{pro:2}
 The influence scores from all labeled nodes to any unlabeled node $i$ will be the equal, i.e., $\sum_{j \in \cV_L}I_i(j) = c$, when using the unbiased graph structure $\vB$ obtained from the optimization problem in Eq.~\eqref{eq:optimization} as the propagation matrix in GNNs.
\end{proposition}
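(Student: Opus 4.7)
The plan is to combine Remark 3.1 with the standard influence analysis of Xu et al.\ \citep{xu2018representation}. The basic picture: using $\vB$ as the propagation matrix makes the dependence of $\vh_i$ on $\vx_j$ factor through the scalar entry $\vB_{ij}$, so summing the influences of all labeled nodes on node $i$ reduces exactly to the $i$-th entry of $\vB\vT\vone_n$, which by the constraint in Eq.~\eqref{eq:optimization} equals $c$.

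First, I would invoke Remark 3.1 to write the GNN output as a function of the input features through $\vB$. For a decoupled model such as $\method_{\text{APPNP}}$, the forward pass is $\vH = \vB\, f_\theta(\vX)$, where $f_\theta$ is applied row-wise (per-node). A direct application of the chain rule then gives $\tfrac{\partial \vh_i}{\partial \vx_j} = \vB_{ij}\, \tfrac{\partial f_\theta(\vx_j)}{\partial \vx_j}$, so
\begin{equation*}
    I_i(j) = \operatorname{sum}\Bigl[\tfrac{\partial \vh_i}{\partial \vx_j}\Bigr] = \vB_{ij}\cdot s_j,
\end{equation*}
where $s_j := \operatorname{sum}\bigl[\tfrac{\partial f_\theta(\vx_j)}{\partial \vx_j}\bigr]$ depends only on the node-level transformation at $j$, not on the graph structure. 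For a coupled model such as $\method_{\text{GCN}}$, I would follow the linearized-activation argument of Xu et al.\ to unroll the layers and show the Jacobian still factors as (a polynomial in $\vB$)$\cdot$(node-local weight/activation terms); by Remark 3.1, that polynomial is subsumed into $\vB$ itself, so we again obtain $I_i(j) = \vB_{ij}\cdot s_j$ with $s_j$ a node-local factor.

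Second, I would sum over the labeled set $\cV_L$ and rewrite the result in matrix form. Assuming (as in \citep{xu2018representation}) that $s_j$ is the same constant $s$ across nodes in expectation (e.g.\ identically initialized weights / i.i.d.\ activation patterns), this gives
\begin{equation*}
    \sum_{j \in \cV_L} I_i(j) = s\sum_{j \in \cV_L} \vB_{ij} = s\cdot (\vB\vT\vone_n)_i.
\end{equation*}
Applying the feasibility constraint $\vB\vT\vone_n = c\vone_n$ from Eq.~\eqref{eq:optimization}, the right-hand side equals $s\cdot c$, which is the same scalar for every unlabeled node $i$; absorbing $s$ into the constant $c$ yields the claimed identity $\sum_{j \in \cV_L} I_i(j) = c$.

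The main obstacle will be the second step: justifying that the node-local factor $s_j$ is uniform across $j$ so that it can be pulled out of the sum. For a purely linear GNN (or for the raw propagation operator without learnable weights), this is immediate because $\tfrac{\partial f_\theta(\vx_j)}{\partial \vx_j}$ is the same matrix for all $j$. For nonlinear GNNs with ReLU activations, I would follow the expectation argument of Xu et al., treating activation patterns as Bernoulli and weights as identically initialized, so that $\EE[s_j]$ does not depend on $j$; the proposition then holds in expectation. Everything else (the chain-rule factoring, the matrix rewriting via $\vT\vone_n$, and the application of the constraint) is essentially mechanical once this uniformity is granted.
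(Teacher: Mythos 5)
Your proposal is correct and follows essentially the same route as the paper: the influence $I_i(j)$ factors through the scalar entry $\vB_{ij}$, so summing over $\cV_L$ yields the $i$-th entry of $\vB\vT\vone_n$, which the feasibility constraint fixes at $c$ (up to a node-independent constant that both you and the paper absorb into $c$). The only difference is that the paper sidesteps your main worry about the uniformity of the node-local factor $s_j$ by proving the statement only for the bare propagation $\vH = \vB\vX$, where the Jacobian $\bigl[\partial \vh_i / \partial \vx_j\bigr]$ is exactly $\vB_{ij}\vI$ and the factor is trivially the same for every $j$, whereas you handle the general transformed case via the expectation argument of Xu et al.
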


\begin{proof}
Following the definition in~\citep{xu2018representation}, the influence of node $j$ on node $i$ can be represented by $I_i(j) = sum\left[\frac{\partial \vh_i}{\partial \vx_j}\right]$, where $\vh_i$ is the  representation of node $i$, $\vx_j$ is the input feature of node $j$, and $\left[\frac{\partial \vh_i}{\partial \vx_j}\right]$ represents the Jacobian matrix. 

If we use the unbiased graph $\vB$ as the propagation matrix, then $\vH = \vB\vX$. Thus, $\vh_{ij} = \sum_{k=0}^{n}\vB_{ik}\vx_{kj}$. The Jacobian matrix $\left[\frac{\partial \vh_i}{\partial \vx_j}\right]$ can be written as:
\begin{align}
    \left[\frac{\partial \vh_i}{\partial \vx_j}\right] = \Diag([\vB_{ij}, \vB_{ij}, \dots, \vB_{ij}]),
\end{align}
where $\Diag$ represents the diagonal matrix. As a result, $I_i(j) = sum\left[\frac{\partial \vh_i}{\partial \vx_j}\right] = n\vB_{ij}$.

Suppose the constraint $\vB\vT\vone_n = \frac{c}{n}$ is satisfied, then the influence scores from all labeled nodes to the unlabeled node $i$ can be represented as:
\begin{align}
    \sum_{j \in \cV_L}I_i(j) = \sum_{j \in \cV_L} n\vB_{ij} = n\vB\vT\vone_n = c.
\end{align}
Finally, the influence scores from all labeled nodes to any unlabeled node $i$ are equal.

\end{proof}

\section{Datasets Statistics}
In the experiments, the data statistics used in Section \ref{sec:exp} are summarized in Table~\ref{tab:dataset}. For Cora, CiteSeer and PubMed dataset, we adopt different label rates, i.e., 5, 10, 20, and 60\% labeled nodes per class, to get a more comprehensive comparison.  For label rates 5, 10, and 20, we use 500 nodes for validation and 1000 nodes for testing. For label rates of 60\% labeled node per class, we use half of the rest nodes for validation and the remaining half for the test. For each labeling rate and dataset, we adopt 10 random splits for each dataset. For the ogbn-arxiv dataset, we follow the original data split.

\begin{table}[htb]
\centering
\caption{Dataset Statistics.}
\label{tab:dataset}
\begin{tabular}{c|cccc}
\hline
Dataset       & Nodes     & Edges      & Features & Classes \\ \hline
Cora          & 2,708     & 5,278      & 1,433    & 7       \\
CiteSeer      & 3,327     & 4,552      & 3,703    & 6       \\
PubMed        & 19,717    & 44,324     & 500      & 3       \\
Coauthor CS            & 18,333    & 81,894     & 6,805    & 15      \\
Coauthor Physics       & 34,493    & 247,962    & 8,415    & 5       \\
Amazon Computer      & 13,381    & 245,778    & 767      & 10      \\
Amazon Photo         & 7,487     & 119,043    & 745      & 8       \\
Ogbn-Arxiv    & 169,343   & 1,166,243  & 128      & 40      \\ \hline
\end{tabular}
\end{table}

\section{Hyperparamters Setting}
In this section, we describe in detail the search space for parameters of different experiments.

For all deep models, we use 3 transformation layers with 256 hidden units for the ogbn-arxiv dataset and 2 transformation layers with 64 hidden units for other datasets. For all methods, the following common hyperparameters are tuned based on the loss and validation accuracy from the following search space: 
\begin{itemize}
    \item Learning Rate: \{0.01, 0.05\}
    \item Dropout Rate: \{0, 0.5, 0.8\}
    \item Weight Decay:  \{0, 5e-5, 5e-4\}
\end{itemize}
For APPNP and Label Propagation, we tune the teleport probability $\alpha$ in \{0.1, 0.2, 0.3, 0.4, 0.5, 0.6, 0.7, 0.8, 0.9\}. For GRADE\footnote{https://github.com/BUPT-GAMMA/Uncovering-the-Structural-Fairness-in-Graph-Contrastive-Learning/}, we set the hidden dimension 256, the temperature in \{0.2, 0.5, 0.8, 1, 1.1, 1.5, 1.7, 2\}, which covers all the best values in their original paper.
For SRGNN\footnote{https://github.com/GentleZhu/Shift-Robust-GNNs}, we set the weight of CMD loss in \{0.1, 0.5, 1, 1.5, 2\}.

For the proposed \method, we set the $c$ in the range [0.7, 1.3], $\rho$ in \{0.01, 0.001\}, $\gamma$ in \{0.01, 0.001\}, $\beta$ in \{1e-4, 1e-5, 5e-5, 1e-6, 5e-6, 1e-7\}. For the $\method_\text{APPNP}$, we set $\lambda$ in \{8, 9, 10, 11, 12, 13, 14, 15\}. For the $\method_\text{GCN}$, we set $\lambda$ in \{1, 2, 3, 4, 5, 6, 7, 8\}.

Our code is available at: https://anonymous.4open.science/r/LPSL-8187

\section{Node Classification Results}
For the semi-supervised node classification task, we choose nine common used datasets including three citation datasets, i.e., Cora, Citeseer and Pubmed \citep{sen2008collective}, two coauthors datasets, i.e., CS and Physics, two Amazon datasets, i.e., Computers and Photo \citep{shchur2018pitfalls}, and one OGB datasets, i.e., ogbn-arxiv \citep{hu2020open}.  

To the best of our knowledge, there are no previous works that aim to address the label position bias. In this work, we select three GNNs, namely, GCN~\citep{kipf2016semi}, GAT~\citep{velivckovic2017graph}, and APPNP~\citep{gasteiger2018predict}, two Non-GNNs, MLP and Label Propagation~\citep{zhou2003learning}, as baselines. Furthermore, we also include GRADE~\citep{wang2022uncovering}, a method designed to mitigate degree bias. Notably, SRGNN~\citep{zhu2021shift} demonstrates that if labeled nodes are gathered locally, it could lead to an issue of feature distribution shift. SRGNN aims to mitigate the feature distribution shift issue and is also included as a baseline. The overall performance are shown in Table~\ref{tab:all}.

% Please add the following required packages to your document preamble:
% \usepackage{multirow}
% \usepackage{graphicx}
\begin{table}[!htb]
\centering
\caption{The overall results of the node classification task.}
\label{tab:all}
\resizebox{\textwidth}{!}{%
\begin{tabular}{c|c|ccccccccc}
\hline
Dataset &
  Label Rate &
  MLP &
  LP &
  GCN &
  APPNP &
  GAT &
  GRADE &
  SRGNN &
  $\method_{\text{GCN}}$ &
  $\method_{\text{APPNP}}$ \\ \hline
\multirow{4}{*}{Cora} &
  5 &
  42.34 ± 3.31 &
  57.60 ± 5.71 &
  70.68 ± 2.17 &
  75.86 ± 2.34 &
  72.97 ± 2.23 &
  69.51 ± 6.79 &
  70.77 ± 1.82 &
  76.58 ± 2.37 &
  \textbf{77.24 ± 2.18} \\
 &
  10 &
  51.34 ± 3.37 &
  63.76 ± 3.60 &
  76.50 ± 1.42 &
  80.29 ± 1.00 &
  78.03 ± 1.17 &
  74.95 ± 2.46 &
  75.42 ± 1.57 &
  80.39 ± 1.17 &
  \textbf{81.59 ± 0.98} \\
 &
  20 &
  59.23 ± 2.52 &
  67.87 ± 1.43 &
  79.41 ± 1.30 &
  82.34 ± 0.67 &
  81.39 ± 1.41 &
  77.41 ± 1.49 &
  78.42 ± 1.75 &
  82.74 ± 1.01 &
  \textbf{83.24 ± 0.75} \\
 &
  60\% &
  76.49 ± 1.13 &
  86.05 ± 1.35 &
  88.60 ± 1.19 &
  88.49 ± 1.28 &
  88.68 ± 1.13 &
  86.84 ± 0.99 &
  87.17 ± 0.95 &
  \textbf{88.75 ± 1.21} &
  88.62 ± 1.69 \\ \hline
\multirow{4}{*}{CiteSeer} &
  5 &
  41.05 ± 2.84 &
  39.06 ± 3.53 &
  61.27 ± 3.85 &
  63.92 ± 3.39 &
  62.60 ± 3.34 &
  63.03 ± 3.61 &
  64.84 ± 3.41 &
  65.65 ± 2.47 &
  \textbf{65.70 ± 2.18} \\
 &
  10 &
  47.99 ± 2.71 &
  42.29 ± 3.26 &
  66.28 ± 2.14 &
  67.57 ± 2.05 &
  66.81 ± 2.10 &
  64.20 ± 3.23 &
  67.83 ± 2.19 &
  67.73 ± 2.57 &
  \textbf{68.76 ± 1.77} \\
 &
  20 &
  56.96 ± 1.80 &
  46.15 ± 2.31 &
  69.60 ± 1.67 &
  70.85 ± 1.45 &
  69.66 ± 1.47 &
  67.50 ± 1.76 &
  69.13 ± 1.99 &
  70.73 ± 1.32 &
  \textbf{71.25 ± 1.14} \\
 &
  60\% &
  73.15 ± 1.36 &
  69.39 ± 2.01 &
  76.88 ± 1.78 &
  77.42 ± 1.47 &
  76.70 ± 1.81 &
  74.00 ± 1.87 &
  74.57 ± 1.57 &
  77.18 ± 1.64 &
  \textbf{77.56 ± 1.44} \\ \hline
\multirow{4}{*}{PubMed} &
  5 &
  58.48 ± 4.06 &
  65.52 ± 6.42 &
  69.76 ± 6.46 &
  72.68 ± 5.68 &
  70.42 ± 5.36 &
  66.90 ± 6.49 &
  69.38 ± 6.48 &
  73.46 ± 4.64 &
  \textbf{73.57 ± 5.30} \\
 &
  10 &
  65.36 ± 2.08 &
  68.39 ± 4.88 &
  72.79 ± 3.58 &
  75.53 ± 3.85 &
  73.35 ± 3.83 &
  73.31 ± 3.75 &
  72.69 ± 3.49 &
  75.67 ± 4.42 &
  \textbf{76.18 ± 4.05} \\
 &
  20 &
  69.07 ± 2.10 &
  71.88 ± 1.72 &
  77.43 ± 2.66 &
  78.93 ± 2.11 &
  77.43 ± 2.66 &
  75.12 ± 2.37 &
  77.09 ± 1.68 &
  78.75 ± 2.45 &
  \textbf{79.26 ± 2.32} \\
 &
  60\% &
  86.14 ± 0.64 &
  83.38 ± 0.64 &
  \textbf{88.48 ± 0.46} &
  87.56 ± 0.52 &
  86.52 ± 0.56 &
  86.90 ± 0.46 &
  88.32 ± 0.55 &
  87.75 ± 0.57 &
  87.96 ± 0.57 \\ \hline
CS &
  20 &
  88.12 ± 0.78 &
  77.45 ± 1.80 &
  91.73 ± 0.49 &
  92.38 ± 0.38 &
  90.96 ± 0.46 &
  89.43 ± 0.67 &
  89.43 ± 0.67 &
  91.94 ± 0.54 &
  \textbf{92.44 ± 0.36} \\ \hline
Physics &
  20 &
  88.30 ± 1.59 &
  86.70 ± 1.03 &
  93.29 ± 0.80 &
  93.49 ± 0.67 &
  92.81 ± 1.03 &
  91.44 ± 1.41 &
  93.16 ± 0.64 &
  93.56 ± 0.51 &
  \textbf{93.65 ± 0.50} \\ \hline
Computers &
  20 &
  60.66 ± 2.98 &
  72.44 ± 2.87 &
  79.17 ± 1.92 &
  79.07 ± 2.34 &
  78.38 ± 2.27 &
  79.01 ± 2.36 &
  78.54 ± 2.15 &
  \textbf{80.05 ± 2.92} &
  79.58 ± 2.31 \\ \hline
Photo &
  20 &
  75.33 ± 1.91 &
  81.58 ± 4.69 &
  89.94 ± 1.22 &
  90.87 ± 1.14 &
  89.24 ± 1.42 &
  90.17 ± 0.93 &
  89.36 ± 1.02 &
  90.85 ± 1.16 &
  \textbf{90.93 ± 1.40} \\ \hline
ogbn-arxiv &
  54\% &
  61.17 ± 0.20 &
  74.08 ± 0.00 &
  71.91 ± 0.15 &
  71.61 ± 0.30 &
  OOM &
  OOM &
  68.01 ± 0.35 &
  \textbf{72.04 ± 0.12} &
  69.20 ± 0.26 \\ \hline
\end{tabular}%
}
\end{table}

\section{Ablation Study}
In this subsection, we explore the impact of different hyperparameters, specifically the smoothing term $\lambda$ and the constraint $c$, on our model. We conducted experiments on the Cora and CiteSeer datasets using ten random data splits with 20 labels per class. The accuracy of different $\lambda$ and $c$ values for $\method_\text{APPNP}$ and $\method_\text{GCN}$ on the Cora and CiteSeer datasets are illustrated in Figure~\ref{fig:lambda1} and Figure~\ref{fig:c1}, respectively. From the results, we can find both  $\method_\text{APPNP}$ and $\method_\text{GCN}$ are not very sensitive to $\lambda$ and $c$ at the chosen regions.

% \vspace{-0.2in}
\begin{figure}[!htb]
    \centering
    \label{fig:lambda1}
    \subfigure[\centering $\method_\text{APPNP}$]{{\includegraphics[width=0.48\linewidth]{figures/appnp-ablation.pdf}}}
    \hfill
    \subfigure[\centering $\method_\text{GCN}$]{{\includegraphics[width=0.48\linewidth]{figures/gcn-ablation.pdf} }}
    % \vspace{-0.3in}
    % \vspace{-0.1in}
    \caption{The accuracy of different $\lambda$ for $\method_\text{APPNP}$ and $\method_\text{GCN}$ on Cora and CiteSeer datasets.}
    % \vspace{-0.2in}
\end{figure}

% \vspace{-0.2in}
\begin{figure}[!htb]
    \centering
    \label{fig:c1}
    \subfigure[\centering $\method_\text{APPNP}$]{{\includegraphics[width=0.48\linewidth]{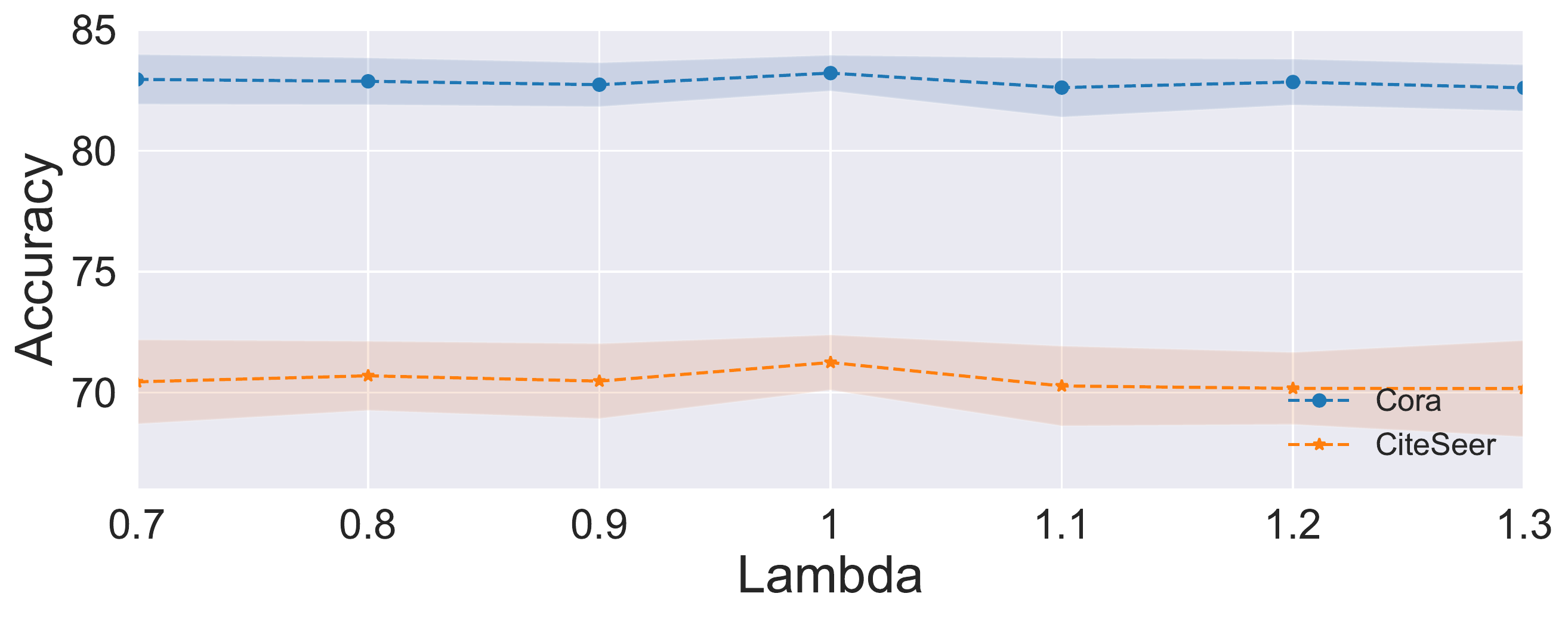}}}
    \hfill
    \subfigure[\centering $\method_\text{GCN}$]{{\includegraphics[width=0.48\linewidth]{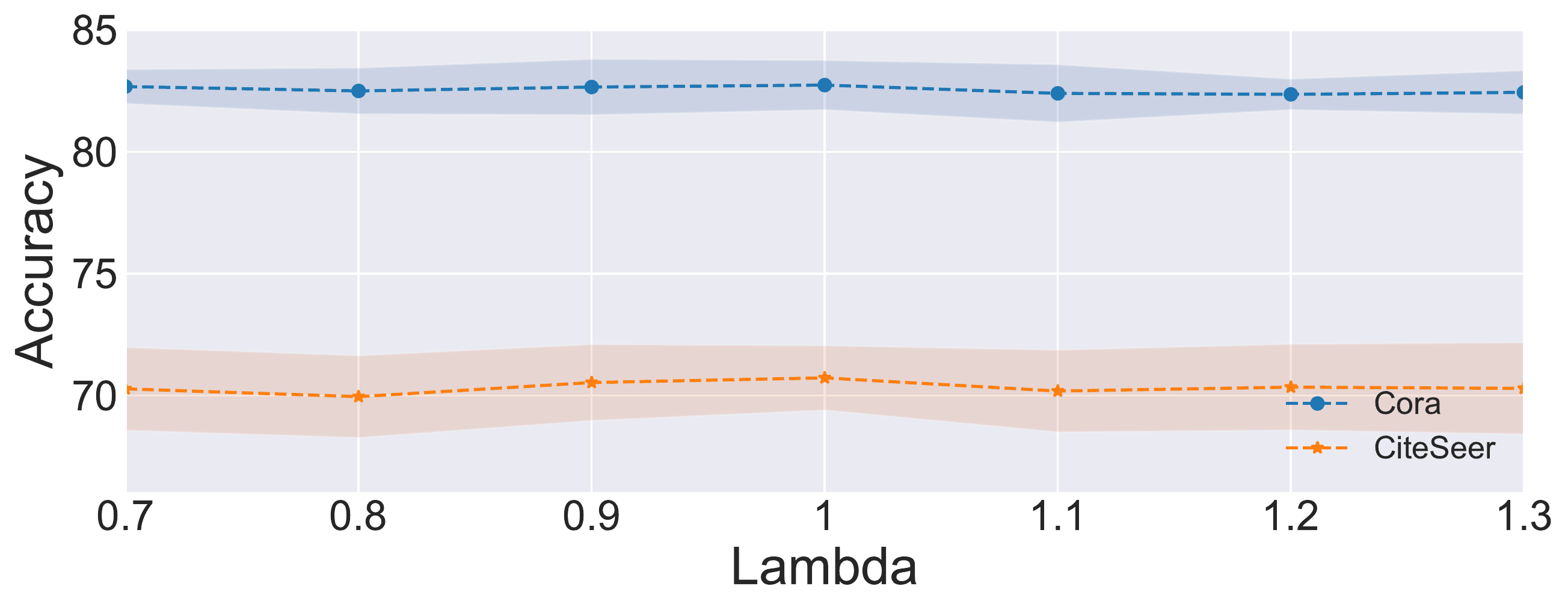} }}
    % \vspace{-0.3in}
    % \vspace{-0.1in}
    \caption{The accuracy of different $c$ for $\method_\text{APPNP}$ and $\method_\text{GCN}$ on Cora and CiteSeer datasets.}
    % \vspace{-0.2in}
\end{figure}

\end{document}